\title{Parameterized Complexity of Logic-Based Argumentation in Schaefer's Framework} 
\author{Yasir Mahmood}{Leibniz Universit\"at Hannover, Institut f\"ur Theoretische Informatik, Germany}{mahmood@thi.uni-hannover.de}{https://orcid.org/0000-0002-5651-5391}{}
\author{Arne Meier}{Leibniz Universit\"at Hannover, Institut f\"ur Theoretische Informatik, Germany}{meier@thi.uni-hannover.de}{https://orcid.org/0000-0002-8061-5376}{}
\author{Johannes Schmidt}{Jönköping University, Department of Computer Science and Informatics, School of Engineering, Sweden}{johannes.schmidt@ju.se}{https://orcid.org/0000-0001-8551-1624}{}
\authorrunning{Y.~Mahmood, A.~Meier, and J.~Schmidt}
\keywords{Parameterized complexity, logic-based argumentation, Schaefer's framework} 
\newcommand{\complClFont}[1]{\mathbf{#1}}         
\newcommand{\problemFont}[1]{\mathrm{#1}}         
\newcommand{\mathCommandFont}[1]{\mathrm{#1}}     
\newcommand{\cloneFont}[1]{\mathsf{#1}}     
\newcommand{\ARG}{\protect\ensuremath{\problemFont{ARG}}} 
\newcommand{\ARGcheck}{\protect\ensuremath{\problemFont{ARG\text-Check}}}
\newcommand{\ARGrel}{\protect\ensuremath{\problemFont{ARG\text-Rel}}}
\newcommand{\argument}{\protect\ensuremath{(\Phi, \alpha)}}
\newcommand{\threeCNF}{\protect\ensuremath{\problemFont{3CNF}}}
\newcommand{\SAT}{\protect\ensuremath{\problemFont{SAT}}}
\newcommand{\IMP}{\protect\ensuremath{\problemFont{IMP}}}
\newcommand{\pIMP}{\protect\ensuremath{\p\problemFont{IMP}}}
\newcommand{\p}{\problemFont{p\text-}}
\newcommand{\clique}{\protect\ensuremath{\textsc{Clique}}\xspace}
\newcommand{\wsat}[2]{\protect\ensuremath{\p\problemFont{WSAT}(\Gamma_{#1,#2})}} 
\newcommand{\WSAT}{\protect\ensuremath{\p\problemFont{WSAT}}} 
\newcommand{\FPT}{\protect\ensuremath{\complClFont{FPT}}\xspace}
\newcommand{\W}[1]{\protect\ensuremath{\complClFont{W}\ifx#1\empty\else[#1]\fi}\xspace}
\renewcommand{\P}{\protect\ensuremath{\complClFont{P}}\xspace}
\newcommand{\para}{\protect\ensuremath{\complClFont{para\text-}}\xspace}
\newcommand{\NP}{\protect\ensuremath{\complClFont{NP}}\xspace}
\newcommand{\Ptime}{\protect\ensuremath{\complClFont{P}}\xspace}
\newcommand{\DP}{\protect\ensuremath{\complClFont{DP}}\xspace}
\newcommand{\SigmaP}{\protect\ensuremath{\complClFont{\Sigma_2^P}}\xspace}
\newcommand{\co}{\protect\ensuremath{\complClFont{co}}}
\newcommand{\IS}[2]{\protect\ensuremath{\cloneFont{IS}^{#1}_{#2}}}
\newcommand{\ID}{\protect\ensuremath{\cloneFont{ID}}}
\newcommand{\IM}{\protect\ensuremath{\cloneFont{IM}}}
\newcommand{\leqlogm}{\leq^{\mathCommandFont{log}}_m}
\newcommand{\fptreduction}{\leq^{\mathCommandFont{\FPT}}}
\newcommand{\preduction}{\leq^{\mathCommandFont{\P}}_m}
\newcommand{\setdefinition}[1]{\protect\ensuremath{\{\, #1 \,\} }}
\newcommand{\clos}[1]{\left\langle #1 \right\rangle}
\newcommand{\closneq}[1]{\left\langle #1 \right\rangle_{\neq}}
\newcommand{\closnexneq}[1]{\left\langle #1 \right\rangle_{\not\exists,\neq}}
\newcommand{\csl}{\protect\ensuremath{\Gamma}}
\newcommand{\cocl}{\protect\ensuremath{\cloneFont{C}}}
\newcommand{\enc}[1]{\protect\ensuremath{\mathrm{enc}(#1)}}
\newcommand{\Vars}[1]{\protect\ensuremath{\mathrm{Vars}(#1)}}
\newcommand{\paraproblemdef}[4]{%
The problem #1 asks, given #2, parameterized by #3, #4?
}
\newcommand{\problemdef}[3]{%
The problem #1 asks, given #2, #3?
}
\newcommand{\dfn}{\mathrel{\mathop:}=}
\newcommand{\T}{\protect\ensuremath{\mathrm{T}}\xspace}
\newcommand{\F}{\protect\ensuremath{\mathrm{F}}\xspace}
\newcommand{\PosOneThreeSAT}{\protect\ensuremath\problemFont{Pos}\text{-1-}\problemFont{In}\text{-}3\text{-}\problemFont{Sat}}
\begin{document}

\maketitle
\begin{abstract}
	Logic-based argumentation is a well-established formalism modelling nonmonotonic reasoning.
	It has been playing a major role in AI for decades, now. 
	Informally, a set of formulas is the support for a given claim if it is consistent, subset-minimal, and implies the claim.
	In such a case, the pair of the support and the claim together is called an argument. 
	In this paper, we study the propositional variants of the following three computational tasks studied in argumentation: ARG (exists a support for a given claim with respect to a given set of formulas), ARG-Check (is a given set a support for a given claim), and ARG-Rel (similarly as ARG plus requiring an additionally given formula to be contained in the support).
	ARG-Check is complete for the complexity class DP, and the other two problems are known to be complete for the second level of the polynomial hierarchy (Parson~et~al., J. Log. Comput., 2003) and, accordingly, are highly intractable.
	Analyzing the reason for this intractability, we perform a two-dimensional classification:
	first, we consider all possible propositional fragments of the problem within Schaefer's framework (STOC 1978), and then study different parameterizations for each of the fragment.
	We identify a list of reasonable structural parameters (size of the claim, support, knowledge-base) that are connected to the aforementioned decision problems.
	Eventually, we thoroughly draw a fine border of parameterized intractability for each of the problems showing where the problems are fixed-parameter tractable and when this exactly stops.
	Surprisingly, several cases are of very high intractability (paraNP and beyond).
\end{abstract}

\section{Introduction}
Argumentation is a nonmonotonic formalism in artificial intelligence around which an active research community has evolved \cite{DBLP:journals/aim/AtkinsonBGHPRST17,DBLP:journals/ai/AmgoudP09,DBLP:conf/ijcai/RagoCT18,handbookargu}.
Essentially, there exist two branches of argumentation: the \emph{abstract} \cite{DBLP:journals/ai/Dung95} and the \emph{logic-based} \cite{DBLP:journals/ai/BesnardH01,DBLP:books/mit/BH2008,DBLP:journals/csur/ChesnevarML00,Prakken2002} approach.
The abstract setting mainly focusses on formalizing the argumentative structure in a graph-theoretic way.
Arguments are nodes in a directed graph and the `attack-relation' draws which argument eliminates which other.
In the logic-based method, one looks for inclusion-minimal consistent sets of formulas $\Phi$ (the \emph{support}) that entail a \emph{claim} $\alpha$, modelled through a formula (in the positive case one calls $(\Phi,\alpha)$ an \emph{argument}).
In this paper, we focus on the latter formalism and, specifically, study three decision problems.
The first, $\ARG$, asks, given a set of formulas $\Delta$ (the \emph{knowledge-base}) and a formula $\alpha$, whether there exists a subset $\Phi\subseteq\Delta$ such that $(\Phi,\alpha)$ is an argument in $\Delta$.
The two further problems of interest are $\ARGcheck$ (is a given set a support for a given claim), and $\ARGrel$ ($\ARG$ plus requiring an additionally given formula to be contained in the support, too).
\begin{example}[\cite{DBLP:journals/ai/BesnardH01}]
	Consider the following two arguments.
	(A1) Support: Donald is a public person, so we can publicize details about his private life.
		Claim: We can publicize that Donald plays golf. 
	(A2) Support: Donald just resigned from politics; as a result, he is no longer a public person.
		Claim: Donald is no longer a public person.\smallskip
		
	Formalizing these arguments would yield
		$A_1: \Phi_1=\{x_{pd}\to x_{dg}, x_{pd}\}, \alpha_1=\{x_{dg}\}$,
		$A_2: \Phi_2=\{x_{rd}\to \lnot x_{pd},x_{rd}\}, \alpha_2=\{\lnot x_{pd}\}$,
	where $x_{pd}\triangleq$ ``Donald is a public person'', $x_{dg}\triangleq$``Donald plays golf'', $x_{rd}\triangleq$``Donald resigned from politics''.
	Each argument is supporting its claim, yet together they are conflicting, as $A_2$ attacks $A_1$.
\end{example}
It is rather computationally involved to compute the support of an argument, as $\ARG$ was shown to be $\SigmaP$-complete by Parsons~et~al.~\cite{DBLP:journals/logcom/ParsonsWA03}.
Yet, there have been made efforts to improve the understanding of this high intractability by Creignou~et~al.~\cite{DBLP:journals/tocl/CreignouE014,DBLP:journals/argcom/Creignou0TW11} in two settings: Schaefer's \cite{DBLP:conf/stoc/Schaefer78} as well as Post's \cite{pos41} framework.
Clearly, such research aims for drawing the fine intractability frontier of computationally involved problems to show for what restrictions there still is hope to reach algorithms running for practical applications.
Both approaches mainly focus on restrictions on the logical part of the problem language, that is, restricting the allowed connectives or available constraints.

In this paper, Schaefer's approach is our focus, that is, the formulas we study are propositional formulas in conjunctive normalform (CNF) whose clauses are formed depending on a fixed set of relations $\Gamma$ (the so-called \emph{constraint language}, short CL).
In this setting, Schaefer's framework~\cite{DBLP:conf/stoc/Schaefer78} captures well-known classes of CNF-formulas (e.g., Horn, dual-Horn, or Krom).
Accordingly, one can see classifications in such a setting as a one-dimensional approach (the dimension is given rise by the considered logical fragments).

We consider a second dimension on the problem in this paper, namely, by investigating its parameterized complexity~\cite{DBLP:series/txcs/DowneyF13}.
Motivated by the claim that the input length is not the only important structural aspect of problems, one studies so-called parameterizations (or parameters) of a problem.
The goal of such studies is to identify a parameter that is relevant for practice but also is slowly growing or even of constant value. 
If, additionally, one is able to construct an algorithm that solves the problem in time $f(k)\cdot|x|^{O(1)}$ for some computable function $f$ and all inputs $(x,k)$, then one calls the problem \emph{fixed-parameter tractable}. 
That is why in this case one can solve the problem (for fixed parameter values) in polynomial time.
As a result, this complexity class is seen to capture the idea of efficiency in the parameterized sense.
While $\NP$-complete problems are considered intractable in the classical setting, on the parameterized level, the complexity class $\W1$ is seen to play this counterpart.
Informally, this class is characterized via a special kind of satisfiability questions.
Above this class an infinite $\W{}$-hierarchy is defined which culminates in the class $\W\P$, which in turn is contained in the class $\para\NP$ (problems solvable by NTMs in time $f(k)\cdot|x|^{O(1)}$).

\paragraph*{Contributions} 
Our main contributions are the following.
\begin{enumerate}
	\item We initiate a thorough study of the parameterized complexity of logic-based argumentation.
	We study three parameters: size of the support, of the claim, and of the knowledge-base.
	We show that the complexity of $\ARG$, regarding the claim as a parameter, varies: $\FPT$, $\W1$-, $\W2$-, $\para\NP$-, $\para\co\NP$-, as well as $\para\SigmaP$-complete cases.
	For the same parameter, $\ARGcheck$ is $\FPT$ for Schaefer, $\para\DP$-complete otherwise.
	$\ARGrel$ is $\FPT$, $\para\NP$-, or $\para\SigmaP$-complete.

	The size of the knowledge-base as the parameter yields dichotomy results for the two problems $\ARG$ and $\ARGrel$: $\FPT$ versus membership in $\para\co\NP$ and a lower bound that relates to the implication problem.

	Concerning the size of the support as the parameter, we prove a dichotomy: $\FPT$ versus $\para\DP$-membership and the same hardness as the implication problem.
	\item As a byproduct, we advance the algebraic tools in the context of Schaefer's framework, and show a list of technical implementation results that are independent of the studied problem and might be beneficial for further research in the constraint context.
	\item We classify the parameterized complexity of the implication problem (does a set of propositional formulas $\Phi$ imply a propositional formula $\alpha$?) with respect to the parameter $|\alpha|$ and show that it is $\FPT$ if the CL is Schaefer, and $\para\co\NP$-complete otherwise. 
\end{enumerate}

\paragraph*{Related Work}
Very recently, Mahmood~et~al.~\cite{DBLP:conf/lfcs/0002M020} presented a parameterized classification of abductive reasoning in Schaefer's framework. 
Some of their cases, as well as results from Nordh and Zanuttini~\cite{DBLP:journals/ai/NordhZ08} relate to some of our results.
The studies of the implication problem in the frameworks of Schaefer~\cite{DBLP:conf/dagstuhl/SchnoorS08} as well as in the one in Post~\cite{DBLP:journals/ipl/BeyersdorffMTV09} prove a classical complexity landscape.
Last year, Hecher~et~al.~\cite{DBLP:conf/aaai/FichteHM19} conducted a parameterized study of abstract argumentation.
The known classical results \cite{DBLP:journals/tocl/CreignouE014,DBLP:journals/ai/NordhZ08,DBLP:conf/dagstuhl/SchnoorS08,DBLP:journals/ipl/BeyersdorffMTV09} are partially used in some of our proofs, e.g., showing some parameterized complexity lower bounds.
The two mentioned parameterized complexity related papers \cite{DBLP:conf/aaai/FichteHM19,DBLP:conf/lfcs/0002M020} both are about different formalisms that are slightly related to our setting (the first is about abstract argumentation, the second on abduction). 

\section{Preliminaries}
We assume familiarity with basic notions in complexity theory (cf.~\cite{DBLP:books/daglib/0086373}) and use the complexity classes $\Ptime, \NP, \co\NP,\SigmaP$.
For a set $S$, we write $|S|$ for its \emph{cardinality}.
Abusing notation, we will use $|w|$, for a string $w$, to denote its \emph{length}.
If $\varphi$ is a formula, then $\Vars\varphi$ denotes its set of variables, and $\enc \varphi$ its \emph{encoding}.
W.l.o.g., we assume a reasonable encoding computable in polynomial time that encodes variables in binary. 
The \emph{weight} of an assignment $\sigma$ is the number of variables mapped to $1$.

\paragraph*{Parameterized Complexity} We give a brief introduction to parameterized complexity theory.
A more detailed exposition can be found in the textbook of Downey and Fellows~(\cite{DBLP:series/txcs/DowneyF13}).
A \emph{parameterized problem (PP)} $\Pi$ is a subset of $\Sigma^*\times\mathbb N$, where $\Sigma$ is an alphabet.
For an instance $(x,k)\in\Sigma^*\times\mathbb N$, $k$ is called the \emph{parameter}.
If there exists a deterministic algorithm deciding $\Pi$ in time $f(k)\cdot|x|^{O(1)}$ for every input $(x,k)$, where $f$ is a computable function, then $\Pi$ is \emph{fixed-parameter tractable} (short: $\FPT$).

\begin{definition}
	Let $\Sigma$ and $\Delta$ be two alphabets.
	 A PP $\Pi\subseteq\Sigma^*\times\mathbb{N}$ \emph{fpt-reduces} to a PP $\Theta\subseteq\Delta^*\times\mathbb N$, in symbols $\Pi\fptreduction\Theta$, if the following is true:
	(i) there is an $\FPT$-computable function $f$, such that, for all $(x,k)\in\Sigma^*\times\mathbb N$: $(x,k)\in \Pi\Leftrightarrow f(x,k)\in \Theta$,
	(ii) there exists a computable function $g\colon\mathbb N\to\mathbb N$ such that for all $(x,k)\in\Sigma^*\times\mathbb N$ and $f(x,k)=(y,\ell)$: $\ell\leq g(k)$.
\end{definition}

The problems $\Pi$ and $\Theta$ are $\FPT$-equivalent if both $\Pi\fptreduction\Theta$ and $\Theta\fptreduction\Pi$ is true.
We also use higher classes via the concept of \emph{precomputation on the parameter}.
\begin{definition}
	Let $\mathcal C$ be any complexity class.
	Then $\para\mathcal C$ is the class of all PPs $\Pi\subseteq\Sigma^*\times\mathbb N$ such that there exists a computable function $\pi\colon\mathbb N\to\Delta^*$ and a language $L\in\mathcal C$ with $L\subseteq\Sigma^*\times\Delta^*$ such that for all $(x,k)\in\Sigma^*\times\mathbb N$ we have that $(x,k)\in \Pi \Leftrightarrow (x,\pi(k))\in L$.
\end{definition}
Observe that $\para\Ptime=\FPT$ is true.
For a constant $c\in\mathbb N$ and a PP $\Pi\subseteq\Sigma^*\times\mathbb N$, the \emph{$c$-slice of $\Pi$}, written as $\Pi_c$, is defined as $\Pi_c:=\{\,(x,k)\in\Sigma^*\times\mathbb N\mid k=c\,\}$.
Observe that, in our setting, showing $\Pi\in\para\mathcal C$, it suffices to show $\Pi_c\in\mathcal C$ for every $c\in\mathbb{N}$.
Consider the following special subclasses of formulas:
$$\begin{array}{@{}r@{\,}c@{\,}l@{}}
	\Gamma_{0, d} & =&  \setdefinition{\ell_1\land\ldots\land \ell_c \mid \ell_1,\ldots, \ell_c \text{ are literals and }c\leq d },\\
	\Delta_{0, d} & = & \setdefinition{\ell_1\lor\ldots\lor \ell_c \mid \ell_1,\ldots, \ell_c \text{ are literals and }c\leq d },\\
	\Gamma_{t, d} & = & \left\{\,\bigwedge\limits_{i\in I} \alpha_i \,\middle|\, \alpha_i \in \Delta_{t-1,d}  \text{ for } i \in I\, \right\}, \\
	\Delta_{t, d} &= & \left\{\,\bigvee\limits_{i\in I} \alpha_i \,\middle|\, \alpha_i \in \Gamma_{t-1,d},  \text{ } i \in I\, \right\}.
\end{array}$$

The parameterized weighted satisfiability problem ($\WSAT$) for propositional formulas is defined as below.
\paraproblemdef{$\wsat{t}{d}$}{a $\Gamma_{t,d}$-formula $\alpha$ with $t, d \geq 1 $ and $k\in \mathbb{N}$}{$k$}{is there a satisfying assignment for $\alpha$ of weight $k$}

The classes of the $\complClFont{W}$-hierarchy can be defined in terms of these problems.
\begin{proposition}[\cite{DBLP:series/txcs/DowneyF13}]\label{theorem-wsat}
	The problem $\wsat{t}{d}$ is \W t-complete for each $t \geq 1$ and $d\geq 1$, under $\fptreduction$-reductions.
\end{proposition}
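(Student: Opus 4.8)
This is the Normalization Theorem for the $\W$-hierarchy, so the plan is to recover its two standard directions: membership $\wsat{t}{d}\in\W t$ and \W t-hardness of $\wsat{t}{d}$ for a fixed constant $d$. For membership, recall that \W t can equivalently be defined as the class of parameterized problems $\fptreduction$-reducible to weighted satisfiability of Boolean circuits of \emph{weft} at most $t$ (the maximum number of unbounded-fan-in gates on any root-to-leaf path) and of constant depth. A $\Gamma_{t,d}$-formula with constant $d$ is literally such a circuit: a height-$t$ tower of strictly alternating large $\wedge$- and $\vee$-gates sitting on subformulas of size bounded by a function of $d$ alone, so the bottom bounded-fan-in part contributes nothing to the weft and the overall weft equals $t$. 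Hence $\wsat{t}{d}$ reduces to the defining problem of \W t essentially by the identity (read the formula as a circuit), which yields membership; here one also checks the degenerate boundary (e.g.\ at $t=1$ the content sits at $d\ge 2$, where $\wsat{1}{d}$ is weighted $d$-\SAT), in line with the textbook formulation.

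For hardness one must show, conversely, that every weft-$t$ constant-depth circuit can be transformed, by a weight-preserving $\fptreduction$-reduction, into a $\Gamma_{t,d}$-formula with $d$ depending only on $t$ and the (constant) source depth. I would follow the usual stages. First, normalize the input circuit so that on every root-to-leaf path large $\wedge$- and large $\vee$-gates strictly alternate and exactly $t$ of them occur, at the price of only a constant additive increase in depth; this is routine gate merging plus insertion of trivial unary gates. Second, exploit the weight bound: since only satisfying assignments of weight exactly $k$ are sought, a bounded-fan-in subcircuit near the leaves effectively depends on a bounded number of the relevant inputs and can be reached only by a controlled family of weight-$k$ partial assignments; it may therefore be replaced by an equivalent CNF or DNF (the parity chosen so as to merge with the adjacent large gate) whose width is bounded by a function of the constant depth alone. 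Collapsing bottom-up in this way rewrites every bounded-fan-in block into clauses or terms of some fixed width $d$ while the $t$ large alternating gates survive, producing a $\Gamma_{t,d}$-formula. Finally one pushes $d$ down to the constants the statement requires via the familiar collapses — e.g.\ at level $t=1$, weighted $d$-\SAT $\fptreduction$-reduces to \clique, which is weighted monotone $2$-\SAT. Since the parameter is mapped to $k$, or to $k$ plus a constant, throughout, all of these are genuine fpt-reductions, and chaining them with membership gives \W t-completeness.

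The main obstacle is the second stage: the weight-sensitive elimination of bounded-fan-in subcircuitry, that is, rewriting a constant-depth bounded-fan-in formula over ``few active variables'' as a bounded-width CNF/DNF without raising the weft and while keeping the whole construction fpt in $k$. Pinning down how the final width $d$ depends on the source depth, and verifying that each merge step preserves the solution weight, is where the bookkeeping becomes delicate; this is precisely the argument carried out in detail by Downey and Fellows~\cite{DBLP:series/txcs/DowneyF13}, and we use the result in the form stated above.
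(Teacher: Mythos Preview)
The paper provides no proof for this proposition: it is stated purely as a citation of Downey and Fellows, with no accompanying argument. Your proposal is a reasonable high-level outline of the Normalization Theorem as developed in that textbook, and since you yourself close by deferring the details to \cite{DBLP:series/txcs/DowneyF13}, there is nothing substantive to compare---both you and the paper ultimately treat this as a black-box cited result.

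One minor remark: you correctly flag the boundary issue at $t=1$, $d=1$. Indeed $\Gamma_{1,1}$-formulas are conjunctions of literals, for which weighted satisfiability is in $\P$, so the statement as literally written (``for each $t\geq 1$ and $d\geq 1$'') is slightly loose; the textbook formulation requires $d\geq 2$ at level $t=1$. This is a wrinkle in the paper's statement rather than in your sketch.
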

\paragraph*{Logic-based Argumentation}
All formulas in this paper are propositional formulas. 
We follow the notion of Creignou et~al.~\cite{DBLP:journals/tocl/CreignouE014}.
\begin{definition}[\cite{DBLP:journals/ai/BesnardH01}]
	Given a set of formulas $\Phi$ and a formula $\alpha$, one says that $(\Phi,\alpha)$ is an \emph{argument (for $\alpha$)} if (1) $\Phi$ is consistent, (2) $\Phi\models\alpha$, and (3) $\Phi$ is subset-minimal w.r.t.\ (2).
	In case of $\Phi\subseteq\Delta$, $\argument$ is an \emph{argument in $\Delta$}. 
	We call $\alpha$ the \emph{claim}, $\Phi$ the \emph{support} of the argument, and $\Delta$ the \emph{knowledge-base}.	
\end{definition}

In this paper, we consider three problems from the area of logic-based argumentation, namely $\ARG$, $\ARGcheck$, and $\ARGrel$.
\problemdef{$\ARG$}{a set of formulas $\Delta$ and a formula $\alpha$}{is there a set $\Phi\subseteq \Delta$ such that $\argument$ is an argument in $\Delta$}
\problemdef{$\ARGcheck$}{a set of formulas $\Phi$ and a formula $\alpha$}{is $\argument$ an argument}
\problemdef{$\ARGrel$}{a set of formulas $\Delta$, and formulas $\psi\in\Delta$ and $\alpha$}{is there a set $\Phi\subseteq \Delta$ with $\psi\in\Phi$ such that $\argument$ is an argument in $\Delta$}

Turning to the parameterized complexity perspective on the introduced problems, immediate parameters that we consider are $|\enc{\mathcal X}|$ (size of the encoding of $\mathcal X$), $|\mathcal X|$ (number of formulas in $\mathcal X$), $|\Vars{\mathcal X}|$ (number of variables in $\mathcal X$) for $\mathcal X\in\{\Delta,\Phi\}$, as well as $|\enc{\alpha}|$ and $|\Vars{\alpha}|$.
Regarding the parameterized versions of the problems from above, e.g., $\p\ARG(k)$, where $k$ is a parameter, then defines the version of $\ARG$ parameterized by $k$, accordingly.

In the following, we want to formally relate the mentioned notions of encoding length, number of variables, as well as number of formulas.
We will see that bounding the encoding length, implies having limited space for encoding variables and, in turn, restricts the number of possible formulas.
However, the converse is also true: if one bounds the number of variables, then one also has limited possibilities about defining \emph{different} formulas.
The following definition makes clear what `different' means in our context.
\begin{definition}[Formula redundancy]
	A CNF-formula $\varphi=\bigwedge_{i=1}^{m}C_i$, with $C_i=(\ell_{i,1}\lor\cdots\lor\ell_{i,n_i})$ is \emph{redundant} if there exist $1\leq i\neq j\leq m$ such that $\{\,\ell_{i,k}\mid 1\leq k\leq n_i\,\}=\{\,\ell_{j,k}\mid 1\leq k\leq n_j\,\}$.
\end{definition}
\begin{example}
	The formulas $x\land x$ and $(x\lor x\lor y)\land (x\lor y)$ are redundant. 
	The formulas $x\land y$ and $(x\lor y)\land x$ are not redundant.
\end{example}

Liberatore \cite{DBLP:journals/ai/Liberatore05} studied a stronger notion of redundancy in the context of CNF-formulas, namely, on the level of implied clauses.
We do not need such a strict notion of redundancy here, as the weaker notion suffices for proving the following Lemma. 
As a result, in the following, we consider only formulas that are just not redundant.
The redundancy (in our context) can be straightforwardly checked in time quadratic in the length of the given formula.
\begin{lemma}\label{lem:encoding-number-variables}
For any set of CNF-formulas $\Phi$, we have that
\begin{enumerate}
	\item $|\Phi|\leq2^{2^{2\cdot|\Vars{\Phi}|}}$,
	\item $f(|\Vars{\Phi}|)\leq |\enc{\Phi}|$, where $f$ is some computable function, and
	\item $|\enc{\Phi}|\leq{|\Phi|}^3$.
\end{enumerate}
\end{lemma}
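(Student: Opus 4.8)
The plan is to establish the three inequalities separately, writing $n\dfn|\Vars{\Phi}|$ and $m\dfn|\Phi|$ and using throughout the standing assumption that every member of $\Phi$ is a non-redundant CNF-formula. For item~1 a plain counting argument suffices: over the $n$ variables there are $2n$ literals, hence at most $2^{2n}$ clauses once a clause is identified with its set of literals, hence at most $2^{2^{2n}}$ non-redundant CNF-formulas once such a formula is identified with its set of clauses; since $\Phi$ is a \emph{set} all of whose elements are of this kind, $m\le 2^{2^{2n}}$. The point to note is that this count is finite only thanks to non-redundancy (a single variable already admits $x,\ x\wedge x,\ x\wedge x\wedge x,\ldots$), which is exactly why the paper restricted to non-redundant formulas beforehand.

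For item~2 it suffices to exhibit \emph{some} computable $f$, so I would argue crudely: each variable of $\Vars{\Phi}$ occurs in $\enc{\Phi}$ written as a binary string, distinct variables give distinct strings, and a string of length $\ell$ has at most $\ell(\ell+1)/2\le\ell^2$ distinct substrings; hence $n\le|\enc{\Phi}|^2$, so $f(n)\dfn\lceil\sqrt n\,\rceil$ works. (Renaming the variables to $x_1,\ldots,x_n$, which a reasonable encoding permits, would sharpen this to $f(n)=\lceil\log_2 n\rceil$, but that is not needed.)

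Item~3 is the one I expect to be the main obstacle. It cannot follow from non-redundancy alone — a single non-redundant CNF over $n$ variables can already have exponentially many clauses — so the asserted \emph{polynomial} bound must exploit the structure of the instances: the clauses are built from a \emph{fixed finite} constraint language, so each formula of $\Phi$ mentions at most a constant number of variables and, after renaming variables to $x_1,\ldots,x_n$, is encoded in $O(\log n)$ symbols; moreover $n\le a\cdot m$ for the (constant) maximal arity $a$, whence $\log n=O(\log m)$. Granting this, $|\enc{\Phi}|=\sum_{\varphi\in\Phi}|\enc{\varphi}|+O(m)=O(m\log m)$, and a direct check of the small exponents and constants gives $|\enc{\Phi}|\le m^3$. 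The genuinely delicate point throughout item~3 is to fix the ``reasonable encoding'' conventions tightly enough that the renaming and the concatenation overhead really stay within these bounds — in particular, that no single knowledge-base formula can be disproportionately large relative to $m$.
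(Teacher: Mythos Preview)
Items~1 and~2 are fine. Your counting for item~1 is exactly the paper's. For item~2 you take a different and cleaner route: the paper bounds $|\enc{\Phi}|$ from above by $|\Phi|^{2}(\log|\Vars{\Phi}|+1)$, combines this with $|\Phi|\le|\enc{\Phi}|$, and solves for $f(n)=(\log n+1)^{-1}$; this $f$ is computable but tends to~$0$, so it satisfies the letter of the statement while being useless for the intended $\FPT$-equivalence. Your substring count yields an unbounded~$f$, which is what is actually needed.

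For item~3 your instinct is right and your diagnosis is sharper than the paper's argument. The paper does \emph{not} appeal to the finiteness of $\Gamma$; it re-uses the bound $|\enc{\Phi}|\le|\Phi|^{2}(\log|\Vars{\Phi}|+1)$ from the derivation of item~2 and then invokes item~1 to replace $|\Vars{\Phi}|$ by $\log\log(2|\Phi|)$, concluding $|\enc{\Phi}|\le|\Phi|^{3}$. Both steps are problematic: the intermediate bound tacitly assumes each $\varphi\in\Phi$ has at most $|\Phi|$ literal occurrences, and item~1 yields $|\Vars{\Phi}|\ge\tfrac12\log\log|\Phi|$, the wrong direction for the substitution. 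Your one-formula counterexample ($|\Phi|=1$, many variables) already defeats item~3 as stated, so your suspicion that extra structure is needed is well founded.

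Your proposed rescue, however, does not go through either. A $\Gamma$-formula is a \emph{conjunction} of constraints, not a single constraint, so an element of $\Phi$ may still mention arbitrarily many variables even for fixed finite~$\Gamma$; the bounded-arity observation constrains individual constraints, not the knowledge-base formulas built from them. Thus ``each formula of $\Phi$ mentions at most a constant number of variables'' is false in this setting, and the $O(m\log m)$ estimate collapses. In short: you correctly locate the gap, but the patch you sketch relies on a premise that the framework does not provide.
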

\begin{proof}
\begin{enumerate}
	\item Let $v\in\mathbb{N}$ be a fixed number of variables.
	As we consider CNF-formulas, a formula consists of clauses of literals.
	The number of possible clauses then is the number of subsets of possible literals $\{x_1,\dots,x_v,\lnot x_1,\dots,\lnot x_v\}$, that is, $2^{2\cdot v}$-many.
	A CNF-formula is a subset of the set of possible clauses.
	As a result, we have $2^{2^{2\cdot v}}$-many possible CNF-formulas.
	\item We represent a variable $x_i$ by its binary encoding. 
	Clearly, $|\enc{\Phi}|=\sum_{\varphi\in\Phi}|\enc{\varphi}|$.
	However, 
	\begin{align*}
		|\enc{\varphi}|&\leq|\varphi|\cdot\log(|\Vars{\varphi}|)+|\varphi|\\
		&=|\varphi|\cdot(\log(|\Vars{\varphi}|)+1)\\
		&\leq |\Phi|\cdot(\log(|\Vars{\Phi}|)+1).
	\end{align*}
	As a result, we get 
	\begin{align*}
		|\enc{\Phi}| &\leq |\Phi|\cdot\max_{\varphi\in\Phi}|\enc{\varphi}|\\
		&\leq|\Phi|^2\cdot(\log(|\Vars{\Phi}|)+1)
		\intertext{As $|\Phi|\leq|\enc{\Phi}|$ is true, we have that} 
		|\enc{\Phi}| &\leq|\enc{\Phi}|^2\cdot(\log(|\Vars{\Phi}|)+1)\\
		\Leftrightarrow
		(\log(|\Vars{\Phi}|)+1)^{-1}&\leq|\enc{\Phi}|
	\end{align*}
	\item As we only consider formulas that are not redundant, the encoding length of a set of formulas contains information about the number of its formulas.
	We have that $|\enc{\Phi}| \leq|\Phi|^2\cdot(\log(|\Vars{\Phi}|)+1)$ (as in (2.)).
	However, $|\Phi|\leq2^{2^{2\cdot |\Vars{\Phi}|}}$, and, as a result, we get 
	\[
		|\enc{\Phi}| \leq|\Phi|^2\cdot(\log(\log(2\cdot|\Phi|))+1)\leq|\Phi|^3.\qedhere
	\]
\end{enumerate}

\end{proof}
 
Notice that due to Lemma~\ref{lem:encoding-number-variables}, the problems $\ARG$, $\ARGcheck$, $\ARGrel$ parameterized with respect to any of the parameters for the respective three (two) variants introduced above are $\FPT$-equivalent.
As a result, we will choose the one of the three (two) variants in our results that is technically most convenient.
Notice also that the parameter $|\Phi|$ only makes sense for $\ARGcheck$, whereas $|\Delta|$ makes sense only for the other two problems, that is, $\ARG$ and $\ARGrel$. 

\subsection{Schaefer's Framework}
For a deeper introduction into Schaefer's CSP framework, consider the article of Böhler~et~al.~\cite{bcrv04}.

A \emph{logical relation} of arity $k\in\mathbb N$ is a relation $R\subseteq\{0,1\}^k$, and a \emph{constraint} $C$ is a formula $C = R(x_1,\dots,x_k)$, where $R$ is a $k$-ary logical relation, and $x_1,\dots,x_k$ are (not necessarily distinct) variables.
If $V$ is a set of variables and $u$ a variable, then $C[V/u]$ denotes the constraint obtained from $C$ by replacing every occurrence of every variable of $V$ by $u$.
An assignment $\theta$ \emph{satisfies $C$}, if $(\theta(x_1),\dots,\theta(x_k))\in R$.
A \emph{constraint language} (CL) $\Gamma$ is a finite set of logical relations, and a \emph{$\Gamma$-formula} is a conjunction of constraints over elements from $\Gamma$.
Eventually, a $\Gamma$-formula $\varphi$ is \emph{satisfied} by an assignment $\theta$, if $\theta$ simultaneously satisfies all constraints in it.
In such a case $\theta$ is also called a \emph{model of $\varphi$}.
Whenever a $\csl$-formula or a constraint is logically equivalent to a single clause or term or literal, we treat it as such.
We say that a $k$-ary relation $R$ is represented by a formula $\phi$ in CNF if $\phi$ is a formula over $k$ distinct variables $x_1,\ldots,x_k$ and $\phi\equiv R(x_1,\ldots, x_k)$.
Moreover, we say that $R$ is 
\begin{itemize}
\item \emph{Horn} (resp., \emph{dual-Horn}) if $\phi$ contains at most one positive (negative) literal per each clause.
\item \emph{Bijunctive} if $\phi$ contains at most two literals per each clause.
\item \emph{Affine} if $\phi$ is a conjunction of linear equations of the form $x_1\oplus \ldots \oplus x_n=a$ where $a\in \{0,1\}$.
\item \emph{Essentially negative} if every clause in $\phi$ is either negative or unit positive. $R$ is \emph{essentially positive} if every clause in $\phi$ is either positive or unit negative.
\item \emph{$1$-valid} (resp., \emph{$0$-valid}) if every clause in $\phi$ contains at least one positive (negative) literal.
\end{itemize}
Furthermore, we say a relation is \emph{Schaefer} if it is Horn, dual-Horn, bijunctive, or affine.
We say that a relation is \emph{$\varepsilon$-valid} if it is $1$- or $0$-valid or both.
Finally, for a property \emph{P} of a relation, we say that a CL $\csl$ is \emph{P} if all relations in $\csl$ are \emph{P}.

\begin{definition}
	\begin{enumerate}
	\item The set $\clos{\csl}$ is the smallest set of relations that contains $\csl$, the equality constraint, $=$, and which is closed under primitive positive first order definitions, that is, if $\phi$ is an $\csl \cup \{=\}$-formula and $R(x_1, \dots, x_n) \equiv \exists y_1 \dots \exists y_l \phi(x_1, \dots, x_n,y_1, \dots, y_l)$, then $R \in \clos{\csl}$. In other words, $\clos{\csl}$ is the set of relations that can be expressed as a $\csl \cup \{=\}$-formula with existentially quantified variables.
	\item The set $\clos{\csl}_{\neq}$ is the set of relations that can be expressed as a $\csl$-formula with existentially quantified variables (no equality relation is allowed).
	\item The set $\clos{\csl}_{\not\exists,\neq}$ is the set of relations that can be expressed as a $\csl$-formula  (neither the equality relation nor existentially quantified variables are allowed).
	\end{enumerate}
\end{definition}
	
The set $\clos{\csl}$ is called a \emph{relational clone} or a \emph{co-clone} with \emph{base} $\csl$ \cite{DBLP:journals/ipl/BohlerRSV05}. 
Notice that for a co-clone $\cocl$ and a CL $\csl$ the statements $\csl \subseteq \cocl$, $\clos{\csl} \subseteq \cocl$, $\closneq{\csl} \subseteq \cocl$ and $\closnexneq{\csl} \subseteq \cocl$ are equivalent. 
Throughout the paper, we refer to different types of Boolean relations and corresponding co-clones following Schaefer's terminology \cite{DBLP:conf/stoc/Schaefer78}.
For a tabular overview of co-clones, relational properties, and bases, we refer the reader to  Table~\ref{tab:bases}.
Note that $\closneq{\csl} \subseteq \clos{\csl}$ is true by definition. 
The other direction is not true in general. 
However, if $(x = y) \in \closneq{\csl}$, then we have that $\closneq{\csl} = \clos{\csl}$.
\begin{example}

\begin{table*}
	\centering
	\includegraphics[width=\linewidth]{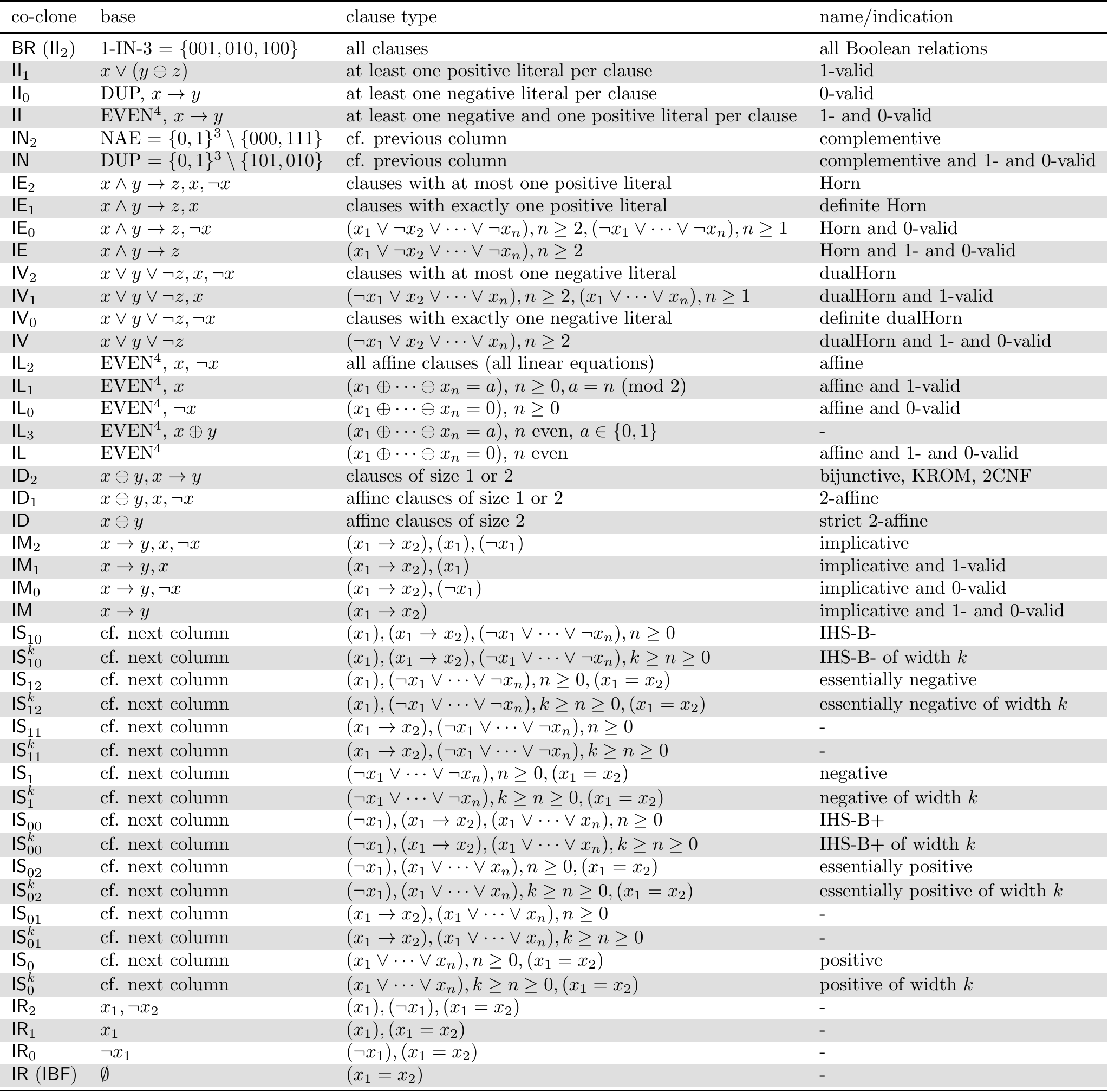}
	\caption{Overview of bases \cite{DBLP:journals/ipl/BohlerRSV05} and clause descriptions \cite{DBLP:journals/ai/NordhZ08} for co-clones, where EVEN$^4$ = $x_1 \oplus x_2 \oplus x_3 \oplus x_4 \oplus 1$.}\label{tab:bases}
\end{table*}

Let $R(x_1,x_2,x_3)\dfn (x_1\lor x_2\lor x_3)\land (\neg x_1\lor \neg x_2\lor \neg x_3)$. 
Then 

$${(x_1\lor x_2)\land (x_2 \oplus x_3 = 0)} \equiv \exists y (R(x_1,x_2,y) \land F(y) \land (x_2=x_3)),$$ 
where $F=\{0\}$. This implies that $(x_1\lor x_2)\land (x_2 \oplus x_3 = 0)\in \clos{\setdefinition{R,F}}$.
\end{example}
\subsection{Technical Implementation Results}
We say a Boolean relation $R$ is \emph{strictly essentially positive} (resp., \emph{strictly essentially negative}) if it can be defined by a conjunction of literals and positive clauses (resp., negative clauses) only. Note that the only difference to essentially positive (resp., essentially negative) is the absence of the equality relation (see Table~\ref{tab:bases}).
We abbreviate in the following essentially positive by ``ess.pos.'' and essentially negative by ``ess.neg.''.

\begin{proposition}{\cite[Lem.~7]{DBLP:conf/lfcs/0002M020}}]\label{lem:impl-essPosNeg_old}
Let $\Gamma$ be a CL that is neither ess.pos., nor ess.neg. Then, we have that $(x=y) \in \closneq{\Gamma}$ and $\clos{\Gamma} = \closneq{\Gamma}$.
\end{proposition}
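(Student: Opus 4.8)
The plan is to prove that if a constraint language $\Gamma$ is neither essentially positive nor essentially negative, then $(x=y)\in\closneq{\Gamma}$ (i.e.\ equality is expressible by an existentially quantified $\Gamma$-formula without the built-in equality), and consequently $\clos{\Gamma}=\closneq{\Gamma}$ by the remark already made in the text. Since this is quoted as \cite[Lem.~7]{DBLP:conf/lfcs/0002M020}, I would either cite it directly or reconstruct the short argument as follows. First I would fix a relation $R\in\Gamma$ that witnesses ``not essentially positive'': writing $R$ in CNF over its coordinate variables, there is some clause that is neither purely negative nor a positive unit, hence either a positive clause of length $\geq 2$ or a mixed clause (one containing both a positive and a negative literal). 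Symmetrically, ``not essentially negative'' gives a clause in some $R'\in\Gamma$ (possibly the same relation) that is not purely positive and not a negative unit.

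**The core construction.** The hard part is squeezing the equality relation out of these two structural facts, and the cleanest route goes through the relation $\mathrm{OR}=\{01,10,11\}$ and its dual $\mathrm{NAND}=\{00,01,10\}$, or directly through the $2$-clauses $(x\lor y)$ and $(\lnot x\lor\lnot y)$. The key observation: from a relation containing a clause that is positive of length $\ge 2$ or mixed, one can, by identifying variables and existentially quantifying away all but two coordinates (and substituting constants where a coordinate is forced — but note we may not have constants available, so one must be careful here and instead rely on the clause genuinely depending on at least two variables), obtain in $\closneq{\Gamma}$ a binary relation that implies $x\lor y$ in the mixed/positive case, and dually a binary relation implying $\lnot x\lor\lnot y$ from the ``not ess.neg.'' witness. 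One then checks that conjoining a relation of the form $\{01,10,11\}$-type with one of the $\{00,01,10\}$-type — after suitably matching up the two variable pairs — pins down exactly $\{00,11\}=(x=y)$. The genuinely delicate bookkeeping is (a) guaranteeing the extracted binary relation really is one of the four possible ``$\ge$-OR'' relations $\{01,10,11\},\{01,11\},\{10,11\},\{11\}$ and not a degenerate projection, and (b) handling the subcase where the witnessing clause is mixed of length exactly $2$, i.e.\ already $(\lnot x\lor y)$, which is implication $x\to y$; combining $x\to y$ obtained one way with $y\to x$ obtained the other way also yields $x=y$ immediately, so this subcase is in fact the easiest.

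**Assembling the cases.** I would organize the proof by the type of the ``not ess.pos.'' witness clause and the ``not ess.neg.'' witness clause, a small finite case distinction: (i) if one witness already gives $x\to y$ and the other $y\to x$, intersect them; (ii) if we get $(x\lor y)\in\closneq{\Gamma}$ and $(\lnot x\lor\lnot y)\in\closneq{\Gamma}$, then $\exists z\,(x\lor z)\land(\lnot z\lor \lnot y)\land(\ldots)$-style compositions produce implications, reducing to case (i); (iii) the remaining mixed asymmetric cases reduce to (i) or (ii) by the same identification/quantification tricks. Each reduction is a one-line primitive positive definition, so after the case analysis is laid out there is no real computation left. Once $(x=y)\in\closneq{\Gamma}$ is established, the equality $\clos{\Gamma}=\closneq{\Gamma}$ is immediate from the sentence in the preliminaries: ``if $(x=y)\in\closneq{\Gamma}$, then $\closneq{\Gamma}=\clos{\Gamma}$,'' because the only extra ingredient $\clos{\Gamma}$ has over $\closneq{\Gamma}$ is the equality relation, which we have now shown is already available. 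The main obstacle, to restate it, is purely the extraction step — turning a ``bad clause exists somewhere in some $R\in\Gamma$'' hypothesis into a clean binary relation in $\closneq{\Gamma}$ without access to constants — and the standard fix is to note that a clause that is not a unit and not purely-one-sign must depend nontrivially on two variables of opposite required polarity or two variables of positive polarity, which after quantifier elimination of the other coordinates leaves precisely a sub-OR or sub-NAND relation.
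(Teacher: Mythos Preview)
The paper does not supply its own proof of this proposition; it is imported verbatim as \cite[Lem.~7]{DBLP:conf/lfcs/0002M020}. What the paper \emph{does} tell us, in the proof of Lemma~\ref{lem:impl-epsilon-valid}, is the shape of the cited argument: a case distinction according to whether $\Gamma$ is $0$-valid and/or $1$-valid, and in the non-$\varepsilon$-valid subcase a further split according to whether $\Gamma$ is Horn and/or dual-Horn. In each leaf of that case tree one exhibits directly either $(x=y)\land t\land\neg f$ or $(x\neq y)\land t\land\neg f$ or $(x\neq y)$ in $\closnexneq{\Gamma}$, from which $(x=y)\in\closneq{\Gamma}$ drops out. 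So the organizing principle is \emph{validity and Horn-type} of $\Gamma$, not the clause shapes you isolate.

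Your route---pull a binary OR-like relation from a ``not ess.neg.'' witness, a NAND-like relation from a ``not ess.pos.'' witness, and intersect---is a different decomposition, and it is not obviously complete as sketched. Two concrete issues. First, you have swapped the two witnesses: ``not ess.pos.'' means some relation admits \emph{no} CNF in which every clause is positive or unit negative, so the offending clauses are negative of length $\geq 2$ or mixed, not ``positive of length $\geq 2$ or mixed'' as you write (and dually). Second, and more seriously, the extraction step is not just bookkeeping. ``Not ess.pos.'' is a statement about \emph{every} CNF representation of some $R\in\Gamma$, not about one chosen CNF; you cannot simply point at a bad clause in an arbitrary representation, identify variables, existentially quantify the rest, and be guaranteed a nondegenerate binary relation---other clauses of $R$ may collapse the result to a unit or to $\top$. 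Making this rigorous requires either working with prime implicates or, as the cited proof does, sidestepping clause-level surgery entirely by exploiting the global validity/Horn properties of $\Gamma$ together with Proposition~\ref{lem:impl-stuff}. Your case (i) (two implications) is fine when it applies, but cases (ii) and (iii) as stated do not obviously terminate in a correct construction without access to constants, which is precisely the regime (e.g., $\Gamma$ both $0$- and $1$-valid) where the cited proof invokes a separate argument.
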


With the following implementation result we can strengthen this statement to Lemma~\ref{lem:equality_available}.

\begin{lemma}\label{lem:impl-essPosNeg}
Let $\Gamma$ be a CL that is not $\varepsilon$-valid.
If $\Gamma$ is ess.neg.\ and not strictly ess.neg.\, or ess.pos.\ and not strictly ess.pos.\, then we have that $(x = y) \land t \land \neg f \in \closnexneq{\Gamma}$.
\end{lemma}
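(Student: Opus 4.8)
The plan is first to reduce, by the usual $0/1$-duality (which swaps the constants $t$ and $\neg f$, swaps positive and negative literals, and swaps bit-wise $\wedge$ with bit-wise $\vee$), to the case where $\Gamma$ is ess.neg.\ but not strictly ess.neg.; the essentially positive case is then obtained by dualising the whole argument. In this case every $R\in\Gamma$ is Horn, i.e.\ closed under bit-wise conjunction, so every (nonempty) $R\in\Gamma$ has a unique minimal model $\bigwedge R$ and models of $R$ may be intersected — facts I will use repeatedly. The construction then splits into two parts: (i)~exhibiting in $\closnexneq\Gamma$ the ability to pin a variable to $1$ and another to $0$; and (ii)~producing $(x=y)\wedge t\wedge\neg f$ from a single relation of $\Gamma$ witnessing non-strictness, using the pinning constraints from (i). Conjoining the outputs of (i) and (ii) finishes the proof. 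As usual we assume every relation in $\Gamma$ is satisfiable.

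For (i): since $\Gamma$ is not $\varepsilon$-valid there are $R_0\in\Gamma$ with $\vec 0\notin R_0$ and $R_1\in\Gamma$ with $\vec 1\notin R_1$. Suppose one of them, say $R$, misses also the opposite constant tuple (so $\vec 0,\vec 1\notin R$). Being Horn, $R$ has a minimal model $\vec m=\bigwedge R$ with $\vec m\notin\{\vec 0,\vec 1\}$, so $P=\{\,j:m_j=1\,\}$ is a nonempty proper set of coordinates, and every coordinate in $P$ is pinned to $1$ by $R$ (it is $1$ in the minimal model). Identifying all of $P$ to a fresh variable and all remaining coordinates to another fresh variable then yields exactly the relation $\{(1,0)\}$ — both pinning constraints in one stroke. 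In the remaining case $\vec 1\in R_0$ and $\vec 0\in R_1$; then collapsing all coordinates of $R_0$ (resp.\ $R_1$) to a single variable yields $\{1\}$ (resp.\ $\{0\}$). Either way (i) is achieved.

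For (ii): fix $R\in\Gamma$ that is ess.neg.\ but not strictly ess.neg. Since an ess.neg.\ relation is equivalent to a conjunction of negative clauses, unit positive clauses, and equality constraints (existential variables being eliminable in this fragment), and since a conjunction of literals and negative clauses can never force a nontrivial equality between two coordinates, the failure of strictness forces the existence of distinct coordinates $p,q$ with $\{\,(a_p,a_q):\vec a\in R\,\}=\{(0,0),(1,1)\}$. Pick $\vec a\in R$ with $a_p=a_q=1$ and some $\vec c\in R$ with $c_p=c_q=0$, and put $\vec b=\vec a\wedge\vec c\in R$; then $\vec b\leq\vec a$, so every coordinate $m$ of $R$ lies in one of only three classes according to $(a_m,b_m)$, namely $(1,1)$, $(0,0)$, or $(1,0)$ (the class containing $p$ and $q$). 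Now substitute, in $R$, a variable $x$ for $p$ and for all $(1,0)$-coordinates other than $q$, a variable $y$ for $q$, a variable $t$ for the $(1,1)$-coordinates and a variable $f$ for the $(0,0)$-coordinates, and conjoin the pinning constraints from (i) forcing $t=1$ and $f=0$. The models $\vec a$ and $\vec b$ survive as the tuples $(1,1,1,0)$ and $(0,0,1,0)$ over $(x,y,t,f)$, while $\{\,(a_p,a_q):\vec a\in R\,\}=\{(0,0),(1,1)\}$ forbids any tuple with $x\neq y$; hence the resulting $\Gamma$-formula defines precisely $(x=y)\wedge t\wedge\neg f$.

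The step I expect to require the most care is (i): one must check that the minimal-model splitting produces $\{(1,0)\}$ and nothing larger — which relies on every coordinate of $P$ being \emph{genuinely} pinned to $1$ by $R$ — and one must dispose of the degenerate possibilities (empty relations; a relation already $\varepsilon$-valid on one side) cleanly. The only subtle point in (ii) is the structural claim extracting an ``equality pair'' $p,q$ from non-strictness, which is exactly where the syntactic description of essentially negative relations enters; the ensuing substitution is purely combinatorial. As noted, the essentially positive case is entirely symmetric, exchanging $t$ with $\neg f$, minimal with maximal models, and $\wedge$ with $\vee$.
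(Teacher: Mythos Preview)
Your proof is correct. Both it and the paper's argument hinge on the same structural observation --- a relation that is essentially negative (resp.\ positive) but not strictly so must entail a nontrivial equality $x_p=x_q$ between two coordinates neither of which is forced to a constant --- and then collapse coordinates by substitution. The paper (in the dual ess.pos.\ case) takes a single relation $R$ assumed to carry all three properties at once (not strictly ess.pos., not $0$-valid, not $1$-valid), partitions its coordinates semantically into $W=\{i:R\models x_1=x_i\}$, $N=\{i:R\models\neg x_i\}$, and $P=V\setminus(W\cup N)$, and obtains $(x_1=x_2)\wedge t\wedge\neg f$ from the single substitution $W\to x_2$, $N\to f$, $P\to t$. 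Your decomposition is genuinely different and more modular: part~(i) builds the pinning $t\wedge\neg f$ from non-$\varepsilon$-validity of $\Gamma$ alone (via Horn minimal models, possibly using \emph{several} relations of $\Gamma$), while part~(ii) extracts just the equality from the non-strict $R$ via the two-model classification by $(\vec a,\vec b=\vec a\wedge\vec c)$, and you conjoin the two pieces at the end. This buys robustness --- the paper's ``w.l.o.g.\ $\Gamma=\{R\}$'' is not obviously justified when the witnesses for non-strictness and for non-$\varepsilon$-validity happen to be distinct relations of $\Gamma$ --- at the price of a slightly longer construction; the paper's semantic tripartition $W,N,P$ is a touch more direct once one does have a single $R$ with all the required properties.
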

\begin{proof}
We prove the statement for $\Gamma$ that is ess.pos.\ but not strictly ess.pos. The other case can be treated analogously. W.l.o.g., let $\Gamma = \{R\}$, thus $R$ is ess.pos.\ but not strictly ess.pos. 
Furthermore, $R$ is neither 1-valid nor 0-valid.  
Let $R$ be of arity $k$ and let $V = \{x_1, \dots, x_k\}$ be a set of $k$ distinct variables. 
By definition of ess.pos. (cf. $\IS{}{02}$ in Table~\ref{tab:bases}), $R$ can be written as conjunction of negative literals, positive clauses and equalities. 

If $R$ can be written without any equality, then $R$ is strictly ess.pos., a contradiction. 
As a result, any representation of $R$ as conjunction of negative literals, positive clauses and equalities requires at least one equality. 
Suppose, w.l.o.g., that $R(x_1, \dots, x_k) \models (x_1 = x_2)$, while $R(x_1,\dots,x_k) \not\models x_1$ and $R(x_1,\dots,x_k) \not\models \neg x_1$.
We define the following three subsets of $V$:
$W = \{\,x_i \mid R(x_1,\dots, x_k) \models (x_1 = x_i)\,\}$, 
$N = \{\,x_i \mid R(x_1,\dots, x_k) \models \neg x_i\,\}$, and 
$P = V \setminus (W \cup N)$

By construction the three sets provide a partition of $V$. 
Then, $W$ is nonempty by construction, $N$ is nonempty since $R$ is not 1-valid and $P$ is nonempty since $R$ is not 0-valid.
Denote by $C$ the $\{R\}$-constraint $C = R(x_1, \dots, x_k)$. 
Consider the constraint $M(x_1, x_2, t, f ) = C[W/x_2, P/t, N/f]$. 
One verifies that $M(x_1, x_2, t, f ) \equiv (x_1 = x_2) \land t \land \neg f$.
\end{proof}

\begin{lemma}\label{lem:equality_available}
Let $\Gamma$ be a CL that is neither strictly ess.pos., nor strictly ess.neg.
Then $(x=y) \in \closneq{\Gamma}$ and $\clos{\Gamma} = \closneq{\Gamma}$.
\end{lemma}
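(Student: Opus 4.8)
The plan is to reduce everything to establishing $(x=y)\in\closneq{\Gamma}$: once this holds, the co-clone identity $\clos{\Gamma}=\closneq{\Gamma}$ follows immediately from the remark preceding the statement (if $(x=y)\in\closneq{\Gamma}$ then $\closneq{\Gamma}=\clos{\Gamma}$). I would then distinguish cases according to how far $\Gamma$ is from being essentially positive or negative. First, if $\Gamma$ is neither ess.pos.\ nor ess.neg., this is exactly the hypothesis of Proposition~\ref{lem:impl-essPosNeg_old}, which already gives the claim. So assume $\Gamma$ is ess.pos.\ (the ess.neg.\ case being entirely symmetric); by hypothesis it is then not strictly ess.pos. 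If moreover $\Gamma$ is \emph{not} $\varepsilon$-valid, then Lemma~\ref{lem:impl-essPosNeg} applies and yields $(x=y)\land t\land\neg f\in\closnexneq{\Gamma}\subseteq\closneq{\Gamma}$; since existential quantification is allowed in $\closneq{\cdot}$, we get $(x=y)\equiv\exists t\,\exists f\,\big((x=y)\land t\land\neg f\big)\in\closneq{\Gamma}$.

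The remaining and main case is that $\Gamma$ is ess.pos., not strictly ess.pos., and $\varepsilon$-valid. Here I would pick an $R\in\Gamma$ that is ess.pos.\ but not strictly ess.pos.; by assumption $R$ is $0$-valid or $1$-valid, and I would treat the $1$-valid case (the $0$-valid case is symmetric, with negative literals in the role of positive clauses). Writing $R$ in its canonical ess.pos.\ form as a conjunction of positive clauses, negative literals and equalities (cf.\ $\IS{}{02}$ in Table~\ref{tab:bases}), $1$-validity forbids negative literals, so $R\equiv\bigwedge_j D_j\land\bigwedge_l(x_{a_l}=x_{b_l})$ with all $D_j$ positive clauses. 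The equalities partition $\var{R}$ into classes; if every non-singleton class contained a variable forced true by $R$, then all its internal equalities could be replaced by unit positive clauses, making $R$ strictly ess.pos.\ --- a contradiction. Hence I may fix a class $W$ with distinct $x_1,x_2\in W$ such that $R\not\models x_1$. The key observation is that no positive clause $D_j$ can have all its variables in $W$: since all variables of $W$ are forced equal to $x_1$, such a $D_j$ would force $x_1$ true. So every $D_j$ contains a variable outside $W$. Substituting every variable of $W\setminus\{x_1\}$ by $x_2$ and every variable outside $W$ by one fresh variable $t$, the resulting constraint is equivalent to $(x_1=x_2)\land\bigwedge_j D_j'$, where each $D_j'$ is a positive clause containing $t$; therefore $\exists t$ of this constraint is exactly $(x_1=x_2)$. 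This is a primitive positive definition of equality from $R$ using only variable identification and existential quantification, so $(x=y)\in\closneq{\{R\}}\subseteq\closneq{\Gamma}$. The fully degenerate possibility that $\var{R}$ is a single equality class is handled directly by projecting $R$ onto two coordinates.

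I expect the $\varepsilon$-valid subcase to be the only genuine obstacle. The delicate points there are (i) arguing that ``not strictly ess.pos.'' really forces the existence of a non-eliminable equality, i.e.\ one whose two sides are not both forced to a constant, and (ii) ensuring the chosen variable identifications do not accidentally impose extra constraints linking $x_1$ and $x_2$ --- which is precisely what the ``no positive clause lives entirely inside $W$'' observation rules out. A more bookkeeping-heavy but conceptually routine alternative would be to enumerate the finitely many co-clones lying in the ess.pos./ess.neg.\ region of Table~\ref{tab:bases} and verify the claim on a base of each, but I would prefer the direct argument sketched above.
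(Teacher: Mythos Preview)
Your argument is correct and, for the two principal cases, coincides with the paper's proof: when $\Gamma$ is neither ess.pos.\ nor ess.neg.\ you both invoke Proposition~\ref{lem:impl-essPosNeg_old}, and when $\Gamma$ is ess.pos.\ (resp.\ ess.neg.) but not strictly so and not $\varepsilon$-valid you both invoke Lemma~\ref{lem:impl-essPosNeg} and then existentially project out $t,f$.

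Where you diverge is the $\varepsilon$-valid subcase. The paper disposes of it in one line, asserting that ``this lemma's statement implies that $\Gamma$ is not $\varepsilon$-valid'' and then applying Lemma~\ref{lem:impl-essPosNeg} directly. You instead treat the subcase head-on with a bespoke implementation: from a relation $R$ that is ess.pos., $1$-valid, and not strictly ess.pos., you locate an equality class $W$ whose variables are not forced true, observe that every positive clause must reach outside $W$, and then identify all of $W$ to $\{x_1,x_2\}$ and everything else to a single existentially quantified variable. This is a genuine addition to the paper's argument. Your caution is warranted: the constraint language $\Gamma=\{=\}$ is both $0$- and $1$-valid while being neither strictly ess.pos.\ nor strictly ess.neg., so the paper's one-line dismissal is at least not self-evident, and your direct construction makes the proof self-contained. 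The trade-off is that you pay for this completeness with the extra combinatorial bookkeeping (the analysis of equality classes and the ``no clause lives inside $W$'' observation), whereas the paper keeps the proof to three sentences by leaning on an implicit claim about $\varepsilon$-validity.
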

\begin{proof}
If $\Gamma$ is not ess.pos. and not ess.neg.\ the statement follows from Proposition~\ref{lem:impl-essPosNeg_old}. 
Note that this lemma's statement implies that $\Gamma$ is not $\varepsilon$-valid. 
If $\Gamma$ is ess.pos.\ or ess.neg., by Lemma~\ref{lem:impl-essPosNeg} we have $(x = y) \land t \land \neg f \in \closnexneq{\Gamma}$. 
Conclude by noticing that
$(x = y) \equiv \exists t\exists f\, (x = y) \land t \land \neg f \in \closneq{\Gamma}$.
\end{proof}

\begin{lemma}\label{lem:impl-epsilon-valid}
Let $\Gamma$ be a CL that is neither $\varepsilon$-valid, nor ess.pos., nor ess.neg. Then, if $\Gamma$ is 
\begin{enumerate}
\item not Horn, not dual-Horn, and not complementive, then $(x\neq y) \land t \land \neg f \in \closnexneq{\Gamma}$,\label{label-lem-impl-epsilon-valid-nothorn-notdualhorn-notcomplementive}
\item not Horn, not dual-Horn, and complementive, then $(x\neq y) \in \closnexneq{\Gamma}$, and\label{label-lem-impl-epsilon-valid-nothorn-notdualhorn-complementive}
\item Horn or dual-Horn, then $(x=y) \land t \land \neg f \in \closnexneq{\Gamma}$.\label{label-lem-impl-epsilon-valid-horn-or-dualhorn}
\end{enumerate}
\end{lemma}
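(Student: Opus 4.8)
The plan is to continue in the style of the proof of Lemma~\ref{lem:impl-essPosNeg}: in each of the three cases we pick a bounded number of relations of $\Gamma$ that witness the relevant negative hypotheses, and realize the target relation purely by \emph{identifying variables}, that is, by forming constraints $C[V_1/x,V_2/y,V_3/t,V_4/f]$ and conjoining finitely many of them (over possibly several relations of $\Gamma$). This is essentially the classical Schaefer gadget analysis, but with the extra restriction that we must stay inside $\closnexneq{\Gamma}$: no existential quantification and no equality relation are available, so every gadget has to equal the target relation \emph{exactly}, not merely after projecting out auxiliary variables.

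A recurring sub-task is extracting the constant pair $t\land\neg f$. If $\Gamma$ is not $\varepsilon$-valid, take $R\in\Gamma$ that is neither $0$-valid nor $1$-valid and a tuple $a\in R$; since $a$ is neither all-$0$ nor all-$1$, identifying the $1$-coordinates of $a$ with a variable $t$ and its $0$-coordinates with a variable $f$ yields a binary constraint $R'(t,f)$ that contains $(1,0)$ (via $a$) and excludes $(0,0)$ and $(1,1)$ (an all-$0$, resp.\ all-$1$, preimage would lie outside $R$), hence $R'\subseteq\{(1,0),(0,1)\}$. When $\Gamma$ is Horn, $R'$ is again Horn, so it cannot be $\{(1,0),(0,1)\}$ (which is not Horn), whence $R'=\{(1,0)\}$, i.e.\ $t\land\neg f\in\closnexneq{\Gamma}$; the dual-Horn sub-case is symmetric, after dualizing and renaming the two constant positions.

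For item~\ref{label-lem-impl-epsilon-valid-horn-or-dualhorn}, say $\Gamma$ is Horn; by the clause descriptions of Table~\ref{tab:bases}, a Horn language that is not essentially negative contains a relation $S$ whose prime CNF has a proper implication clause $(\neg z_1\lor\cdots\lor\neg z_r\lor z_0)$ with $r\ge1$. Fixing the coordinates of $S$ outside $\{z_0,\dots,z_r\}$ to the constants just obtained, and identifying $z_1,\dots,z_r$ with one variable and $z_0$ with another, we extract $x\le y$; conjoining it with its variable-swapped copy yields $x=y$, and together with $t\land\neg f$ this gives the claim. (Note that for Horn $\Gamma$ the hypothesis ``not essentially negative'' already implies ``not essentially positive'', so this branch really uses only two of the three hypotheses.) Item~\ref{label-lem-impl-epsilon-valid-nothorn-notdualhorn-complementive} is the easiest: with $\Gamma$ complementive and not $\varepsilon$-valid, take $R\in\Gamma$ neither $0$- nor $1$-valid, pick $a\in R$ (so $\bar a\in R$ as well), and identify the $1$-coordinates of $a$ to $x$ and its $0$-coordinates to $y$; the resulting binary constraint contains $(1,0)$ and $(0,1)$ and excludes $(0,0),(1,1)$, hence equals $x\ne y$. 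For item~\ref{label-lem-impl-epsilon-valid-nothorn-notdualhorn-notcomplementive}, take $R_\land\in\Gamma$ not Horn (witnessed by $a,b$ with $a\land b\notin R_\land$) and $R_\lor\in\Gamma$ not dual-Horn (witnessed by $a',b'$; a single witness need not serve both roles, which is why both hypotheses are needed). Partition the coordinates of each relation by the pattern of its witnessing pair, identify the four pattern classes with $x,y,t,f$, pin $t,f$ via the constant gadget, and conjoin: the non-Horn witness excludes the all-$0$ pattern on $\{x,y\}$ (whose preimage is $a\land b$), the non-dual-Horn witness excludes the all-$1$ pattern (preimage $a'\lor b'$), leaving exactly $(x\ne y)\land t\land\neg f$.

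The main obstacle is making these identifications land on \emph{exactly} $x\le y$, resp.\ $x\ne y$, without existential quantification, together with the degenerate sub-cases — a witnessing pair that agrees on some truth value, or a witnessing relation contained in $\{0^k,1^k\}$ — in which a pattern class is empty and the missing constants or disequalities must be recovered from the remaining hypotheses. It is precisely the ``not essentially positive'', ``not essentially negative'' and ``not $\varepsilon$-valid'' hypotheses that rule out the degenerate outcomes (a constant relation, or a NAND/OR-type relation) of the identifications, so a Post-lattice-style case analysis of the possible binary outcomes is what has to be carried out carefully to finish.
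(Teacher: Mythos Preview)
The paper's own ``proof'' of this lemma is purely a pointer: it says that statements~1--3 are already established inside the case analysis of the proof of \cite[Lemma~7]{DBLP:conf/lfcs/0002M020} (the source of Proposition~\ref{lem:impl-essPosNeg_old}), specifically in the sub-case where $\Gamma$ is not $\varepsilon$-valid, split further by Horn/dual-Horn. There is therefore no in-paper argument to compare against; you are reconstructing what the cited reference does.

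Your plan --- realize the target relation by identifying variables in witness relations and then use the Horn/dual-Horn/complementive closure properties to determine which of the finitely many possible binary (or $4$-ary) outcomes actually occurs --- is exactly the standard mechanism for such implementation results and is almost certainly what the cited proof does, so the route is the same in spirit. Two places deserve tightening. First, when you write ``take $R\in\Gamma$ that is neither $0$-valid nor $1$-valid'', such a single $R$ need not exist from $\Gamma$ not being $\varepsilon$-valid alone; you only get separate witnesses. This is harmless in item~\ref{label-lem-impl-epsilon-valid-nothorn-notdualhorn-complementive} (a complementive $R$ that is not $0$-valid is automatically not $1$-valid), and in items~\ref{label-lem-impl-epsilon-valid-nothorn-notdualhorn-notcomplementive} and~\ref{label-lem-impl-epsilon-valid-horn-or-dualhorn} you can sidestep it entirely: there $\Gamma$ is not complementive (for item~\ref{label-lem-impl-epsilon-valid-horn-or-dualhorn}, a Horn complementive relation that is not $0$-valid would contain some $a,\bar a$ but not $a\wedge\bar a=\mathbf 0$, contradicting Horn; dually for dual-Horn), so Proposition~\ref{lem:impl-stuff}(\ref{lem-label-impl-stuff-notcompl-not-eps-valid}.) gives $t\wedge\neg f$ directly. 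Second, in item~\ref{label-lem-impl-epsilon-valid-horn-or-dualhorn} the step ``fix the coordinates of $S$ outside $\{z_0,\dots,z_r\}$ to constants and extract $x\le y$'' hides real work: without existential quantification you must argue that the resulting constraint over $(x,y,t,f)$ is \emph{exactly} $(x\le y)\wedge t\wedge\neg f$ and not something strictly smaller or larger, which forces you to choose the substitution for the remaining coordinates carefully and again invoke the closure property. You flag this yourself at the end, so the proposal is honest about where the remaining case analysis lives; just note that this analysis is the actual content of the cited proof, not a routine afterthought.
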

\begin{proof}
This follows immediately from the proof of Proposition~\ref{lem:impl-essPosNeg_old}. The proof given in \cite[Lemma~7]{DBLP:conf/lfcs/0002M020, DBLP:journals/corr/abs-1906-00703} makes a case distinction according to whether $\csl$ is 0-valid and/or 1-valid. In the case of non $\varepsilon$-valid $\csl$ a further case distinction is made according to whether $\csl$ is Horn and/or dualHorn. Here the statements 1., 2., and 3. are proven.
\end{proof}

Let us denote by $\T/\F$ the unary relations that implement true/false.
That is, $\T=\{(1)\}$ and $\F=\{(0)\}$. 
The following implementation results are folklore. 

\begin{proposition}[Creignou~et~al.~\cite{DBLP:books/daglib/0004131}]\label{lem:impl-stuff}
If $\Gamma$ is a CL that is
\begin{enumerate}
\item complementive and not $\varepsilon$-valid, then $(x\neq y)\in \closnexneq{\Gamma}$,\label{lem-label-impl-stuff-compl-notepsvalid}
\item neither complementive, nor $\varepsilon$-valid, then $(t \land \bar f) \in \closnexneq{\Gamma}$.\label{lem-label-impl-stuff-notcompl-not-eps-valid}
\item $1$-valid and not $0$-valid, then $\T \in \closnexneq{\Gamma}$,\label{lem-label-impl-stuff-1valid-not0valid}
\item $0$-valid and not $1$-valid, then $\F \in \closnexneq{\Gamma}$, and\label{lem-label-impl-stuff-0valid-not1valid}
\item $0$-valid and $1$-valid, then $(x=y)\in \closnexneq{\Gamma}$.\label{lem-label-impl-stuff-1valid-0valid}
\end{enumerate}
\end{proposition}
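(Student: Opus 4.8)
The plan is to establish all five items by a single recurring device: from $\Gamma$ we extract one relation $R$ that witnesses the failure of the property excluded in the hypothesis, then we substitute (identify) the variables of $R$ according to the partition of its coordinate positions induced by a suitable tuple of $R$, and we check that the resulting relation of arity one or two is exactly the gadget we want; if one identification is not enough, we take the conjunction of boundedly many such gadgets. The point to keep in mind is that $\closnexneq{\Gamma}$ permits exactly variable identification and conjunction of $\Gamma$-constraints, so neither existential projection nor the equality relation may be used along the way. Two harmless conventions: we may assume no relation of $\Gamma$ equals $\emptyset$ (otherwise the studied problems are trivial), and for item~\ref{lem-label-impl-stuff-1valid-0valid} that $\Gamma$ contains a relation different from the complete relation of its arity (otherwise $\closnexneq{\Gamma}$ contains only complete relations and the statement is vacuous in the intended setting).

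I would first dispatch the four ``one-shot'' items. For item~\ref{lem-label-impl-stuff-1valid-not0valid}, pick $R\in\Gamma$ with $\vec 0\notin R$; since $\Gamma$ is $1$-valid, $\vec 1\in R$, so identifying all variables of $R$ yields the unary relation $\{z\mid R(z,\dots,z)\}=\{(1)\}=\T$; item~\ref{lem-label-impl-stuff-0valid-not1valid} is symmetric and gives $\F$. For item~\ref{lem-label-impl-stuff-compl-notepsvalid}, since $\Gamma$ is complementive and not $\varepsilon$-valid, a single $R\in\Gamma$ is neither $0$- nor $1$-valid; picking any $\vec a\in R$ (so $\vec a\notin\{\vec 0,\vec 1\}$), identifying the coordinates where $\vec a$ is $1$ to a variable $y$ and the remaining coordinates to a variable $x$ produces a binary relation $T$, and the tuples $(0,1),(1,0),(0,0),(1,1)$ correspond to $\vec a\in R$, $\overline{\vec a}\in R$ (by complementivity), $\vec 0\notin R$, $\vec 1\notin R$, so $T=(x\neq y)$. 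For item~\ref{lem-label-impl-stuff-1valid-0valid}, take a non-complete $R\in\Gamma$ and some $\vec a\notin R$; as $\Gamma$ is $0$- and $1$-valid, $\vec a\notin\{\vec 0,\vec 1\}$, and the same identification gives $T$ with $(0,0),(1,1)\in T$ and $(0,1)\notin T$; then $T(x,y)\land T(y,x)=(x=y)$ whether or not $(1,0)\in T$.

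Item~\ref{lem-label-impl-stuff-notcompl-not-eps-valid} is the step I expect to be the main obstacle, because it needs a case distinction. If every relation of $\Gamma$ is $0$-valid or $1$-valid, then a relation that is not $0$-valid is necessarily $1$-valid, so the device of item~\ref{lem-label-impl-stuff-1valid-not0valid} gives $\T$, symmetrically a relation that is not $1$-valid is $0$-valid and gives $\F$, and $\T(t)\land\F(f)$ is $(t \land \bar f)$. Otherwise some $R\in\Gamma$ is neither $0$- nor $1$-valid. If that $R$ is also not complementive, fix $\vec a\in R$ with $\overline{\vec a}\notin R$ and identify the $1$-coordinates of $\vec a$ to $t$ and the $0$-coordinates to $f$: the tuples $(1,0),(0,1),(0,0),(1,1)$ now correspond to $\vec a\in R$, $\overline{\vec a}\notin R$, $\vec 0\notin R$, $\vec 1\notin R$, so the gadget is exactly $\{(1,0)\}=(t \land \bar f)$. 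The remaining, most delicate situation is $R$ neither $0$- nor $1$-valid but complementive; here I would first extract $(x\neq y)$ from $R$ exactly as in item~\ref{lem-label-impl-stuff-compl-notepsvalid}, then take a relation $R'\in\Gamma$ that is not complementive (one exists by hypothesis) with a witness $\vec a\in R'$, $\overline{\vec a}\notin R'$, and combine by conjunction: if $\vec a=\vec 0$ then $R'$ is $0$-valid but not $1$-valid, identify-all yields $\F$, and $(x\neq y)\land\F(y)=\{(1,0)\}$; the case $\vec a=\vec 1$ is symmetric via $\T$; and if $\vec a\notin\{\vec 0,\vec 1\}$, identifying $R'$ along $\vec a$ gives a binary relation that contains $(1,0)$ but not $(0,1)$, and conjoining it with $(x\neq y)$ again leaves exactly $\{(1,0)\}$. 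Each branch produces $(t \land \bar f)\in\closnexneq{\Gamma}$, finishing the argument.
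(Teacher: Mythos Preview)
Your argument is correct. Each of the five items is handled by the standard ``identify coordinates along a witnessing tuple'' trick, and your case analysis for item~\ref{lem-label-impl-stuff-notcompl-not-eps-valid} is complete and sound. Two small remarks: in item~\ref{lem-label-impl-stuff-compl-notepsvalid} you assert that a single $R\in\Gamma$ is neither $0$- nor $1$-valid; this is justified (though you do not say so) because a complementive relation that is $0$-valid is automatically $1$-valid and vice versa, so any witness to ``not $1$-valid'' already witnesses ``not $0$-valid'' as well. In item~\ref{lem-label-impl-stuff-1valid-0valid} your assumption that $\Gamma$ contains a non-complete relation is indeed harmless in the paper's setting, since every co-clone strictly above $\IBF$ contains such a relation.

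As to the comparison: the paper does not prove this proposition at all. It is stated as a folklore result and attributed to Creignou~et~al.~\cite{DBLP:books/daglib/0004131}, with no argument given. So your write-up is not a different route to the paper's proof; it \emph{is} a proof where the paper supplies none. If you want to align with the paper, you can simply cite the reference; if you want a self-contained exposition, what you have written is fine.
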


\subsection{Parameterized Implication Problem}
In this subsection, we consider the parameterized complexity of the implication problem ($\IMP$).
\problemdef{$\IMP(\Gamma)$}{a set of $\Gamma$-formulas $\Phi$ and a $\Gamma$-formula $\alpha$}{is $\Phi\models\alpha$ true}
For $\pIMP$, the parameterized version of $\IMP$, we consider the parameter $k\in\{|\Phi|,|\alpha|\}$, and also write $\pIMP(\Gamma,k)$.
The following corollary is due to Schnoor and Schnoor {\cite[Theorem~6.5]{DBLP:conf/dagstuhl/SchnoorS08}}.
They study a restriction of our problem $\IMP$, where $|\Phi|=1$.
\begin{corollary}
	Let $\Gamma$ be a CL.
	$\IMP(\Gamma)$ is in $\P$ when $\Gamma$ is Schaefer and $\co\NP$-complete otherwise.
\end{corollary}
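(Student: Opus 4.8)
The plan is to lift the classification of Schnoor and Schnoor~\cite[Theorem~6.5]{DBLP:conf/dagstuhl/SchnoorS08}, which concerns the special case $|\Phi|=1$, to arbitrary finite sets $\Phi$ of $\Gamma$-formulas. The key observation is that for $\Phi=\{\varphi_1,\dots,\varphi_m\}$ we have $\Phi\models\alpha$ if and only if the single formula $\varphi:=\varphi_1\wedge\cdots\wedge\varphi_m$ satisfies $\varphi\models\alpha$, and that $\varphi$ is again a $\Gamma$-formula: a $\Gamma$-formula is by definition a conjunction of constraints over relations in $\Gamma$, so a conjunction of such conjunctions is one as well, with $|\enc{\varphi}|$ linear in $|\enc{\Phi}|$. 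Hence $(\Phi,\alpha)\mapsto(\varphi,\alpha)$ is a polynomial-time (in fact logspace) reduction from $\IMP(\Gamma)$ to the restricted problem of~\cite{DBLP:conf/dagstuhl/SchnoorS08}, while conversely that restricted problem is literally a sub-case of $\IMP(\Gamma)$.

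For the \emph{upper bound} I would argue in two parts. First, $\IMP(\Gamma)\in\co\NP$ holds for every $\Gamma$, Schaefer or not: a witness for $\Phi\not\models\alpha$ is an assignment to $\Vars{\Phi}\cup\Vars{\alpha}$ satisfying every constraint of every formula in $\Phi$ while falsifying some constraint of $\alpha$, and such a witness is verifiable in polynomial time. Second, when $\Gamma$ is Schaefer, feeding the instance $(\varphi,\alpha)$ obtained from the reduction into the polynomial-time algorithm of~\cite[Theorem~6.5]{DBLP:conf/dagstuhl/SchnoorS08} decides $\Phi\models\alpha$ in polynomial time, placing $\IMP(\Gamma)$ in $\P$.

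For the \emph{lower bound}, when $\Gamma$ is not Schaefer the restricted implication problem with $|\Phi|=1$ is already $\co\NP$-hard by~\cite[Theorem~6.5]{DBLP:conf/dagstuhl/SchnoorS08}; since this is a special case of $\IMP(\Gamma)$, hardness transfers immediately, and together with the membership above we obtain $\co\NP$-completeness.

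I do not expect a genuine obstacle here; the statement is essentially a bookkeeping consequence of the Schnoor--Schnoor dichotomy. The only two points that need to be stated carefully are (i) that taking the conjunction keeps us inside the fragment, so the reduction is language-preserving, and (ii) that $\co\NP$-membership is a generic fact independent of Schaefer-ness, so that the Schnoor--Schnoor algorithm is invoked only for the $\P$-membership in the Schaefer case.
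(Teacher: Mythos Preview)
Your proposal is correct and matches the paper's approach: the paper gives no explicit proof at all, simply attributing the statement to Schnoor and Schnoor~\cite[Theorem~6.5]{DBLP:conf/dagstuhl/SchnoorS08} and noting that their result is for the restriction $|\Phi|=1$. Your argument is precisely the routine bookkeeping needed to bridge that restriction---observing that the conjunction of a finite set of $\Gamma$-formulas is again a $\Gamma$-formula, so the general case reduces in polynomial time to the single-formula case---together with the generic $\co\NP$ upper bound; there is nothing to add.
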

Consequently, the parameterized problem $\p\IMP(\Gamma, k)$ is $\FPT$ when $\Gamma$ is Schaefer and $k\in \{|\Phi|, |\alpha|\}$.
We consider the cases when $\Gamma$ is not Schaefer.
In the following, we differentiate the restrictions on $\Phi$ from the ones on $\alpha$.
That is, we introduce a technical variant, $\IMP(\Gamma',\Gamma)$ of the implication problem. 
An instance of $\IMP{(\Gamma',\Gamma)}$ is a tuple $(\Phi,\alpha)$, where $\Phi$ is a set of $\Gamma'$-formulas and $\alpha$ is a $\Gamma$-formula.
The following corollary also follows from the work of Schnoor and Schnoor {\cite[Theorem~{6.5}]{DBLP:conf/dagstuhl/SchnoorS08}}.
\begin{corollary}\label{cor:IMP-variant}
	Let $\Gamma$ and $\Gamma'$ be non-Schaefer CLs. 
	If $\Gamma'\subseteq \clos{\Gamma}$ then $\IMP(\Gamma',\Gamma)\preduction \IMP(\Gamma)$.
\end{corollary}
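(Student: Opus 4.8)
The plan is to transform an instance $(\Phi,\alpha)$ of $\IMP(\Gamma',\Gamma)$ into an instance $(\Phi'',\alpha)$ of $\IMP(\Gamma)$ by rewriting every $\Gamma'$-formula in $\Phi$ as an equivalent $\Gamma$-formula, while leaving the $\Gamma$-formula $\alpha$ untouched. First I would note that since $\Gamma$ is not Schaefer it is in particular neither ess.pos.\ nor ess.neg.: a positive clause has no negative literal and a unit negative clause exactly one, so an ess.pos.\ relation is dual-Horn, and dually an ess.neg.\ relation is Horn. Hence Proposition~\ref{lem:impl-essPosNeg_old} (equivalently Lemma~\ref{lem:equality_available}) applies and gives $\clos{\Gamma} = \closneq{\Gamma}$; combined with the hypothesis $\Gamma'\subseteq\clos{\Gamma}$ this yields $\Gamma'\subseteq\closneq{\Gamma}$. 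Thus for every relation $R'\in\Gamma'$ there is a fixed $\Gamma$-formula $\phi_{R'}(\bar x,\bar y)$ (no equality) with $R'(\bar x)\equiv\exists\bar y\,\phi_{R'}(\bar x,\bar y)$, and since $\Gamma'$ and $\Gamma$ are both finite, these gadgets have constant size.

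Next I would build $\Phi''$ as follows: for each $\Gamma'$-constraint $R'(x_{i_1},\dots,x_{i_k})$ occurring in some $\varphi\in\Phi$, replace it by $\phi_{R'}$ with the $x_{i_j}$ substituted and its existential variables renamed to fresh variables that are distinct across all constraints and disjoint from $\Vars{\Phi}\cup\Vars{\alpha}$; then drop the existential quantifiers, so that $\varphi$ becomes a genuine $\Gamma$-formula. This is computable in polynomial (indeed linear) time, so it is an $\preduction$-reduction provided it preserves the answer.

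For correctness I would establish $\Phi''\models\alpha \iff \Phi\models\alpha$ by a model correspondence. In the ``$\Leftarrow$'' direction, any assignment $\sigma$ over $\Vars{\Phi}\cup\Vars{\alpha}$ with $\sigma\models\Phi$ satisfies each constraint $R'(\dots)$ and hence extends over the fresh variables so as to satisfy the corresponding $\phi_{R'}$; since the fresh blocks are pairwise disjoint, these extensions combine into a single assignment $\sigma^*\models\Phi''$, so $\sigma^*\models\alpha$, and as $\alpha$ mentions no fresh variable, $\sigma\models\alpha$. The ``$\Rightarrow$'' direction is just the restriction of this: a model of $\Phi''$ restricted to $\Vars{\Phi}\cup\Vars{\alpha}$ is a model of $\Phi$, hence of $\alpha$. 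This gives the equivalence and therefore the reduction.

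The main obstacle I expect is not a single hard step but getting the bookkeeping around the fresh (formerly existentially quantified) variables exactly right: one must ensure that the eliminated quantifier variables are mutually distinct and do not clash with the claim's variables, so that ``forgetting'' the quantifiers is sound for the implication relation. A secondary point worth stating explicitly is the passage to the equality-free case — i.e.\ $\clos{\Gamma}=\closneq{\Gamma}$ for non-Schaefer $\Gamma$ — since without it the construction would only land in $\IMP(\Gamma\cup\{=\})$; but this is precisely what Lemma~\ref{lem:equality_available} (resp.\ Proposition~\ref{lem:impl-essPosNeg_old}) supplies.
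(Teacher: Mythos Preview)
Your argument is correct. The paper itself does not supply a proof of this corollary; it simply attributes the statement to Schnoor and Schnoor~\cite[Theorem~6.5]{DBLP:conf/dagstuhl/SchnoorS08}. Your direct construction is the natural one and is essentially the same reduction the paper spells out for $\ARG$ in Lemma~\ref{lem:technical-variants}: rewrite each $\Gamma'$-constraint as an existential $\Gamma$-formula (using $\clos{\Gamma}=\closneq{\Gamma}$, which holds because a non-Schaefer $\Gamma$ is neither ess.pos.\ nor ess.neg., exactly as you observe), introduce fresh pairwise-disjoint witness variables avoiding $\Vars{\alpha}$, and drop the quantifiers.

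One cosmetic slip: the two directions of your correctness argument are interchanged. The argument you place under ``$\Leftarrow$'' (start from $\sigma\models\Phi$, extend to $\sigma^*\models\Phi''$, use the hypothesis to obtain $\sigma^*\models\alpha$, then restrict back to conclude $\sigma\models\alpha$) is in fact the proof of $\Phi''\models\alpha \Rightarrow \Phi\models\alpha$, i.e.\ the ``$\Rightarrow$'' direction of the equivalence $\Phi''\models\alpha \iff \Phi\models\alpha$ as you stated it; and what you call ``$\Rightarrow$'' is the other direction. The content of both directions is right, only the labels are swapped.
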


Regarding non-Schaefer CLs, it turns out that the parameter $\alpha$ does not make the problem any easier. 
One possible explanation for this hardness is that the formulas in $\Phi$ and $\alpha$ do not necessarily share a set of variables. 

	


	



\begin{lemma}\label{lem:imp-alpha}
	The problem $\pIMP(\Gamma, |\alpha|)$ is $\para\co\NP$-complete when the CL $\Gamma$ is not Schaefer.
\end{lemma}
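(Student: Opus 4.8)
The statement has two parts: membership in $\para\co\NP$ and $\para\co\NP$-hardness, both for non-Schaefer $\Gamma$.

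For membership, I would use the slice-wise characterization noted right after the definition of $\para\mathcal C$: it suffices to show that for every constant $c$, the $c$-slice $\pIMP(\Gamma,|\alpha|)_c$ — i.e., instances where $\alpha$ has encoding length exactly $c$ — lies in $\co\NP$. Since $\alpha$ is fixed-size, it has a bounded number of variables and can be handled trivially; the whole problem $\IMP(\Gamma)$ is already in $\co\NP$ by the corollary attributed to Schnoor and Schnoor (the complement is in $\NP$: guess an assignment satisfying $\Phi$ but falsifying $\alpha$). Restricting $\alpha$ only makes things easier, so each slice is in $\co\NP$, giving membership in $\para\co\NP$ with $\pi$ the identity-style precomputation. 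This direction is routine.

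The real work is $\para\co\NP$-hardness, which I would establish by an fpt-reduction from $\pIMP(\Gamma', |\Phi|)$ or directly from co-$\SAT(\Gamma')$ for a suitable non-Schaefer $\Gamma'$ — concretely, I would reduce from the (classical) $\co\NP$-hard problem $\IMP(\Gamma)$ itself (or its restriction with $|\Phi|=1$, which is already $\co\NP$-hard by Schnoor–Schnoor) and massage the instance so that the \emph{claim} becomes of constant size, pushing all the complexity into $\Phi$. The key idea, hinted at by the paper's remark that ``the formulas in $\Phi$ and $\alpha$ do not necessarily share a set of variables'': given an instance $(\Phi,\alpha)$ of $\IMP(\Gamma)$, introduce one fresh variable $z$, and add to $\Phi$ formulas that force $z$ to be equivalent to $\alpha$ (or force $z \to \alpha$ and $\alpha \to z$), so that $\Phi \cup \{\text{these}\} \models \alpha$ iff $\Phi' \models z$, where now the claim $z$ is a single positive literal of constant size. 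To implement the linking formulas as $\Gamma$-formulas I would invoke the implementation machinery from the excerpt — Lemmas~\ref{lem:impl-essPosNeg}, \ref{lem:equality_available}, \ref{lem:impl-epsilon-valid}, and Proposition~\ref{lem:impl-stuff} — to realize equality, $\T$, $\F$, and $(x\neq y)$ within $\closnexneq{\Gamma}$, handling the case split on whether $\Gamma$ is $\varepsilon$-valid, ess.pos./ess.neg., complementive, Horn/dual-Horn, etc.

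The parameter of the produced instance is $|z| = O(1)$, which is bounded by a constant (hence by any computable $g$ of the original trivial parameter), so the reduction is a valid fpt-reduction, and the resulting problem is $\para\co\NP$-hard. The main obstacle I anticipate is the case analysis needed to express the ``$z \leftrightarrow \alpha$'' linkage purely in $\closnexneq{\Gamma}$ uniformly across all non-Schaefer $\Gamma$: when $\Gamma$ is $\varepsilon$-valid one cannot implement $\T$ or $\F$ and must be more careful (e.g.\ only a one-directional implication may be available, or one must keep the claim $\varepsilon$-valid-compatible), and the essentially positive / essentially negative subcases force using the ``$(x=y)\land t \land \lnot f$''-style gadgets of Lemma~\ref{lem:impl-essPosNeg} rather than plain equality. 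Verifying that in each branch the reduction preserves the implication relation — in particular that the fresh variables, being existentially introduced, do not accidentally make $\Phi'$ inconsistent or trivially entail $z$ — is the delicate part; the rest is bookkeeping.
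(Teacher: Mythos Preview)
Your membership argument matches the paper's one-line justification.

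For hardness the paper takes a different and much lighter route: it reduces from \emph{unsatisfiability} rather than from $\IMP$ with an arbitrary claim. The key observation is that for a fresh variable $x$ and a constant-size $\alpha$ depending only on $x$ that can be falsified, one has $\Phi\models\alpha$ iff $\Phi$ is unsatisfiable (any model of $\Phi$ extends freely on $x$ to an $\alpha$-falsifying assignment). The paper's case analysis is then only about which tiny $\alpha$ is available in $\closnexneq{\Gamma}$: take $\alpha=\T(x)$, $\alpha=\F(x)$, or $\alpha=(x\neq x)$ according to Proposition~\ref{lem:impl-stuff} and Lemma~\ref{lem:impl-epsilon-valid}, and invoke Corollary~\ref{cor:IMP-variant} to let $\Phi$ range over a suitable $\Gamma'\subseteq\clos{\Gamma}$. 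No encoding of the original claim is ever needed.

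Your encode-$\alpha$-into-$\Phi$ plan is heavier and runs into a concrete obstruction you only half-anticipate. The linking constraints you need --- $z\to C$, $C\to z$, or Tseitin gates $v\leftrightarrow(u\wedge C)$ for $\Gamma$-constraints $C$ --- must themselves lie in $\clos{\Gamma}$, and in the complementive non-$\varepsilon$-valid non-Schaefer case ($\clos{\Gamma}=\IN_2$) they do not: the relation $\{(z,\vec x)\mid z=0\text{ or }\vec x\in C\}$ is not complementive for any nontrivial complementive $C$ (pick $\vec x\notin C$; then $(0,\vec x)$ belongs to the relation but its bitwise complement $(1,\bar{\vec x})$ does not, since complementivity of $C$ gives $\bar{\vec x}\notin C$ as well). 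The implementation lemmas you cite hand you only $=,\neq,\T,\F$, which together with $\Gamma$ still stay inside $\IN_2$, so the gates are genuinely unavailable and your reduction cannot be written as a $\Gamma$-formula. In exactly this case the paper's choice $\alpha=(x\neq x)$ works instantly.
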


\begin{proof}
	Membership follows because the classical problem is in $\co\NP$.
	To achieve the lower bound, we reduce from the unsatisfiability problem. 
	That is, given a formula $\Phi$, the question is whether $\Phi$ is unsatisfiable.
	Moreover, checking unsatisfiability is $\co\NP$-complete for non-Schaefer languages (follows by Schaefer's~\cite{DBLP:conf/stoc/Schaefer78} $\SAT$ classification).

	We will inherently use Corollary~\ref{cor:IMP-variant} and make a case distinction as whether $(\Phi,\alpha)$ is $1$-valid, $0$-valid or complementive.
	\begin{description}
		\item[Case 1.] Let $\Gamma$ be $1$-valid and not $0$-valid. 
		We prove that for some well chosen $1$-valid language $\Gamma'$ and a $\Gamma'$-formula $\Phi$, the problem $\pIMP(\Gamma',\Gamma,|\alpha|)$ is $\para\co\NP$-hard. 
		According to item (\ref{lem-label-impl-stuff-1valid-not0valid}.) in Proposition~\ref{lem:impl-stuff}, $\T\in \closnexneq{\Gamma'}$.
		Let $\alpha= \T(x)$ and $\Phi$ be a $\Gamma'$-formula where $x$ does not appear.
		Then $\Phi\models \alpha$ if and only if $\Phi$ is unsatisfiable.
		This is because, if $\Phi$ is satisfiable then there is an assignment $s$ such that $s \models \psi$.
		This gives a contradiction because the assignment $s'$ that extends $s$ by $s'(x)=0$ satisfies that $s'\models \Phi$ and $s'\not\models \alpha$.
		
		\item[Case 2.] Let $\Gamma$ be $0$-valid and not $1$-valid. 
		According to item (\ref{lem-label-impl-stuff-0valid-not1valid}.) in Proposition~\ref{lem:impl-stuff}, $\F\in \closnexneq{\Gamma'}$.
		This case is similar to Case 1, as we take $\alpha=\F(x)$ and $\Phi$ a $\Gamma'$-formula not containing $x$.

		\item[Case 3.] Let $\Gamma$ be complementive but not $\varepsilon$-valid. 
		We prove that for some well chosen complementive language $\Gamma'$ and a $\Gamma$-formula $\alpha$, the problem $\pIMP(\Gamma',\Gamma,|\alpha|)$ is $\para\co\NP$-hard. 
		According to item (\ref{lem-label-impl-stuff-compl-notepsvalid}.) in Proposition~\ref{lem:impl-stuff}, $x\not = y \in \closnexneq{\Gamma'}$.
		Then, $\co\NP$-hardness follows, as for any set of $\Gamma'$-formulas $\Phi$, $\Phi \models (x\neq x)$ if and only if $\Phi$ is unsatisfiable.
	
		\item[Case 4.] Let $\Gamma$ be $0$- and $1$-valid.
		By Lemma~\ref{lem:impl-epsilon-valid}~(\ref{label-lem-impl-epsilon-valid-nothorn-notdualhorn-notcomplementive}.)/(\ref{label-lem-impl-epsilon-valid-nothorn-notdualhorn-complementive}.), we have access to `$\neq$'.
		We can state a reduction from the complement of $\SAT$ to $\pIMP(\Gamma,|\alpha|)$ as in Case 3.
		That is, $\Phi$ is unsatisfiable if and only if $\Phi\models x\neq x$ for a fresh variable $x$.\qedhere
	\end{description}
\end{proof}

Note that regarding the parameter $|\Phi|$, the problem $\pIMP(\Gamma,|\Phi|)$ is $\FPT$ if $\Gamma$ is Schaefer.
Otherwise, only $\co\NP$-membership is clear.

\section{Parameter: Size of the Claim $\alpha$}
In this section we discuss the complexity results regarding the parameter $\alpha$, that is, the number of variables and the encoding size of $\alpha$.
It turns out that the computational complexity of the argumentation problems is hidden in the structure of the underlying CL. 
That is, in many cases, considering the claim size as a parameter does not lower the complexity. 
This is proved by noting that certain slices of the parameterized problems already yield hardness results.

\begin{theorem}\label{theorem:arg-alpha}
$\p\ARG(\Gamma, |\alpha|)$, for a CL $\Gamma$, is
	\begin{enumerate}
	\item $\FPT$ if $\Gamma$ is Schaefer and $\varepsilon$-valid,\label{label-thm-arg-alpha-fpt}
    \item $\para\NP$-complete if $\Gamma$ is Schaefer and neither $\varepsilon$-valid, nor strictly ess.pos., nor strictly ess.neg.,\label{label-thm-arg-alpha-paraNP}
	\item in $\W{1}$ if $\Gamma$ is strictly ess.neg.\ and strictly ess.pos.,\label{label-thm-arg-alpha-w1}
	\item in $\W{2}$ if $\Gamma$ is strictly ess.neg.\ or strictly ess.pos.,\label{label-thm-arg-alpha-w2}
	\item $\para\co\NP$-complete if $\Gamma$ is not Schaefer and $\varepsilon$-valid, and\label{label-thm-arg-alpha-paraconp}
	\item $\para\SigmaP$-complete if $\Gamma$ is not Schaefer and not $\varepsilon$-valid.\label{label-thm-arg-alpha-parasigmap2}
\end{enumerate}
\end{theorem}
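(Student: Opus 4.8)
The plan is to prove each of the six cases separately, exploiting the fact that $\ARG$ parameterized by $|\alpha|$ can equivalently be taken with respect to $|\Vars{\alpha}|$ (by Lemma~\ref{lem:encoding-number-variables}), so a slice fixes the claim to one of finitely many formulas over a bounded variable set; moreover, by the slice remark, for $\para$-membership it suffices to bound the $c$-slice.

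For \eqref{label-thm-arg-alpha-fpt} ($\Gamma$ Schaefer and $\varepsilon$-valid): here I would argue that an argument essentially always exists in a trivial or easily-checkable way. If $\Gamma$ is $0$-valid then the all-zero assignment satisfies every $\Gamma$-formula, so any nonempty $\Delta$ is consistent; the task reduces to checking, for subsets $\Phi$ of bounded size, whether $\Phi\models\alpha$, and since $\Gamma$ is Schaefer the implication problem $\IMP(\Gamma)$ is in $\P$ (the corollary of Schnoor--Schnoor). A bounded $|\Vars{\alpha}|$ restricts the relevant part of the support, and one shows a small support suffices (or none exists), yielding an $\FPT$ algorithm. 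The $1$-valid case is symmetric; the genuinely careful point is verifying that subset-minimality can be handled by shrinking a candidate support in polynomial time using the $\P$-time implication test.

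For \eqref{label-thm-arg-alpha-paraNP}, \eqref{label-thm-arg-alpha-paraconp}, and \eqref{label-thm-arg-alpha-parasigmap2} (the $\para$-complete cases): membership is immediate since the unparameterized $\ARG$ lies in $\SigmaP$ (Parsons~et~al.), $\ARGcheck$-style subroutines put it in $\co\NP$ or $\DP$ where the hypotheses further restrict, and for the $\varepsilon$-valid-but-not-Schaefer case the consistency check is free while implication is $\co\NP$-hard, placing the slice in $\co\NP$. For hardness, the idea is to show a fixed slice ($|\alpha|$ equal to a small constant) is already hard for the target class: using the implementation lemmas (Lemma~\ref{lem:impl-essPosNeg}, Lemma~\ref{lem:impl-epsilon-valid}, Proposition~\ref{lem:impl-stuff}) one obtains access to unary relations $\T,\F$, to $(x=y)$, or to $(x\neq y)$ inside $\closnexneq{\Gamma}$, and then reduces from the appropriate classical hard problem: $\SAT$/implication ($\NP$ in case~\eqref{label-thm-arg-alpha-paraNP}), $\UNSAT$ ($\co\NP$ in case~\eqref{label-thm-arg-alpha-paraconp}, echoing Lemma~\ref{lem:imp-alpha}), and the $\SigmaP$-completeness of $\ARG$ for non-Schaefer non-$\varepsilon$-valid languages from Creignou~et~al.\ for case~\eqref{label-thm-arg-alpha-parasigmap2}. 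In each case a constant-size claim (e.g.\ $\alpha = \T(x)$, $\F(x)$, or a single literal over a fresh variable) plus a suitably coded knowledge-base does the job, and the minimality condition is arranged to be automatically satisfied or trivial to enforce.

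For \eqref{label-thm-arg-alpha-w1} and \eqref{label-thm-arg-alpha-w2} (the strictly ess.\ pos./neg.\ upper bounds): this is where I expect the main obstacle. The strategy is to reduce $\p\ARG(\Gamma,|\alpha|)$ to a weighted satisfiability problem $\wsat{t}{d}$ and invoke Proposition~\ref{theorem-wsat}. When $\Gamma$ is strictly ess.\ negative, a support is built from negative clauses and positive unit literals, so ``which variables are set true'' is a weight-bounded choice, and selecting a minimal support that still implies the (bounded) claim becomes a weighted satisfiability instance of bounded product-of-sums depth — the claim's small size is essential to keep the depth constant. Being both strictly ess.\ pos.\ and strictly ess.\ neg.\ further constrains the structure so that the resulting formula lies in $\Gamma_{1,d}$, giving $\W1$; having only one of the two properties costs one more alternation, giving $\W2$. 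The delicate parts are (i) encoding the subset-minimality requirement without blowing up the quantifier/depth structure, and (ii) ensuring the weight parameter of the $\wsat{t}{d}$ instance is bounded by a function of $|\alpha|$ alone — I would handle the latter by showing that in these fragments any minimal support has size bounded in $|\Vars{\alpha}|$.
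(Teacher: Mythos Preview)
Your outline is broadly the paper's, but a few points deserve correction or comparison.

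For case~\eqref{label-thm-arg-alpha-fpt} the paper simply cites that the classical $\ARG(\Gamma)$ is already in $\P$ when $\Gamma$ is Schaefer and $\varepsilon$-valid \cite[Thm.~5.3]{DBLP:journals/tocl/CreignouE014}. Your argument is over-engineered: since every subset of $\Delta$ is $\varepsilon$-valid and hence consistent, an argument for $\alpha$ exists iff $\Delta\models\alpha$, a single call to $\IMP(\Gamma)\in\P$. No ``bounded subsets'' or ``small support suffices'' analysis is needed, and your sketch of that part is somewhat muddled.

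For cases~\eqref{label-thm-arg-alpha-paraNP}, \eqref{label-thm-arg-alpha-paraconp}, \eqref{label-thm-arg-alpha-parasigmap2} your plan coincides with the paper's. In case~\eqref{label-thm-arg-alpha-paraNP} the concrete reduction is from $\PosOneThreeSAT$ via a chain-of-equalities gadget over fresh variables $c_1,\dots,c_{k+1}$, with a constant-size $\alpha$ of the form $(c_1=c_{k+1})\land t\land\neg f$ (or a $\neq$-variant); your looser ``reduce from $\SAT$'' is the right spirit but will need this kind of gadget to force a support to pick one clause-formula per clause.

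For cases~\eqref{label-thm-arg-alpha-w1} and \eqref{label-thm-arg-alpha-w2}, two remarks. First, your concern about encoding subset-minimality is misplaced: $\ARG$ is an existence problem, and any consistent $\Phi\subseteq\Delta$ with $\Phi\models\alpha$ contains a minimal such subset, so minimality is free and never enters the reduction. Second, the paper's tactics differ slightly from yours. For \eqref{label-thm-arg-alpha-w1} it observes that a $\Gamma$ which is both strictly ess.pos.\ and strictly ess.neg.\ makes every $\Gamma$-formula a term (conjunction of literals), and then reduces to $\clique$, using that (a) a support of size at most $k=|\alpha|$ suffices and (b) a set of terms is consistent iff it is pairwise consistent. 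For \eqref{label-thm-arg-alpha-w2} the paper does follow a $\WSAT$ route as you propose, but the weighted variables index bounded-size \emph{subsets of $\Delta$} (not truth values of original variables); the key structural lemmas, which you correctly anticipate, are that a support of size at most $r\cdot k$ suffices when clauses have size at most $r$, and that inconsistency of a candidate support is always witnessed by a subset of size at most $r+1$.
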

\begin{proof}
(\ref{label-thm-arg-alpha-fpt}.) The classical problem $\ARG(\Gamma)$ is already in $\P$ for this case \cite[Thm~5.3]{DBLP:journals/tocl/CreignouE014}.
(\ref{label-thm-arg-alpha-paraNP}.) The upper bound follows because the unparameterized problem $\ARG(\Gamma)$ is in $\NP$ \cite[Prop~5.1]{DBLP:journals/tocl/CreignouE014}. 
The lower bound is proven in Lemmas~\ref{lem:argParaNP-T.F.EQ}, \ref{lem:argParaNP-EQ-NEQ} and \ref{lem:argParaNP-essPosNeg}.
(\ref{label-thm-arg-alpha-w1}.) is proven in Lemma~\ref{lemW1}.
(\ref{label-thm-arg-alpha-w2}.) is proven in Lemma~\ref{lemW2}.

For (\ref{label-thm-arg-alpha-paraconp}.) (resp., (\ref{label-thm-arg-alpha-parasigmap2}.)), the membership follows because the classical problem is in $\co\NP$ (resp., $\SigmaP$) \cite[Thm~5.3]{DBLP:journals/tocl/CreignouE014}. 
For hardness of $\p\ARG(\Gamma,|\alpha|)$ when $\Gamma$ is $\varepsilon$-valid, notice that, since $\Delta$ is $\varepsilon$-valid, an instance $(\Delta,\alpha)$ of $\p\ARG$ admits an argument if and only if $\Delta\models \alpha$.
The result follows from Lemma~\ref{lem:imp-alpha} because the implication problem is still $\para\co\NP$-hard.
Finally, when $\Gamma$ is not Schaefer and not $\varepsilon$-valid, in the proofs of Creignou~et~al.~\cite[Prop.~5.2]{DBLP:journals/tocl/CreignouE014} the constructed reductions define $\alpha$ whose length is 2 or 3.
Accordingly, either the 2-slice or the 3-slice is $\SigmaP$-hard.
This gives the desired hardness result.  
\end{proof}

For technical reasons we introduce the following variant of the argumentation existence problem.
\problemdef{$\ARG(\Gamma, R)$}{a set of $\Gamma$-formulas $\Delta$ and an  $R$-formula $\alpha$}{$\exists$ $\Phi\subseteq \Delta$ s.t.\ $\argument$ is an argument in $\Delta$}

\begin{lemma}\label{lem:technical-variants}
Let $\Gamma, \Gamma'$ be two CLs and $R$ a Boolean relation. 
If $\Gamma' \subseteq \closneq{\Gamma}$ and $R\in\closnexneq{\Gamma}$, then $\ARG(\Gamma', R) \leqlogm \ARG(\Gamma)$.
\end{lemma}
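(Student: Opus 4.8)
The plan is to construct a polynomial-time many-one reduction that takes an instance $(\Delta, \alpha)$ of $\ARG(\Gamma', R)$ and produces an instance $(\Delta'', \alpha'')$ of $\ARG(\Gamma)$, using the two hypotheses to rewrite both the knowledge-base formulas and the claim into genuine $\Gamma$-formulas. The first step is to handle the claim: since $R \in \closnexneq{\Gamma}$, the $R$-formula $\alpha = R(y_1,\dots,y_r)$ can be rewritten — using \emph{only} a $\Gamma$-formula, with no equality and no existential quantifiers — as a conjunction of $\Gamma$-constraints over exactly the variables $y_1,\dots,y_r$ (possibly with repetitions among the argument positions, which is allowed in constraints). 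Call this $\Gamma$-formula $\alpha''$; crucially it is logically equivalent to $\alpha$, so it has the same models, and in particular $\Phi \models \alpha \iff \Phi \models \alpha''$ for any $\Phi$. The second step handles the knowledge-base: each formula $\varphi \in \Delta$ is a $\Gamma'$-formula, and $\Gamma' \subseteq \closneq{\Gamma}$ means every constraint of $\varphi$ can be replaced by a $\Gamma$-formula with existentially quantified fresh variables but no equality. We replace each $\varphi$ by such an equivalent $\Gamma$-formula $\varphi''$, taking care to use \emph{disjoint} sets of fresh existential variables for distinct formulas of $\Delta$ (and to drop the existential quantifiers, simply leaving the fresh variables free — this is the standard move, since for subset-minimality and consistency only satisfiability over the original variables matters). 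Set $\Delta'' \dfn \{\varphi'' \mid \varphi \in \Delta\}$.

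\textbf{Correctness.} I would then argue that $(\Delta,\alpha) \in \ARG(\Gamma',R)$ iff $(\Delta'',\alpha'') \in \ARG(\Gamma)$ via the bijection $\Phi \mapsto \Phi''$ between subsets of $\Delta$ and subsets of $\Delta''$. Since each $\varphi''$ is, after projecting away its private fresh variables, logically equivalent to $\varphi$, and the fresh-variable sets are pairwise disjoint and disjoint from $\Vars{\alpha}$, a subset $\Phi \subseteq \Delta$ is consistent iff $\Phi''$ is consistent, and $\Phi \models \alpha$ iff $\Phi'' \models \alpha''$ (here one uses that $\alpha''$ only mentions the original variables of $\alpha$, so the fresh variables are irrelevant to entailment of the claim). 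Consequently $\Phi$ is subset-minimal with respect to entailing $\alpha$ iff $\Phi''$ is subset-minimal with respect to entailing $\alpha''$. Hence $(\Phi,\alpha)$ is an argument in $\Delta$ iff $(\Phi'',\alpha'')$ is an argument in $\Delta''$, which gives the equivalence of the two instances. Finally, the reduction is computable in polynomial (indeed logarithmic-space) time: both $R$ and each relation in $\Gamma'$ are fixed finite objects, so their $\closnexneq{\Gamma}$- / $\closneq{\Gamma}$-implementations are fixed finite $\Gamma$-formulas that can be hard-wired, and building $\Delta''$ and $\alpha''$ is just a constraint-by-constraint substitution with renaming of fresh variables.

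\textbf{Main obstacle.} The routine parts (substitution, variable renaming) are straightforward; the subtle point is that the implementation of $R$ for the claim must avoid both equality and existential variables — this is exactly why the hypothesis asks for $R \in \closnexneq{\Gamma}$ rather than merely $R \in \clos{\Gamma}$. If the claim's implementation introduced existential variables, the rewritten claim $\alpha''$ would involve extra free variables, and then $\Phi'' \models \alpha''$ would be a strictly stronger requirement than $\Phi \models \alpha$ (the support would have to pin down the auxiliary variables), breaking the equivalence; if it introduced equality constraints $(x=y)$, those are not $\Gamma$-formulas in general, so $\alpha''$ would not be a legal $\ARG(\Gamma)$ claim. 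So the heart of the argument is to check carefully that, with $\closnexneq{\Gamma}$-implementability of $R$ and $\closneq{\Gamma}$-expressibility of $\Gamma'$ in hand, the equivalences "$\Phi$ consistent $\iff$ $\Phi''$ consistent" and "$\Phi \models \alpha \iff \Phi'' \models \alpha''$" both go through unconditionally, and that minimality is preserved by the bijection $\Phi \mapsto \Phi''$.
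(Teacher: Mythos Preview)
Your proposal is correct and follows essentially the same approach as the paper: rewrite the claim using the $\closnexneq{\Gamma}$-implementation of $R$, rewrite each knowledge-base formula via its $\closneq{\Gamma}$-representation and drop the existential quantifiers (using disjoint fresh variables), then observe that the map $\Phi \mapsto \Phi''$ preserves consistency, entailment of the claim, and hence subset-minimality. The paper's proof is in fact terser than yours---it states the mapping in two sentences and leaves the correctness verification implicit---so your explicit discussion of the bijection, the role of disjoint fresh variables, and the reason why $\closnexneq{\Gamma}$ (rather than merely $\clos{\Gamma}$) is needed for the claim adds useful rigour without deviating from the intended argument.
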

\begin{proof}
Let $(\Delta, \alpha)$ be an instance of the first problem, where $\Delta = \{\,\delta_i \mid i \in I\,\}$ and $\alpha = R(x_1, \dots, x_k)$. 
We map this instance to $(\Delta', \alpha')$, where $\Delta' = \{\delta'_i \mid \delta_i \in \Delta\}$ and $\alpha'$ is a $\Gamma$-formula equivalent to $R(x_1, \dots, x_k)$ (which exists because $R\in\closnexneq{\Gamma}$). 
For $i \in I$ we obtain $\delta'_i$ from $\delta_i$ by replacing $\delta_i$ by an equivalent $\Gamma$-formula with existential quantifiers (such a representation exists since $\Gamma' \subseteq \closneq{\Gamma}$) and deleting all existential quantifiers.
\end{proof}
Note that the previous result is only used to show lower bounds for specific slices and, accordingly, is stated in the classical setting.

\begin{lemma}\label{lem:argParaNP-T.F.EQ}
If the CL $\Gamma$ is neither affine, nor $\varepsilon$-valid, nor ess.pos., nor ess.neg., then $\p\ARG(\Gamma, |\alpha|)$ is $\para\NP$-hard.
\end{lemma}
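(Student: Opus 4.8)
The plan is to show $\para\NP$-hardness by exhibiting a fixed-size claim $\alpha$ (so that some slice of the parameterized problem is already $\NP$-hard) and reducing a known $\NP$-complete problem to that slice via Lemma~\ref{lem:technical-variants}. First I would invoke the hypotheses algebraically: since $\Gamma$ is neither $\varepsilon$-valid, nor ess.pos., nor ess.neg., Proposition~\ref{lem:impl-essPosNeg_old} (and the surrounding implementation results, Lemma~\ref{lem:impl-epsilon-valid} and Proposition~\ref{lem:impl-stuff}) give us access, inside $\closnexneq{\Gamma}$, to the relations $t$, $\neg f$, and either $(x=y)$ or $(x\neq y)$ — in fact $(x=y)\in\closneq{\Gamma}$ and $\clos\Gamma=\closneq\Gamma$. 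The extra assumption that $\Gamma$ is not affine is what makes the problem genuinely hard: for the three remaining Schaefer classes (Horn, dual-Horn, bijunctive) the classical $\ARG(\Gamma)$ problem is $\NP$-complete by Creignou~et~al.~\cite[Prop.~5.2]{DBLP:journals/tocl/CreignouE014}, whereas affine is the degenerate case that would otherwise collapse.

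The core step is to reverse-engineer the $\NP$-hardness reduction of Creignou~et~al.\ for $\ARG(\Gamma)$ when $\Gamma$ is Schaefer but not affine, and observe that in that construction the claim $\alpha$ can be taken to be of constant size (a single literal, or a constraint on one or two variables — the kind of relation supplied by $t$, $\neg f$, $(x=y)$, $(x\neq y)$). Concretely I would: (i) pick a small constraint language $\Gamma'\subseteq\closneq\Gamma$ that is Horn (or dual-Horn, or bijunctive) but not affine and for which $\ARG(\Gamma')$ is already $\NP$-hard with a bounded-size claim $R$, with $R\in\closnexneq\Gamma$; (ii) apply Lemma~\ref{lem:technical-variants} to get $\ARG(\Gamma',R)\leqlogm\ARG(\Gamma)$; (iii) note that this is a reduction landing in a single slice $|\alpha|=c$ of $\p\ARG(\Gamma,|\alpha|)$, hence that slice is $\NP$-hard, hence $\p\ARG(\Gamma,|\alpha|)$ is $\para\NP$-hard. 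The statement of the lemma itself says the detailed casework is actually farmed out across Lemmas~\ref{lem:argParaNP-T.F.EQ}, \ref{lem:argParaNP-EQ-NEQ}, \ref{lem:argParaNP-essPosNeg}; so here I would handle the sub-case that is natural to pin down with $t$, $\neg f$, and $(x=y)$ available — i.e.\ build an explicit $\Delta$ over $\Gamma'$ encoding, say, a $3$-colourability or monotone-SAT-type instance, with $\alpha$ a single unit constraint $t(z)$ on a designated "output" variable $z$, so that an argument for $\alpha$ in $\Delta$ exists iff the encoded instance is a yes-instance.

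The main obstacle I anticipate is arranging the subset-minimality condition (3) in the definition of an argument to behave correctly under the reduction: it is easy to force $\Phi\models\alpha$ and $\Phi$ consistent, but one must ensure that the \emph{minimal} supports correspond exactly to solutions of the source problem and that no spurious small support sneaks in. The standard trick is to make $\Delta$ "rigid" — every formula in $\Delta$ is needed, or the consistent subsets implying $\alpha$ are forced to have a prescribed shape — typically by using the $t$ and $\neg f$ gadgets to clamp auxiliary variables and by chaining implications so that dropping any formula breaks entailment of $\alpha$. A secondary technical point is verifying that the replacement of each $\Gamma'$-formula by an equivalent existentially-quantified $\Gamma$-formula with the quantifiers stripped (as in Lemma~\ref{lem:technical-variants}) does not enlarge the claim or destroy consistency/minimality; but since the quantified variables become fresh free variables appearing only locally, consistency and minimality are preserved, and $\alpha$ is untouched. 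Once these are in place the slice is $\NP$-hard and the $\para\NP$-hardness of $\p\ARG(\Gamma,|\alpha|)$ follows immediately.
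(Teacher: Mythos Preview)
Your high-level strategy matches the paper's: exhibit a constant-size claim $\alpha$, invoke the implementation lemmas to secure the needed relations in $\closnexneq{\Gamma}$ and $\closneq{\Gamma}$, and transfer hardness via Lemma~\ref{lem:technical-variants}. But the proposal has a genuine gap: you never specify the source problem or the construction of $\Delta$. The paper does not ``reverse-engineer'' the Creignou~et~al.\ reductions; it gives a direct reduction from $\PosOneThreeSAT$. For $\varphi = \bigwedge_{i=1}^{k}(x_i\lor y_i\lor z_i)$ one introduces fresh chain variables $c_1,\dots,c_{k+1}$ and, for each clause $i$, three formulas in $\Delta$ over $\{\T,\F,=\}$ encoding the three 1-in-3 assignments, each carrying the link $(c_i=c_{i+1})$. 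The claim is $\alpha=(c_1=c_{k+1})\land t\land\neg f$ (or the $\neq$-variant with an extra variable in case~(\ref{label-lem-impl-epsilon-valid-nothorn-notdualhorn-notcomplementive}.) of Lemma~\ref{lem:impl-epsilon-valid}). Any consistent $\Phi\subseteq\Delta$ implying $\alpha$ must contain at least one formula per clause to close the chain $c_1=\dots=c_{k+1}$, and consistency of the literal parts then forces a valid 1-in-3 assignment. This chain device is the missing ingredient; your alternative suggestion of $\alpha=\T(z)$ with a 3-colouring or monotone-SAT encoding is not worked out and would need something comparable to force every clause to be addressed.

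Two smaller corrections. First, your reading of ``not affine'' is off: it is not what supplies the hardness (the auxiliary language $\{\T,\F,=\}$ used in the paper \emph{is} affine, and Lemma~\ref{lem:argParaNP-EQ-NEQ} shows the affine case is equally $\para\NP$-hard). Rather, ``not affine'' is used only to discard case~(\ref{label-lem-impl-epsilon-valid-nothorn-notdualhorn-complementive}.) of Lemma~\ref{lem:impl-epsilon-valid}: a non-affine, non-Horn, non-dual-Horn $\Gamma$ cannot be complementive, so only cases~(\ref{label-lem-impl-epsilon-valid-nothorn-notdualhorn-notcomplementive}.) and (\ref{label-lem-impl-epsilon-valid-horn-or-dualhorn}.) need to be handled. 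Second, your concern about subset-minimality is unnecessary for $\ARG$: it is an \emph{existence} problem, and any consistent $\Phi\subseteq\Delta$ with $\Phi\models\alpha$ contains a minimal such subset. You only need the equivalence ``$\varphi$ is a yes-instance $\Leftrightarrow$ some consistent $\Phi\subseteq\Delta$ implies $\alpha$''.
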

\begin{proof}
We give a reduction from the $\NP$-complete problem $\PosOneThreeSAT$ such that $|\alpha|$ is constant. An instance of $\PosOneThreeSAT$ is a $\threeCNF$-formula with only positive literals, the question is to determine whether there is a satisfying assignment which maps exactly one variable in each clause to true. 
We make a case distinction according to the case (\ref{label-lem-impl-epsilon-valid-nothorn-notdualhorn-notcomplementive}.) and (\ref{label-lem-impl-epsilon-valid-horn-or-dualhorn}.) in Lemma~\ref{lem:impl-epsilon-valid}. 
Case (\ref{label-lem-impl-epsilon-valid-nothorn-notdualhorn-complementive}.) is not needed as if $\Gamma$ is not affine, not horn and not dual-Horn, 
then $\Gamma$ can not be complementive.
We first treat case (\ref{label-lem-impl-epsilon-valid-horn-or-dualhorn}.), that is, we have that $(x=y) \land t \land \neg f \in \closnexneq{\Gamma}$. We then show that the other two cases can be treated with minor modifications of the procedure.

Let $\varphi$ be an instance of $\PosOneThreeSAT$.
We first reduce $\varphi$ to and instance $(\Delta,\alpha)$ of $\ARG(\{\T,\F,=\}, (x=y) \land t \land \neg f)$, and then conclude with Lemmas~\ref{lem:impl-epsilon-valid} and \ref{lem:technical-variants}.
Given $\varphi = \bigwedge_{i=1}^k (x_i \lor y_i \lor z_i)$, an instance of $\PosOneThreeSAT$ and let $t,f,c_1, \dots, c_{k+1}$ be fresh variables.
We let $\Delta$ and $\alpha$ as following.
$$\begin{array}{r@{\,}l}
\Delta  =\; & \bigcup_{i=1}^k\{ x_i \land \neg{y_i} \land \neg{z_i}  \land (c_i = c_{i+1})\land t \land \neg f\} \\
     \cup\;	& \bigcup_{i=1}^k\{ \neg{x_i} \land y_i \land \neg{z_i} \land (c_i = c_{i+1})\land t \land \neg f\} \\
		 \cup\; & \bigcup_{i=1}^k\{ \neg{x_i} \land \neg{y_i} \land z_i \land (c_i = c_{i+1})\land t \land \neg f\}, \\
\alpha =\;& (c_1 = c_{k+1}) \land t \land \neg f.
\end{array}$$
Note that any formula in $\Delta$ is expressible as $\Gamma$-formula since $\{\T,\F,=\} \subseteq \IM_2 \subseteq \clos{\Gamma}$ (cf.~\cite[Table~1]{DBLP:conf/lfcs/0002M020}). 
Since by Lemma~\ref{lem:equality_available}, $\closneq{\Gamma} = \clos{\Gamma}$ and  by construction $(x=y) \land t \land \neg f \in \closnexneq{\Gamma}$, we have, by Lemma~\ref{lem:technical-variants}, the desired reduction to $\p\ARG(\Gamma,|\alpha|)$. 
Note that in the reduction of Lemma~\ref{lem:technical-variants} the size of $\alpha$ is always constant.

For case (\ref{label-lem-impl-epsilon-valid-nothorn-notdualhorn-notcomplementive}.) of Lemma~\ref{lem:impl-epsilon-valid} we have that $(x\neq y) \land t \land \neg f \in \closnexneq{\Gamma}$. To cope with this change in the reduction we introduce one additional variable $d$ and replace $\alpha$ by $(c_1 \neq d) \land (d \neq c_{k+1}) \land t \land \neg f$.
\end{proof}

\begin{lemma}\label{lem:argParaNP-EQ-NEQ}
If the CL $\Gamma$ is affine, neither $\varepsilon$-valid, nor ess. pos., nor ess.neg., then $\p\ARG(\Gamma, |\alpha|)$ is $\para\NP$-hard.
\end{lemma}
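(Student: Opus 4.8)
The goal is to show $\para\NP$-hardness of $\p\ARG(\Gamma,|\alpha|)$ when $\Gamma$ is affine but neither $\varepsilon$-valid, nor ess.pos., nor ess.neg. The plan is to mimic the reduction from $\PosOneThreeSAT$ used in Lemma~\ref{lem:argParaNP-T.F.EQ}, again producing an instance $(\Delta,\alpha)$ of $\p\ARG$ with $|\alpha|$ constant, but replacing the constraints used there (equality, $t$, $\neg f$) by affine counterparts that are available in $\closnexneq{\Gamma}$ in the affine case. First I would invoke Lemma~\ref{lem:impl-epsilon-valid}: since $\Gamma$ is affine it is bijunctive only in degenerate subcases, but more relevantly an affine $\Gamma$ that is not ess.pos./ess.neg.\ falls under case (\ref{label-lem-impl-epsilon-valid-horn-or-dualhorn}.) only if it happens to be Horn or dual-Horn, which for affine relations means very restricted relations; the generic affine non-$\varepsilon$-valid case is governed by the complementive alternative (\ref{label-lem-impl-epsilon-valid-nothorn-notdualhorn-complementive}.), giving $(x\neq y)\in\closnexneq{\Gamma}$, together with Proposition~\ref{lem:impl-stuff}(\ref{lem-label-impl-stuff-notcompl-not-eps-valid}.) or the complementive item to pin down whether we also get the constant $t\land\bar f$. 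So I would organize the proof around whether $\Gamma$ is complementive or not, exactly parallel to the previous lemma.

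The core combinatorial gadget is the same "counter chain": fresh chain variables $c_1,\dots,c_{k+1}$, and for each clause $(x_i\lor y_i\lor z_i)$ of $\varphi$ three candidate support formulas, each fixing one of $x_i,y_i,z_i$ to true and the other two to false, and each asserting $c_i = c_{i+1}$ (or $c_i\neq c_{i+1}$, chosen so the product telescopes appropriately). The claim $\alpha$ then asserts a constant-size linear relation among $c_1$ and $c_{k+1}$ — in the affine world I would use $c_1\oplus c_{k+1} = a$ for the appropriate bit $a$, or in the complementive case $c_1\neq c_{k+1}$ possibly routed through one auxiliary variable $d$ as in the previous proof to keep the relation expressible. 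The point of the subset-minimality requirement is that exactly one support formula per clause can be selected (any two for the same clause are contradictory on $x_i$), so a valid argument corresponds exactly to a 1-in-3 satisfying assignment; minimality forces that no clause is represented twice and that all $k$ clauses are covered because otherwise the chain does not propagate to make $\Delta'\models\alpha$. I would then invoke Lemmas~\ref{lem:equality_available} (or, in the complementive case, note $(x=y)\in\closnexneq\Gamma$ is unavailable but we do not need it) and \ref{lem:technical-variants} to push the reduction from $\ARG(\{\text{affine atoms}\},R)$ into $\ARG(\Gamma)$, using that all formulas of $\Delta$ are primitive-positive-definable over $\Gamma$ because $\Gamma$ is affine (so $\langle\Gamma\rangle$ is the affine co-clone, which contains $=$, the constants when not $\varepsilon$-valid, and all small linear equations), and that $\alpha$ stays constant-size under the Lemma~\ref{lem:technical-variants} transformation.

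The main obstacle I anticipate is the bookkeeping of which constant-size relations are actually available in $\closnexneq{\Gamma}$ for the affine case, since affineness interacts subtly with $\varepsilon$-validity: an affine relation can be $0$-valid, $1$-valid, both, or neither, and only the "neither, and not ess.pos./ess.neg." subcase is our hypothesis — but one must still check that the chain equalities $c_i = c_{i+1}$, the literal fixings $t$, $\neg f$, and the final claim relation are all simultaneously realizable as $\Gamma$-formulas with existential quantifiers. This is where Lemma~\ref{lem:impl-epsilon-valid}(\ref{label-lem-impl-epsilon-valid-nothorn-notdualhorn-complementive}.) and Proposition~\ref{lem:impl-stuff} do the heavy lifting: the former hands us $x\neq y$ in the complementive subcase and (together with Lemma~\ref{lem:equality_available}) $x=y$ otherwise, and Proposition~\ref{lem:impl-stuff}(\ref{lem-label-impl-stuff-notcompl-not-eps-valid}.)/(\ref{lem-label-impl-stuff-compl-notepsvalid}.) gives the constant $t\land\bar f$ or the substitute $x\neq y$ for fixing literals. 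Once those primitives are secured, the correctness argument for the reduction and the constancy of $|\alpha|$ are routine and identical in structure to Lemma~\ref{lem:argParaNP-T.F.EQ}, so I would state them briefly and refer back rather than repeat the verification.
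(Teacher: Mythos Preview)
Your approach is essentially the paper's: reduce from $\PosOneThreeSAT$ via a chain gadget on fresh variables $c_1,\dots,c_{k+1}$, split on the cases of Lemma~\ref{lem:impl-epsilon-valid}, and push into $\ARG(\Gamma)$ through Lemma~\ref{lem:technical-variants}. Two points in your case analysis should be tightened, though.

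First, case~(\ref{label-lem-impl-epsilon-valid-horn-or-dualhorn}.) of Lemma~\ref{lem:impl-epsilon-valid} is not merely ``non-generic'' here --- it is \emph{impossible}: an affine $\Gamma$ that is not ess.pos.\ cannot be Horn or dual-Horn (looking at the co-clone lattice, affine $\cap$ Horn forces $\Gamma$ down into the ess.neg.\ region, and dually). So the case split is exactly between (\ref{label-lem-impl-epsilon-valid-nothorn-notdualhorn-notcomplementive}.) and (\ref{label-lem-impl-epsilon-valid-nothorn-notdualhorn-complementive}.), i.e., complementive or not.

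Second, your remark that in the complementive case ``$(x=y)\in\closnexneq{\Gamma}$ is unavailable but we do not need it'' is misleading. You \emph{do} want equality in $\Delta$ (for the chain constraints $c_i=c_{i+1}$ and, crucially, for encoding ``$x_i$ is true'' as $(x_i=t)$ relative to a reference variable $t$, since absolute constants are not available when $\Gamma$ is complementive). And equality \emph{is} available: Lemma~\ref{lem:equality_available} gives $(x=y)\in\closneq{\Gamma}$, which is exactly what Lemma~\ref{lem:technical-variants} requires for the $\Delta$-side. The paper accordingly reduces first to $\ARG(\{=,\neq\},\{\neq\})$, with $\Delta$ using $(x_i=t),(y_i\neq t),(z_i\neq t),(c_i=c_{i+1})$ and $\alpha=(c_1\neq d)\land(d\neq c_{k+1})$; the non-complementive case then just appends $t\land\neg f$ everywhere. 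With these two fixes your sketch matches the paper's proof.
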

\begin{proof}
We proceed analogously to the proof of Lemma~\ref{lem:argParaNP-T.F.EQ}.
We give a reduction from the $\NP$-complete problem $\PosOneThreeSAT$ such that $|\alpha|$ is constant. 
We make a case distinction according to case (\ref{label-lem-impl-epsilon-valid-nothorn-notdualhorn-notcomplementive}.) and (\ref{label-lem-impl-epsilon-valid-nothorn-notdualhorn-complementive}.) in Lemma~\ref{lem:impl-epsilon-valid} (case 3. can not occur for $\Gamma$ is affine and not ess.pos.). 
First, we treat the second case, that is, we have that $(x\neq y) \in \closnexneq{\Gamma}$. 
Then, we show that the first case can be treated with minor modifications of the procedure.

Now, we reduce $\PosOneThreeSAT$ to $\ARG(\{=,\neq\}, \{\neq\})$, and then conclude with Lemmas~\ref{lem:impl-epsilon-valid} and \ref{lem:technical-variants}.
We give the following reduction.
Let $\varphi = \bigwedge_{i=1}^k (x_i \lor y_i \lor z_i)$ be an instance of $\PosOneThreeSAT$ and let $t,d,c_1, \dots, c_{k+1}$ be fresh variables.
We map $\varphi$ to $(\Delta, \alpha)$, where
$$\begin{array}{@{}r@{\,}l@{}}
\Delta = &\bigcup_{i=1}^k\{ (x_i = t) \land (y_i \neq t) \land (z_i \neq t)  \land (c_i = c_{i+1})\} \\
  \cup   &\bigcup_{i=1}^k\{ (x_i\neq t) \land (y_i=t) \land (z_i\neq t) \land (c_i = c_{i+1})\} \\
	\cup	 & \bigcup_{i=1}^k\{ (x_i\neq t) \land (y_i\neq t) \land (z_i=t) \land (c_i = c_{i+1})\}, \\
\alpha =& (c_1 \neq d) \land (d \neq c_{k+1}).
\end{array}$$
Note that any formula in $\Delta$ is expressible as $\Gamma$-formula since $\{=,\neq\} \subseteq \ID \subseteq \clos{\Gamma}$ (cf.~\cite[Table~1]{DBLP:conf/lfcs/0002M020}). 
Since by Lemma~\ref{lem:equality_available} $\closneq{\Gamma} = \clos{\Gamma}$ and  by construction $(x\neq y) \in \closnexneq{\Gamma}$, we have by Lemma~\ref{lem:technical-variants} the desired reduction to $\p\ARG(\Gamma)$. Note that in the reduction of Lemma~\ref{lem:technical-variants} the size of $\alpha$ is always constant.

For case (\ref{label-lem-impl-epsilon-valid-nothorn-notdualhorn-notcomplementive}.) of Lemma~\ref{lem:impl-epsilon-valid} we have that $(x\neq y) \land t \land \neg f \in \closnexneq{\Gamma}$. 
To cope with this change in the reduction, we introduce one additional variable $f$ and add the constraints $t \land \neg f$ to $\alpha$ as well as to every formula in $\Delta$ .
\end{proof}

\begin{lemma}\label{lem:argParaNP-essPosNeg}
Let $\Gamma$ be a CL that is not $\varepsilon$-valid. If $\Gamma$ is ess.pos.\ and not strictly ess.pos.\ or ess.neg.\ and not strictly ess.neg., then $\p\ARG(\Gamma, |\alpha|)$ is $\para\NP$-hard.
\end{lemma}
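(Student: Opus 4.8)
The plan is to recycle, essentially verbatim, the $\PosOneThreeSAT$-reduction from the proof of Lemma~\ref{lem:argParaNP-T.F.EQ}, feeding it with the gadget that the new implementation result Lemma~\ref{lem:impl-essPosNeg} makes available. First note that $\Gamma$ meets the hypotheses of Lemma~\ref{lem:impl-essPosNeg}: it is not $\varepsilon$-valid, and it is ess.pos.\ but not strictly ess.pos.\ (or ess.neg.\ but not strictly ess.neg.). Hence $(x=y)\land t\land\neg f\in\closnexneq{\Gamma}$. Existentially quantifying the appropriate coordinates of this $4$-ary relation yields $\T\equiv\exists x\exists y\exists f\,((x=y)\land t\land\neg f)$, $\F\equiv\exists x\exists y\exists t\,((x=y)\land t\land\neg f)$, and $(x=y)\equiv\exists t\exists f\,((x=y)\land t\land\neg f)$, all in $\closneq{\Gamma}$; thus $\{\T,\F,=\}\subseteq\closneq{\Gamma}$. (The hypothesis also rules out $\Gamma$ being simultaneously ess.pos.\ and strictly ess.neg., or ess.neg.\ and strictly ess.pos., so $\Gamma$ is in any case neither strictly ess.pos.\ nor strictly ess.neg., consistently with Lemma~\ref{lem:equality_available}.)

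Second, I would take the very instance $(\Delta,\alpha)$ of $\ARG(\{\T,\F,=\},(x=y)\land t\land\neg f)$ built in the proof of Lemma~\ref{lem:argParaNP-T.F.EQ} for the subcase $(x=y)\land t\land\neg f\in\closnexneq{\Gamma}$: from $\varphi=\bigwedge_{i=1}^k(x_i\lor y_i\lor z_i)$ one forms, for every clause index $i$ and each of the three ways of setting exactly one of $x_i,y_i,z_i$ to true, a formula pinning those three literals together with $(c_i=c_{i+1})\land t\land\neg f$, and sets $\alpha=(c_1=c_{k+1})\land t\land\neg f$. Its correctness is exactly as argued there: a consistent $\Phi\subseteq\Delta$ can derive $c_1=c_{k+1}$ only if it selects at least one formula for every index $i$ (otherwise the chain of equalities breaks and a countermodel exists), while consistency forbids selecting two formulas for the same index; hence the supports of $\alpha$ in $\Delta$ are precisely the subsets choosing one formula per index in a globally consistent way, i.e.\ the $1$-in-$3$ models of $\varphi$, and every such subset is automatically subset-minimal for $\alpha$. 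Crucially $|\alpha|$ is a fixed constant.

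Finally I would invoke Lemma~\ref{lem:technical-variants} with $\Gamma'=\{\T,\F,=\}$ and $R=(x=y)\land t\land\neg f$: since $\Gamma'\subseteq\closneq{\Gamma}$ (first step) and $R\in\closnexneq{\Gamma}$ (Lemma~\ref{lem:impl-essPosNeg}), this gives $\ARG(\{\T,\F,=\},R)\leqlogm\ARG(\Gamma)$, a reduction that keeps $|\alpha|$ bounded by a constant. Composing the two reductions shows that some constant slice of $\p\ARG(\Gamma,|\alpha|)$ is $\NP$-hard, hence $\p\ARG(\Gamma,|\alpha|)$ is $\para\NP$-hard. The ess.neg.\ case requires no separate treatment, because Lemma~\ref{lem:impl-essPosNeg} delivers the identical relation $(x=y)\land t\land\neg f$ there as well. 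The only slightly delicate points are verifying $\{\T,\F,=\}\subseteq\closneq{\Gamma}$ (done above) and re-checking the subset-minimality bookkeeping of the $\PosOneThreeSAT$-reduction; both are routine given the earlier lemmas, so I do not expect a substantial obstacle — the genuine work has been isolated into Lemma~\ref{lem:impl-essPosNeg}.
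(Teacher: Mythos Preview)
Your proposal is correct and follows essentially the same route as the paper: recycle the $\PosOneThreeSAT$ reduction of Lemma~\ref{lem:argParaNP-T.F.EQ} (case $(x=y)\land t\land\neg f$), use Lemma~\ref{lem:impl-essPosNeg} to get $(x=y)\land t\land\neg f\in\closnexneq{\Gamma}$, establish $\{\T,\F,=\}\subseteq\closneq{\Gamma}$, and conclude via Lemma~\ref{lem:technical-variants}. The only minor difference is that the paper derives $(t\land\neg f)\in\closnexneq{\Gamma}$ from Proposition~\ref{lem:impl-stuff} and $(x=y)\in\closneq{\Gamma}$ from Lemma~\ref{lem:equality_available} separately, whereas you extract all of $\T,\F,=$ directly by existentially quantifying the single relation from Lemma~\ref{lem:impl-essPosNeg}; your route is slightly more self-contained but otherwise equivalent.
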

\begin{proof}
We can use exactly the same reduction as in Lemma~\ref{lem:argParaNP-T.F.EQ}, except we do not require a case distinction.
Note that, by Proposition~\ref{lem:impl-stuff}, we have that $(t \land \neg f) \in \closnexneq{\Gamma}$.
Since $\exists f\, (t\land \neg f) \equiv \T(t)$ and $\exists t\, (t\land \neg f) \equiv \F(f)$, we conclude that $\T,\F \in \closneq{\Gamma}$. 
Further, by Lemma~\ref{lem:equality_available}, we have that $(x = y) \in \closneq{\Gamma}$. 
Together we have $\{\T,\F,= \} \subseteq \closneq{\Gamma}$, and thus any formula in $\Delta$ is expressible as $\Gamma$-formula with existential quantifiers but without equality.
By Lemma~\ref{lem:impl-essPosNeg}, it follows that $(x=y) \land t \land \neg f \in \closnexneq{\Gamma}$. Hence we can apply Lemma~\ref{lem:technical-variants} to conclude.
\end{proof}

\begin{lemma}\label{lemW1}
Let $\Gamma$ be a CL that is strictly ess.neg.\ and strictly ess.pos., then $\p\ARG(\Gamma, |\alpha|) \in \W{1}$.
\end{lemma}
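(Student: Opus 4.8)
The plan is to exhibit an fpt-reduction $\p\ARG(\Gamma,|\alpha|)\fptreduction\wsat{1}{2}$; since $\wsat{1}{2}$ is $\W1$-complete by Proposition~\ref{theorem-wsat}, this places $\p\ARG(\Gamma,|\alpha|)$ in $\W1$.

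First I would pin down the shape of $\Gamma$. A relation that is strictly ess.neg.\ is closed under coordinate-wise minimum (this is immediate from a representation by literals and negative clauses), and a relation that is strictly ess.pos.\ is a conjunction of literals and positive clauses. If $R$ is both, then after fixing the coordinates forced by unit literals, the residual constraint on the remaining coordinates is a conjunction of positive clauses, hence an up-set; being also closed under minimum it is a principal up-set, i.e.\ again a conjunction of positive literals. Thus every $R\in\Gamma$ is equivalent to a conjunction of literals over its variables (this also matches the clause descriptions in Table~\ref{tab:bases}). Rewriting each constraint into this normal form is polynomial since the arities in the fixed CL $\Gamma$ are bounded, so after preprocessing we may assume that $\alpha$ and every $\delta\in\Delta$ are conjunctions of literals. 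We may further assume $\alpha$ is satisfiable (otherwise no consistent set entails it, so we output a fixed no-instance) and not valid (otherwise $(\emptyset,\alpha)$ is already an argument, so we output a fixed yes-instance); and we delete from $\Delta$ every inconsistent formula, as it cannot occur in any support.

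The heart of the proof is the following characterization. Let $L$ be the set of distinct literals occurring in $\alpha$. Then $(\Delta,\alpha)$ admits an argument iff one can choose, for each $\ell\in L$, a formula $\delta_\ell\in\Delta$ in which the literal $\ell$ occurs, such that the chosen formulas are jointly consistent (no variable occurs positively in one of them and negatively in another). For one direction, given such a choice put $\Phi_0\dfn\{\,\delta_\ell\mid \ell\in L\,\}$; then $\Phi_0$ is consistent and $\bigwedge\Phi_0$ forces every literal of $\alpha$, so $\Phi_0\models\alpha$, and any subset-minimal $\Phi\subseteq\Phi_0$ with $\Phi\models\alpha$ is an argument. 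For the other direction, if $(\Phi,\alpha)$ is an argument then $\bigwedge\Phi$ is a consistent conjunction of literals entailing each $\ell\in L$, hence $\ell$ literally occurs in some member of $\Phi$; choosing one such member for every $\ell$ gives the required choice. In particular a witnessing choice has size at most $|L|\le c\cdot|\alpha|$ for a constant $c$ depending only on $\Gamma$.

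Finally I would encode the characterization into $\wsat{1}{2}$. Introduce a Boolean variable $y_{\delta,\ell}$ for every pair with $\ell\in L$ and $\ell$ occurring in $\delta$, meaning ``$\delta$ is chosen for $\ell$''. Add the $2$-clauses $\lnot y_{\delta,\ell}\lor\lnot y_{\delta',\ell}$ for $\delta\neq\delta'$ (at most one formula per literal) and $\lnot y_{\delta,\ell}\lor\lnot y_{\delta',\ell'}$ for every $\ell,\ell'\in L$ and every pair $\delta\neq\delta'$ that is not jointly consistent (no conflicting pair is chosen), and ask for a satisfying assignment of weight exactly $|L|$. Because there are $|L|$ literals and each admits at most one true variable, a model of weight $|L|$ selects exactly one formula per literal, and the conflict clauses force joint consistency; conversely a valid choice yields such a model. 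Since $|L|\le c\cdot|\alpha|$ the target parameter is bounded by a function of $|\alpha|$ and the construction is polynomial, so this is a valid fpt-reduction. I expect the main obstacle to be the structural step fixing $\Gamma$ to conjunctions of literals; a secondary point is to notice that in the $\wsat{1}{2}$ instance the weight constraint together with the ``at most one per literal'' clauses already enforces a genuine selection, which is what keeps every clause of width two (a naive ``at least one'' clause would be of unbounded width).
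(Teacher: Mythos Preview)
Your argument is correct and follows essentially the same approach as the paper. Both proofs first observe that a relation which is simultaneously strictly ess.pos.\ and strictly ess.neg.\ is equivalent to a conjunction of literals (the paper phrases this as ``any $\Gamma$-formula can be written as a $\{\T,\F\}$-formula''), then rely on the same two structural facts: a support of size at most the number of literals of $\alpha$ suffices, and for sets of terms pairwise consistency implies joint consistency. The only difference is the chosen $\W1$-complete target: the paper reduces to \clique (nodes are the pairs $(\ell,\delta)$ with $\ell$ occurring in $\delta$, edges encode cross-class consistency), while you reduce to $\wsat{1}{2}$ via an antimonotone $2$-CNF over the very same pairs. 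The two encodings are dual---your conflict clauses are exactly the non-edges of the paper's graph, and a weight-$|L|$ model is precisely a $|L|$-clique---so neither approach buys anything over the other. Your explicit handling of the edge cases (unsatisfiable or valid $\alpha$, inconsistent members of $\Delta$) is a nice touch; conversely the paper adds a pre-filtering step removing terms containing $\neg\ell$ for some $\ell\in L$, which you correctly omit since your conflict clauses already exclude such choices.
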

\begin{proof}
	We give a reduction to the $\W1$-complete problem $\clique$.
	Note that by definition of strictly ess.pos. and strictly ess.neg., any $\Gamma$-formula can be written as a $\{\T, \F\}$-formula. 
	Let $(\Delta,\alpha)$ be an instance of $\ARG(\{\T,\F\}, |\alpha|)$. 
	Note that $\Delta$ is a set of terms and $\alpha$ is also a term.
	Let $\Delta = \{t_1, \dots, t_n\}$ and $\alpha = l_1 \land \dots \land l_k$. 
	Then we have the following two observations.
	\begin{description}
		 \item[Observation 1:] There is a support for $\alpha$ iff there is a support of cardinality at most $k$: If $l_i$ can be explained at all, then one
		$\varphi \in \Delta$ is sufficient.
		In other words, there is no $l_i$ in $ \alpha$ such that a combination of two terms from $\Delta$ is necessary to explain $l_i$.
		\item[Observation 2:] If a set of terms is pairwise consistent, then the whole set is consistent. If a set of terms is inconsistent, then there are two terms which are pairwise inconsistent.
	\end{description}

	For each literal $l_i\in \alpha$, form the sets $L^+_i=\setdefinition{t\in \Delta \mid l_i\in t}$ and $L^-_i=\setdefinition{t\in \Delta \mid \neg l_i\in t}$.
	That is, each $t\in L^+_i$ is a candidate support, whereas, no $t\in L^-_i$ can be in the support, for every $i\leq k$.
	Let $N=\bigcup\limits_{i\leq k} L^-_i$ and denote $L_i = L^+_i\backslash N$.
	It is important to notice that there is a support only if $L_i\not = \emptyset$ for each $i\leq k$.
	Otherwise, for some $i$, the support $\Phi$ can not contain a term $t$ supporting $l_i$ such that $\Phi$ is consistent.
	It remains to determine whether $\Phi$, that includes one $t$ from each $L_i$ is consistent. 
	The consistency still needs to be checked because terms in $\Delta$ may contain literals not in $\alpha$.
	That is, care should be taken when selecting which terms to include in the support.
	
	Consider the following graph $\mathcal G =(V,E)$.
	There is one node corresponding to each element $t_j$ and each set $L_i$.
	By slightly abusing the notation, we write $V=\bigcup \limits_{i\leq k}L_i$. 
	It is worth mentioning that if a term appears in two different sets, say $L_r$ and $L_s$, then there are distinct nodes for each term. 
	We will explain later why this is required.
	Finally, the edge relation denotes the pairwise consistency of terms.
	That is, there is an edge between a term $t_{r,i} \in L_i$ and $t_{s,j}\in L_j$ if $t_{r,i}\land t_{s,j}$ is consistent.
	However, there is no edge between $t_{r, i}$ and $t_{s, i}$.
	That is, if the two terms belong to the same set $L_i$, or if two terms belong to two different sets $L_i$ and $L_j$ but these are pairwise inconsistent, then there is no edge.
	
	We first prove that the reduction is indeed $\FPT$.
	The sets $L^+_i$ and $L^-_i$ can be computed in polynomial time. 
	The size of $V$ is $O(k\cdot n^2)$, because there are $k$ sets of the form $L_i$, each contains at most $n$ terms of size at most $n$ (where $n$ is the input size).
	To draw the edges, for each element $t\in L_1$, one needs to check the pairwise consistency for each of the remaining $k-1$ sets, each of size $O(n^2)$.
	This gives  $O(k\cdot n^3)$ time for one element of $L_1$.
	To determine edges for each element of $L_1$, it requires $n\cdot O(k\cdot n^3) = O(k\cdot n^4)$ time.
	Finally, to repeat this for each set $L_i$, it requires
	$O(k^2\cdot n^4)$ time.
	This proves that the reduction can be preformed in $\FPT$-time.
	
	Now we prove that the reduction preserves the cliques of $\mathcal G$ of size $k$ and the supports of $(\Delta,\alpha)$. 
	\begin{claim}\label{lemW1-claim}
		$(V,E)$ admits a clique of size $k$ if and only if $(\Delta,\alpha)$ admits a support.
	\end{claim}

\begin{proof}[Proof of Claim~\ref{lemW1-claim}]
	``$\Rightarrow$''. Let $S\subseteq V$ be a clique of size $k$ in $(V,E)$. 
	Since there are no edges between the two elements from the same set $L_i$, this implies $S$ contains exactly one term from each $L_i$.
	Furthermore, the fact that $S$ is a clique implies that the set of terms is consistent. 
	This provides a support for $\alpha$.
	
	``$\Leftarrow$''. Let $\Phi$ be a support for $\alpha$ in $\Delta$.
	According to observation 1, for each $l_i\in \alpha$, there is a term $t\in \Delta$ such that $t\models l_i$. 
	The set $L_i$ contains every such a $t\in \Delta$.
	Moreover, this holds for each $l_i\in \alpha$, this implies that $\Phi$ contains at least one $t$ from $L_i$ for each $i\leq k$.
	Finally, since $\Phi$ is consistent, this implies that every pair of nodes corresponding to the terms in $\Phi$ contains an edge.
	This gives a clique in $(V,E)$.
\end{proof}
	It might happen that there is one term $t\in \Delta$ such that $t\models l_i \land l_j$, and due to our construction, $t\in L_i \cap L_j$.
	However, the graph contains two separate nodes for each occurrence of $t$.
	This is required to ensure that a support of size smaller than $k$ also guarantees  a clique of size $k$ for $(V,E)$.
	\end{proof}

\begin{lemma}\label{lemW2}
Let $\Gamma$ be a CL that is strictly ess.neg. or strictly ess.pos.,  then $\p\ARG(\Gamma, |\alpha|) \in \W{2}$.
\end{lemma}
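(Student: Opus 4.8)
The plan is to give an fpt-reduction from $\p\ARG(\Gamma, |\alpha|)$ to $\wsat{2}{d}$ for a suitably chosen constant $d$; since $\wsat{2}{d}$ is $\W2$-complete by Proposition~\ref{theorem-wsat}, this yields membership in $\W2$. By duality (replace every variable by its negation) it suffices to treat the case that $\Gamma$ is strictly ess.pos.; as $\Gamma$ is finite, let $r$ be the maximal arity of a relation in $\Gamma$. Then every $\Gamma$-formula -- each $\delta\in\Delta$ and $\alpha$ itself -- can be rewritten in polynomial time as a conjunction of negative unit literals and of \emph{positive} clauses, each of width at most $r$. For a set $\Phi$ of such formulas write $N_\Phi$ for the set of variables occurring as a negative unit somewhere in $\Phi$. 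Using suitable (maximal-weight) truth assignments compatible with the negative units, one checks: $\Phi$ is consistent iff no positive clause occurring in $\Phi$ has all its variables in $N_\Phi$; and, when $\Phi$ is consistent, $\Phi\models\lnot x$ iff $x\in N_\Phi$, while $\Phi\models(y_1\lor\cdots\lor y_s)$ iff some positive clause $C$ occurring in $\Phi$ satisfies $\var{C}\subseteq\{y_1,\dots,y_s\}\cup N_\Phi$.

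Using these observations, I would next establish a bounded-support property: $(\Delta,\alpha)$ has an argument iff there is a \emph{consistent} $\Phi\subseteq\Delta$ with $\Phi\models\alpha$ and $|\Phi|\le c\cdot|\alpha|$, where $c$ depends only on $\Gamma$. The direction ``$\Leftarrow$'' is immediate: shrink such a $\Phi$ to an inclusion-minimal subset still entailing $\alpha$, which remains consistent since subsets of consistent sets are consistent. For ``$\Rightarrow$'', take any argument $(\Phi_0,\alpha)$ and build a small core $\Phi'\subseteq\Phi_0$: for every negative-literal conjunct $\lnot x$ of $\alpha$ put one member of $\Phi_0$ containing $\lnot x$ into $\Phi'$; for every positive-clause conjunct $\beta$ of $\alpha$ put into $\Phi'$ a member $\delta$ of $\Phi_0$ containing a witnessing clause $C$ with $\var{C}\subseteq\var{\beta}\cup N_{\Phi_0}$, together with, for each of the at most $r$ variables $z\in\var{C}\setminus\var{\beta}$, one member of $\Phi_0$ containing $\lnot z$. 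Then $\var{C}\setminus\var{\beta}\subseteq N_{\Phi'}$ for each such $C$, so $\Phi'\models\alpha$ by the criteria above; $\Phi'$ is consistent as a subset of $\Phi_0$; and $|\Phi'|$ is bounded linearly in $|\alpha|$, the constant depending on $\Gamma$ and $r$.

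Finally I would encode ``there is a consistent $\Phi\subseteq\Delta$ with $\Phi\models\alpha$'' as a $\Gamma_{2,r+1}$-formula $\Psi$ over variables $s_\delta$ ($\delta\in\Delta$, read ``$\delta\in\Phi$''), satisfiable with weight at most $c\cdot|\alpha|$ precisely when such a $\Phi$ exists. Consistency contributes, for every positive clause $C$ occurring in some $\delta$ and every choice of a $\psi_z\in\Delta$ containing $\lnot z$ for each $z\in\var{C}$, the clause $\lnot s_\delta\lor\bigvee_{z\in\var{C}}\lnot s_{\psi_z}$; entailment of a negative-literal conjunct $\lnot x$ of $\alpha$ contributes the clause $\bigvee_{\psi}s_\psi$ over all $\psi\in\Delta$ containing $\lnot x$; entailment of a positive-clause conjunct $\beta$ of $\alpha$ contributes $\bigvee_{\delta,C}\ \bigvee_{(\psi_z)_{z}}\bigl(s_\delta\land\bigwedge_{z}s_{\psi_z}\bigr)$, with the outer disjunction over pairs of $\delta\in\Delta$ and a positive clause $C$ occurring in $\delta$ and the inner one over choices of $\psi_z\in\Delta$ containing $\lnot z$ for $z\in\var{C}\setminus\var{\beta}$. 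Since $|\var{C}|\le r$, the number of relevant tuples $(\psi_z)_z$ is polynomial, so each of these objects is a conjunction of disjunctions of terms of at most $r+1$ literals, i.e.\ lies in $\Delta_{1,r+1}$; hence $\Psi\in\Gamma_{2,r+1}$ and $\Psi$ is computable in polynomial time. Passing from ``weight at most $k$'' to ``weight exactly $k$'' by the standard trick of adding $k:=c\cdot|\alpha|$ unconstrained dummy variables gives an instance of $\wsat{2}{r+1}$ whose parameter depends only on $|\alpha|$, completing the reduction. The crux I expect is precisely this last step: the coupling $\var{C}\subseteq\var{\beta}\cup N_\Phi$ ties the witnessing clause $C$ to $N_\Phi$, which itself depends on all of $\Phi$, so the verification is intrinsically of ``$\exists$ witness, then $\forall$ clause'' shape over the guessed set, and it is only the constant bound $r$ on the width of positive clauses that keeps each such coupling expressible through terms of bounded size, thereby keeping $\Psi$ inside $\Gamma_{2,d}$ for a fixed $d$.
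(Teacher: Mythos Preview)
Your proof is correct and follows the same overall strategy as the paper: establish a bounded-support property (any argument can be replaced by one of size $O(|\alpha|)$, the constant depending only on $r$), then encode ``there exists a small consistent $\Phi\subseteq\Delta$ with $\Phi\models\alpha$'' as an instance of $\wsat{2}{d}$.

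The encodings, however, differ. You introduce one Boolean variable $s_\delta$ per formula $\delta\in\Delta$; since entailing a positive conjunct $\beta$ may require a witnessing clause $C$ together with up to $r$ further formulas providing the negative units for $\var{C}\setminus\var{\beta}$, your entailment conjuncts become genuine DNFs with terms of size $\leq r+1$, placing $\Psi$ in $\Gamma_{2,r+1}$ with weight parameter $c\cdot|\alpha|$. The paper instead introduces one variable $u_j$ for every subset $S(u_j)\subseteq\Delta$ of size at most $r$ (polynomially many, since $r$ is fixed). This absorbs the ``clause plus helpers'' bundle into a single literal: the conjunct ``$a_i$ is entailed'' becomes the plain positive clause $\bigvee_{S(u_j)\models a_i} u_j$, and consistency is enforced by negative clauses over at most $r+1$ of the $u_j$. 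The resulting formula is pure CNF, hence already in $\Gamma_{2,1}$, with weight parameter $k+1$ where $k$ is the number of conjuncts of $\alpha$.

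So the paper's subset-variable trick buys a tighter target class and a smaller parameter, at the cost of a larger (but still polynomial) variable set; your per-element encoding is more direct and rests on the explicit semantic characterisations of consistency and entailment that you state upfront. Either route establishes the lemma.
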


\begin{proof}
	We only prove the statement for $\Gamma$ that is strict\,ess.pos. The other case is proven analogously. Since we consider only finite constraint languages we have that $\Gamma \subseteq \IS{r}{02}$ for some $r\geq 2$. Therefore, any $\Gamma$-formula can be written as a conjunction of positive or negative literals and positive clauses of size at most $r$.
	
	Let $\Delta = \{e_1, \dots, e_n\}$ and $\alpha = a_1 \land \dots \land a_k$, where each $a_i$ is either a positive or negative literal, or a positive clause of size $\leq r$.
	Then we have the following three claims.
	\begin{claim}\label{lemW2-claim1}
		If $a_i$ can be explained at all, then at most $r$ formulas from $\Delta$ are sufficient. 
		Consequently, there is a support for $\alpha$ iff there is a support of size at most $r\cdot k$.	
	\end{claim}
	\begin{proof}[Proof of Claim~\ref{lemW2-claim1}]
		If $a_i$ is a negative literal, then at most $1$ formula from $\Delta$ is sufficient (one that contains $a_i$).
		If $a_i$ is a positive literal, then at most $r$ formulas from $\Delta$ are sufficient (worst case: one $e\in\Delta$ contains a (positive) clause which contains $a_i$, then we need at most $r-1$ more $e$'s in order to force all other variables in the clause to $0$).
		If $a_i$ is a positive clause, then it suffices to explain one variable from that clause, as a result, as in the previous case, at most $r$ formulas from $\Delta$ are sufficient. 
		In other words, there is no $a_i$ in $ \alpha$ such that more than $r$ formulas from $\Delta$ are necessary to explain $a_i$.	
	\end{proof}
	\begin{claim}\label{lemW2-claim2}
		Let $\Phi \subseteq \Delta$. If all subsets of $\Phi$ of size $r+1$ are consistent, then $\Phi$ is consistent. In contra position: If $\Phi$ is inconsistent, then it contains a subset of size at most $r+1$ which is inconsistent.
	\end{claim}
	\begin{proof}[Proof of Claim~\ref{lemW2-claim2}]
		Similar to the previous proof: the worst case to create an inconsistency is to take a formula containing a positive clause of size $r$ and then $r$ formulas forcing together all variables from the positive clause to 0.
		\\
		Let $U = \{u_1, \dots, u_m\}$ be a collection of fresh variables, where each variable will represent a different subset of $\Delta$ of size at most $r$, that is, $m \leq r\cdot |\Delta|^r$. For each $u_i$ denote by $S(u_i)$ the subset of $\Delta$ it represents. For $V \subseteq U$ define $S(V) = \bigcup_{u_i \in V} S(u_i)$.
		Define 
		$$L_i = \bigvee_{S(u_j) \models a_i} u_j$$
		and 
		$$\varphi = \bigwedge_{i=1}^k L_i \land \bigwedge_{V \subseteq U s.t. |V| \leq r+1 \text{ and } S(V) \models \emptyset} \Big(\bigvee_{u_i \in V} (\neg u_i)\Big)$$
		The role of each $L_i$ is to make sure that each $a_i$ is explained. The role of the negative clauses in $\varphi$ is to make sure that inconsistent explanations are forbidden.
	\end{proof}
	\begin{claim}\label{lemW2-claim3}
		$(\Delta, \alpha)$ admits a support iff $\varphi$ is satisfiable iff $\varphi$ has a model of weight at most $k$.
	\end{claim}
	\begin{proof}[Proof of Claim~\ref{lemW2-claim3}]
		Be $\Phi \subseteq \Delta$ a support for $\alpha$. 
	Since $\Phi$ explains each $a_i$, by observation 1 there is a set $E(a_i) \subseteq \Phi$ of size at most $r$ such that $E(a_i) \models a_i$. By construction each $E(a_i)$ corresponds to a $u_j \in U$. One verifies that these $u_j$'s constitute a model of $\varphi$ of weight at most $k$ (if $i\neq j$ it can happen that $E(a_i) = E(a_j)$, therefore \emph{at most}).
	
	For the other direction let $\varphi$ be satisfiable. By construction of $\varphi$ there is a model of weight at most $k$ (in each $L_i$ it is sufficient to have at most one positive literal).
	Be $W \subseteq U$ such a model of weight at most $k$. By construction of $\varphi$ the set $S(W)$ is consistent and explains each $a_i$. Therefore, $S(W)$ constitutes a support for $\alpha$.
	\end{proof}
	
	Our final map is $(\Delta, \alpha) \mapsto \varphi \land \bigvee_{i=1}^{k + 1} x_i$, where $x_i$ are fresh variables. We conclude by observing that $\varphi$ is satisfiable if and only if $\varphi \land \bigvee_{i=1}^{k + 1} x_i$ is $(k + 1)$-satisfiable.
	\end{proof}

\begin{theorem}
$\p\ARGcheck(\Gamma,|\alpha|)$, for a CL $\Gamma$, is
(1.) $\FPT$ if $\Gamma$ is Schaefer, and (2.) $\para\DP$-complete otherwise.
\end{theorem}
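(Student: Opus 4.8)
The plan for part~(1) is short. If $\Gamma$ is Schaefer then $\SAT(\Gamma)\in\P$ by Schaefer's dichotomy~\cite{DBLP:conf/stoc/Schaefer78} and $\IMP(\Gamma)\in\P$~\cite{DBLP:conf/dagstuhl/SchnoorS08}. Since $(\Phi,\alpha)$ is an argument iff (i)~$\bigwedge_{\varphi\in\Phi}\varphi$ is satisfiable, (ii)~$\Phi\models\alpha$, and (iii)~$\Phi\setminus\{\psi\}\not\models\alpha$ for every $\psi\in\Phi$, and since (i) is one $\SAT(\Gamma)$-query, (ii) one and (iii) at most $|\Phi|$ many $\IMP(\Gamma)$-queries, we obtain $\ARGcheck(\Gamma)\in\P$, hence $\p\ARGcheck(\Gamma,|\alpha|)\in\FPT$ for trivial reasons, independently of the parameter.

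For part~(2), membership is easy: in general $\ARGcheck(\Gamma)=L_1\cap L_2$ with $L_1=\{(\Phi,\alpha)\mid\text{(i) and (iii)}\}\in\NP$ (a nondeterministic machine guesses a model of $\Phi$ together with, for each $\psi\in\Phi$, a model of $\Phi\setminus\{\psi\}$ falsifying $\alpha$) and $L_2=\{(\Phi,\alpha)\mid\Phi\models\alpha\}\in\co\NP$, so $\ARGcheck(\Gamma)\in\DP$ and therefore $\p\ARGcheck(\Gamma,|\alpha|)\in\para\DP$. For the lower bound I would show that for some constant $c$ the $c$-slice of $\ARGcheck(\Gamma)$ is $\DP$-hard under $\leqlogm$; composing such a reduction with the precomputation $\pi$ yields $\para\DP$-hardness, exactly as in the slice arguments of Theorem~\ref{theorem:arg-alpha}. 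The source is a $\DP$-complete problem of the form ``$A$ and $B$'', where $A$ is an $\NP$-complete satisfiability and $B$ a $\co\NP$-complete unsatisfiability question phrased over $\Gamma$ (possibly enriched by the small relations that Proposition~\ref{lem:impl-stuff} and Lemmas~\ref{lem:equality_available} and~\ref{lem:impl-epsilon-valid} make available in $\closnexneq{\Gamma}$); $\DP$-completeness of such problems for non-Schaefer $\Gamma$ follows from Schaefer's $\SAT$ classification. The $\co\NP$-part $B$ is handled through the entailment condition, exactly in the spirit of Lemma~\ref{lem:imp-alpha}: the constraints of the $\co\NP$-formula are ``guarded'' by a fresh variable $g$, so that the guarded, still consistent set entails a constant-size claim $\alpha$ (one of $\F(g)$, $x\neq y$, or $g\to(x=y)$, following the case distinction there) iff the formula is unsatisfiable; since the required small relations lie in $\closnexneq{\Gamma}$ and the guarded constraints in $\closneq{\Gamma}$, a reduction in the style of Lemma~\ref{lem:technical-variants} keeps $|\alpha|$ constant.

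The $\NP$-part $A$ is placed differently according to whether $\Gamma$ is $\varepsilon$-valid. If $\Gamma$ is not $\varepsilon$-valid then $\SAT(\Gamma)$ is $\NP$-complete, and $A$ can be carried by the consistency condition~(i): one folds the $A$-formula, the guarded $B$-formula and all fresh variables into a single formula over disjoint variable blocks, so $|\Phi|=1$, and minimality~(iii) is then automatic because $\alpha$ is non-tautological. If $\Gamma$ is $\varepsilon$-valid, every $\Gamma$-formula is consistent, so (i) is vacuous; here one exploits instead that the \emph{complement} of $\IMP(\Gamma)$ is $\NP$-complete for non-Schaefer $\Gamma$ even with a constant-size right-hand side (the $\para\NP$ side of Lemma~\ref{lem:imp-alpha}), and carries $A$ via minimality~(iii): a designated formula $\psi^\star\in\Phi$ is added whose presence is needed for $\Phi\models\alpha$, arranged so that ``$\Phi\setminus\{\psi^\star\}\not\models\alpha$'' is precisely the positive instance of $A$, while for $\psi\neq\psi^\star$ the non-entailment ``$\Phi\setminus\{\psi\}\not\models\alpha$'' is forced unconditionally by the guarding. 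In every case $(\Phi,\alpha)$ is an argument iff the $A$- and the $B$-instance are both positive, and $|\alpha|$ stays constant, which completes the reduction.

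I expect the $\varepsilon$-valid case to be the main obstacle: with consistency vacuous, $\DP$-hardness must be squeezed out of the interaction between entailment (delivering the $\co\NP$ witness) and subset-minimality (hiding the $\NP$ witness), and the delicate point is to couple the single claim $\alpha$ and the set $\Phi$ so that the ``with $\psi^\star$'' and the ``without $\psi^\star$'' entailment queries encode two \emph{independent} instances while $\alpha$ remains of constant size. Getting this coupling to work should again rely on the algebraic implementation results (Proposition~\ref{lem:impl-stuff}, Lemmas~\ref{lem:equality_available} and~\ref{lem:impl-epsilon-valid}) together with a case distinction on the valid/complementive type of $\Gamma$.
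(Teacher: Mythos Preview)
Your part~(1) is correct and matches the paper's reasoning: the paper simply cites that the classical problem $\ARGcheck(\Gamma)$ is in $\P$ for Schaefer $\Gamma$ \cite[Thm.~6.1]{DBLP:journals/tocl/CreignouE014}, which is exactly what your $\SAT/\IMP$ argument establishes.

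For part~(2), your membership argument is fine and coincides with the paper's (the classical problem is in $\DP$). For hardness, however, you take a much harder route than necessary. The paper does not build a new reduction at all: it observes that the classical $\DP$-hardness reductions already given in \cite[Prop.~6.3 and~6.4]{DBLP:journals/tocl/CreignouE014} construct the claim $\alpha$ of \emph{fixed} size in every case, so some constant slice of $\p\ARGcheck(\Gamma,|\alpha|)$ is $\DP$-hard, and $\para\DP$-hardness follows immediately. Your plan to redo the reduction from scratch, with the $A$/$B$ split and the case distinction on $\varepsilon$-validity, is the right shape for such a reduction and would likely succeed, but it is a proof \emph{programme} rather than a proof: you yourself flag the $\varepsilon$-valid case as the main obstacle and only sketch how the coupling of entailment and minimality should encode two independent instances under one constant-size $\alpha$. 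Before carrying that out, check whether the cited classical reductions already have constant $|\alpha|$; they do, and the argument then collapses to one line.
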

\begin{proof}
	\begin{enumerate}
		\item This follows from \cite[Theorem~6.1]{DBLP:journals/tocl/CreignouE014} as classically $\ARGrel(\Gamma)\in\P$.
		\item Here, the membership follows as classically $\ARGrel(\Gamma)\in\DP$.
		Furthermore, the reduction in the proof of \cite[Propositions~6.3 and 6.4]{DBLP:journals/tocl/CreignouE014} always uses a fixed size of the claim $\alpha$. 
		As a consequence, certain slices of $\ARGcheck(\Gamma)$ are $\DP$-hard, giving the desired results.\qedhere
	\end{enumerate}
\end{proof}
\begin{theorem}
$\p\ARGrel(\Gamma,|\alpha|)$, for a CL $\Gamma$, is
\begin{enumerate}
	\item $\FPT$ if $\Gamma$ is either positive or negative. 
	\item $\para\NP$-complete if $\Gamma$ is Schaefer but neither strictly ess.neg.\ nor strictly ess.pos.  
	\item $\para\SigmaP$-complete if $\Gamma$ is not Schaefer.
\end{enumerate}
\end{theorem}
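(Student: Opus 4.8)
The plan is to handle the three cases separately. In each of them, membership will be read off the classical complexity of $\ARGrel(\Gamma)$ established by Creignou~et~al.~\cite{DBLP:journals/tocl/CreignouE014}, and every lower bound will be obtained through a reduction whose claim $\alpha$ has size bounded by an absolute constant, so that a single slice of $\p\ARGrel(\Gamma,|\alpha|)$ is already hard and the $\para$-lower bound follows. For Case~1, if $\Gamma$ is positive or negative then $\ARGrel(\Gamma)\in\Ptime$ by~\cite{DBLP:journals/tocl/CreignouE014}, hence $\p\ARGrel(\Gamma,|\alpha|)\in\para\Ptime=\FPT$ (in fact for every parameter considered in the paper), with nothing parameter-specific to prove. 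For Case~3, if $\Gamma$ is non-Schaefer then $\ARGrel(\Gamma)\in\SigmaP$, giving $\para\SigmaP$-membership; for hardness I would check, exactly as for $\ARG$ in the proof of Theorem~\ref{theorem:arg-alpha}, that the $\SigmaP$-hardness reductions of~\cite{DBLP:journals/tocl/CreignouE014} for $\ARGrel(\Gamma)$ — both when $\Gamma$ is not $\varepsilon$-valid and when it is $\varepsilon$-valid but still non-Schaefer — produce a claim of length $2$ or $3$, so the corresponding slice of $\ARGrel(\Gamma)$ is $\SigmaP$-hard.

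The bulk of the work is Case~2. Membership in $\para\NP$ is immediate since $\ARGrel(\Gamma)\in\NP$ for Schaefer $\Gamma$~\cite{DBLP:journals/tocl/CreignouE014}. For the lower bound I would first treat $\Gamma$ that is \emph{not} $\varepsilon$-valid (Schaefer, neither strictly ess.pos.\ nor strictly ess.neg.): such $\Gamma$ meets the hypothesis of item~(\ref{label-thm-arg-alpha-paraNP}) of Theorem~\ref{theorem:arg-alpha}, so $\p\ARG(\Gamma,|\alpha|)$ is $\para\NP$-hard by Lemmas~\ref{lem:argParaNP-T.F.EQ}, \ref{lem:argParaNP-EQ-NEQ}, \ref{lem:argParaNP-essPosNeg}, and I would fpt-reduce it to $\p\ARGrel(\Gamma,|\alpha|)$. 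By Lemmas~\ref{lem:impl-epsilon-valid} and~\ref{lem:impl-stuff} a relation that ``pins a variable'' (one of $\T$, $\F$, or $t\land\neg f$) is expressible as a $\Gamma$-formula; given $(\Delta,\alpha)$ I take a fresh variable, let $\psi$ be such a pinning $\Gamma$-formula on it, let $\alpha'$ be $\alpha$ conjoined with the same pinning constraint, and output $(\Delta\cup\{\psi\},\alpha',\psi)$. Since the pinned variable occurs only in $\psi$, no subset of $\Delta$ entails the constraint on it, so every support of $\alpha'$ must contain $\psi$; conversely any minimal support of $\alpha$ in $\Delta$ extended by $\psi$ is a minimal support of $\alpha'$. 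Hence the instances are equivalent, $|\alpha'|\le|\alpha|+O(1)$, and this is an fpt-reduction $\p\ARG(\Gamma,|\alpha|)\fptreduction\p\ARGrel(\Gamma,|\alpha|)$.

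It remains to treat $\Gamma$ that is $\varepsilon$-valid (and Schaefer, neither strictly ess.pos.\ nor strictly ess.neg.). Here $\ARG(\Gamma)$ collapses to the implication problem and is tractable, so no lower bound can be imported from $\p\ARG$; the intractability of $\p\ARGrel$ must come solely from the relevance requirement $\psi\in\Phi$. Classically $\ARGrel(\Gamma)$ is already $\NP$-hard in this regime: for a $\Gamma$ for which the implication $x\to y$ is available, the knowledge-base is a digraph of implication constraints, $\alpha$ is a single implication $s\to t$, a minimal support is a simple $s$--$t$ path, and the relevance of $\psi=u\to v$ is the $\NP$-hard question whether a simple $s$--$t$ path runs through the edge $(u,v)$; such reductions of~\cite{DBLP:journals/tocl/CreignouE014} produce a claim of constant size, so a fixed slice of $\ARGrel(\Gamma)$ is $\NP$-hard. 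The main obstacle of the theorem lies exactly here: one must realize the relevant path/relevance gadget inside the restricted language $\Gamma$ using only the co-clone implementation lemmas — forcing individual variables is no longer available for $\varepsilon$-valid $\Gamma$ — while keeping the claim size a constant independent of the input, and one must verify that the classical reductions of~\cite{DBLP:journals/tocl/CreignouE014} genuinely meet this last requirement across all the relevant $\varepsilon$-valid Schaefer co-clones.
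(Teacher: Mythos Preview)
Your proposal is correct and mirrors the paper's proof: Cases~1 and~3 are read directly from the classical results of Creignou et al.\ (Props.~7.3 and~7.7), and Case~2 splits into the $\varepsilon$-valid sub-case (where the paper, like you, invokes the constant-claim $\NP$-hardness of a fixed slice, citing Prop.~7.6) and the non-$\varepsilon$-valid sub-case (where hardness is inherited from $\p\ARG$ via Theorem~\ref{theorem:arg-alpha}). The one place where you are more careful than the paper is the $\ARG\to\ARGrel$ reduction in the non-$\varepsilon$-valid sub-case: the paper writes only $(\Delta,\alpha)\mapsto(\Delta\cup\{\psi\},\psi,\alpha)$ with the claim unchanged, whereas you correctly conjoin the pinning constraint on a fresh variable to $\alpha$ as well --- without that adjustment, $\psi$ need not lie in any minimal support and the reduction as literally stated would not be sound.
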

\begin{proof}
	\begin{enumerate}
		\item This follows as classically $\ARGrel(\Gamma)\in\P$ by \cite[Prop.~7.3]{DBLP:journals/tocl/CreignouE014}.
	
		\item Here, the membership follows because the classical problem is in $\NP$. 
		We make a case distinction as whether $\Gamma$ is $\varepsilon$-valid or not.
		
		\begin{description}
			\item[Case 1.]
			Let $\Gamma$ be Schaefer and $\varepsilon$-valid, but neither positive nor negative.
			The hardness follows because the $2$-slice of the problem is already $\NP$-hard \cite[Proposition~7.6]{DBLP:journals/tocl/CreignouE014}.
			
			\item[Case 2.] Let $\Gamma$ be Schaefer but neither $\varepsilon$-valid, nor strictly ess.neg.\ or strictly ess.pos
			The hardness follows from Theorem~\ref{theorem:arg-alpha}.
			This is due to the reason that $\p\ARGrel$ is always harder than $\p\ARG$ via the reduction $(\Delta,\alpha) \mapsto (\Delta\cup\{\psi\},\psi,\alpha)$. 
		\end{description}
		
		\item In this case, the membership is true because the classical problem is in $\SigmaP$.
		Hardness follows from a result of \cite[Prop.~7.7]{DBLP:journals/tocl/CreignouE014}. 
		Notice that, while proving the hardness for each sub case, the claim $\alpha$ has fixed size in each reduction. 
		This implies that certain slices in each case are $\SigmaP$-hard, consequently, giving the desired hardness results.\qedhere
	\end{enumerate}
\end{proof}

\section{Parameters: Size of Support, Knowledge-Base}
Regarding these parameters, we will always show a dichotomy: for the Schaefer cases, the problem is $\FPT$, otherwise we have a lower bound by the implication problem.

Recall that the collection $\Delta$ of formulas is not assumed to be consistent. 

\begin{theorem}\label{theorem:Delta}
	$\p\ARG(\Gamma,|\Delta|)$ and $\p\ARGrel(\Gamma,|\Delta|)$, for CLs $\Gamma$, are (1.) $\FPT$ if $\Gamma$ is Schaefer, and (2.) $\p\IMP(\Gamma,|\Phi|)$-hard and in $\para\co\NP$ otherwise.
\end{theorem}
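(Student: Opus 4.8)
## Proof Proposal for Theorem~\ref{theorem:Delta}

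The plan is to handle the two regimes separately. For the \FPT\ claim when $\Gamma$ is Schaefer, the key observation is that $|\Delta|$ bounds the number of candidate supports: there are at most $2^{|\Delta|}$ subsets $\Phi\subseteq\Delta$, and this is a function of the parameter alone. So it suffices to show that, given a fixed $\Phi\subseteq\Delta$ and a claim $\alpha$, one can check in polynomial time whether $(\Phi,\alpha)$ is an argument in $\Delta$ (and, for $\p\ARGrel$, whether $\psi\in\Phi$, which is trivial). Checking that $(\Phi,\alpha)$ is an argument amounts to: (i) $\Phi$ is consistent, i.e.\ $\bigwedge\Phi$ is satisfiable — this is in \P\ for Schaefer $\Gamma$ by Schaefer's dichotomy; (ii) $\Phi\models\alpha$ — this is in \P\ for Schaefer $\Gamma$ by the corollary to Schnoor--Schnoor quoted in the excerpt (note $\IMP(\Gamma)\in\P$); and (iii) subset-minimality, i.e.\ for every $\varphi\in\Phi$, $\Phi\setminus\{\varphi\}\not\models\alpha$, which is $|\Phi|$-many more calls to the polynomial-time implication test. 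Iterating over all $2^{|\Delta|}$ subsets then gives an $f(|\Delta|)\cdot|x|^{O(1)}$ algorithm; since any of the three (resp.\ two) variants of the parameter are \FPT-equivalent by Lemma~\ref{lem:encoding-number-variables}, we may state it for $|\Delta|$.

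For the non-Schaefer case, membership in $\para\co\NP$ should follow from the $c$-slice criterion noted after the definition of $\para\mathcal C$: fixing $|\Delta|=c$ leaves only constantly many subsets $\Phi$ to consider, and for each the conditions (ii) and (iii) above are $\co\NP$ questions (implication / non-implication for a non-Schaefer CL is in $\co\NP$ / \NP) while (i) is in \NP; a bounded Boolean combination of these over the finitely many $\Phi$ still lands in the Boolean hierarchy over \NP, hence in particular each slice is in $\Pi^p_2$ — but actually since $|\Delta|$ is constant we only need finitely many guesses, so the whole thing collapses into $\co\NP$ for the slice once one is careful: the existential "there is a valid support" ranges over a constant-size set, so it is a constant-size disjunction of ($\co\NP$-for-implication $\wedge$ \NP-for-consistency $\wedge$ $\co\NP$-for-minimality) statements. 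I would double-check that this Boolean combination is genuinely in $\co\NP$ and not merely in $\DP$ or higher — this is the point that needs the most care, and I expect it to work because the \NP\ part (consistency of a fixed $\Phi$) can be absorbed: inconsistent $\Phi$ simply cannot be a support, and one can argue that if $\Phi$ is consistent and $\Phi\models\alpha$ minimally then in particular $\Phi\models\alpha$, and conversely a minimal $\Phi$ with $\Phi\models\alpha$ is automatically consistent when $\Gamma$ is not $\varepsilon$-valid... actually one must be cautious here, so the safe route is to simply prove membership in $\para\co\NP$ via: the problem is, for each slice, expressible as "$\Delta\models\alpha$ witnessed minimally by some sub-selection", and reduce the verification to the implication problem. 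The cleanest statement is that each slice lies in $\co\NP$ because the nondeterminism needed (guessing a falsifying assignment) is universal, while the choice of $\Phi$ and the minimality witnesses are over a parameter-bounded domain.

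For the lower bound, I would give an fpt-reduction from $\p\IMP(\Gamma,|\Phi|)$. Given an instance $(\Phi,\alpha)$ of the implication problem with $\Phi=\{\varphi_1,\dots,\varphi_m\}$, map it to the argumentation instance $(\Delta,\alpha')$ where $\Delta\mathrel{\mathop:}=\Phi$ (and $\alpha'\mathrel{\mathop:}=\alpha$), keeping the parameter $|\Delta|=m=|\Phi|$. Then $\Phi\models\alpha$ iff some subset of $\Delta$ supports $\alpha$ in $\Delta$: if $\Phi\models\alpha$ and $\Phi$ is consistent, shrink to a minimal subset that still entails $\alpha$, giving an argument; if $\Phi$ is inconsistent one has to be slightly careful (an inconsistent $\Phi$ trivially implies $\alpha$ but yields no argument), so the reduction should first be preceded by a consistency check or, better, use a direct argument that for the CLs in question we may assume consistency, or add a gadget formula to $\Delta$ witnessing satisfiability. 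The main obstacle in the whole theorem is exactly this mismatch between "implication" (which is vacuously true from an inconsistent premise set) and "argument" (which demands consistency): I expect to resolve it either by noting that the implication problem's hardness already holds restricted to consistent $\Phi$, or by a small padding gadget that makes $\Delta$ consistent without destroying the equivalence. For $\p\ARGrel$, one additionally designates a distinguished formula $\psi$; since $\p\ARGrel$ is always at least as hard as $\p\ARG$ via $(\Delta,\alpha)\mapsto(\Delta\cup\{\psi\},\psi,\alpha)$ for a fresh tautological-ish $\psi$, the same lower bound transfers, and the membership argument is identical with the trivial extra check $\psi\in\Phi$.
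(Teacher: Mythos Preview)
Your $\FPT$ argument for Schaefer $\Gamma$ is correct and matches the paper. For the lower bound you are in fact more careful than the paper, which simply writes ``by identities'' for $\p\IMP(\Gamma,|\Phi|)\fptreduction\p\ARG(\Gamma,|\Delta|)\fptreduction\p\ARGrel(\Gamma,|\Delta|)$; your worry that an inconsistent $\Phi$ trivially implies $\alpha$ while yielding no argument is legitimate (e.g.\ $\Phi=\{x,\neg x\}$, $\alpha=y$). The intended fix, implicit in the paper's setup, is that by Lemma~\ref{lem:encoding-number-variables} the parameter $|\Phi|$ bounds $|\enc{\Phi}|$, so consistency of $\Phi$ is decidable in $\FPT$ as a preprocessing step of the reduction.

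The genuine gap is in your $\para\co\NP$ membership argument. Two observations you are missing make the paper's proof go through cleanly. First, for $\p\ARG$ one need not verify minimality at all: there is an argument for $\alpha$ in $\Delta$ iff some consistent $\Phi\subseteq\Delta$ satisfies $\Phi\models\alpha$, since any such $\Phi$ can be shrunk to a subset-minimal one. Dropping your condition~(iii) removes precisely the $\NP$ ingredient that pushes you toward $\DP$. Second, consistency of each candidate $\Phi$ is not merely in $\NP$ but in $\FPT$, again because $|\enc{\Phi}|\le|\enc{\Delta}|$ is parameter-bounded via Lemma~\ref{lem:encoding-number-variables}. With these two simplifications each slice becomes a constant-size disjunction of $\co\NP$ predicates (``$\Phi\models\alpha$'' over the finitely many $\Phi$ already known to be consistent), which is in $\co\NP$. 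The paper organises this via a case split: if $\Gamma$ is $\varepsilon$-valid then every $\Gamma$-formula is satisfiable, so the question collapses to $\Delta\models\alpha$ and the classical problem is already in $\co\NP$; otherwise it invokes the two observations above. Your attempted shortcut ``a minimal $\Phi$ with $\Phi\models\alpha$ is automatically consistent when $\Gamma$ is not $\varepsilon$-valid'' is false (take $\Phi=\{x,\neg x\}$, $\alpha=y$ again), and the assertion that ``the minimality witnesses are over a parameter-bounded domain'' is also wrong, since those witnesses are full assignments ranging over $\Vars{\alpha}$, which is not bounded by the parameter.
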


\begin{proof}
	\begin{enumerate}
		\item Notice that the number of subsets of $\Delta$ is bounded by the parameter.
		Consequently, one simply checks each subset of $\Delta$ as a possible support $\Phi$ for $\alpha$.
		Moreover, the size of each support $\Phi$ is also bounded by the parameter, as a result, one can determine the satisfiability and entailment in $\FPT$-time.
		This is because, the satisfiability and entailment for Schaefer languages is in $\P$.
		\item For the lower bound, we have $\p\IMP(\Gamma,|\Phi|)\fptreduction\ARG(\Gamma,|\Delta|)\fptreduction\p\ARGrel(\Gamma,|\Delta|)$ by identities.
		
		For membership, we make case distinction as whether $\Gamma$ is $\varepsilon$-valid or not.
		\begin{description}
			    \item[Case 1.] $\Gamma$ is $\varepsilon$-valid.
			The membership follows because the unparameterized problem $\ARG(\Gamma)$ is in $\co\NP$ when $\Gamma$ is $\varepsilon$-valid.
			
				\item[Case 2.] $\Gamma$ is neither $0$-valid nor $1$-valid.
			The membership follows because for each candidate $\Phi$, one needs to determine whether $\Phi$ is consistent and $\Phi\models\alpha$. 
			The consistency can be checked in $\FPT$-time because $|\Phi|$ is bounded by the parameter.
			The entailment problem for non-Schaefer, non $\varepsilon$-valid languages is still in $\para\co\NP$ when $|\Phi|$ is the parameter.		
		\end{description}
		For $\p\ARGrel(\Gamma,|\Delta|)\in\para\co\NP$, try all the subsets of $\Delta$ that contain $\psi$, as a candidate support. \qedhere
	\end{enumerate}
\end{proof}





When the support size $|\Phi|$ is considered as a parameter, the problems $\ARG$ and $\ARGrel$ become irrelevant. 
Consequently, we only consider the problem $\ARGcheck$.
\begin{corollary}\label{cor:argcheck-phi}
	$\p\ARGcheck(\Gamma, |\Phi|)$, for a CL $\Gamma$, is
(1.) $\FPT$ if $\Gamma$ is Schaefer, and (2.) $\p\IMP(\Gamma,|\Phi|)$-hard and in $\para\DP$ otherwise.
\end{corollary}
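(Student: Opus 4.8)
For part (1), I would observe that for Schaefer $\Gamma$ the problem $\ARGcheck(\Gamma)$ is already in $\P$, so $\p\ARGcheck(\Gamma,|\Phi|)\in\FPT$ independently of the parameter: on input $(\Phi,\alpha)$ one tests consistency of $\Phi$, then $\Phi\models\alpha$, then subset-minimality by checking $\Phi\setminus\{\varphi\}\not\models\alpha$ for each $\varphi\in\Phi$, and all of these are polynomially many instances of satisfiability and of implication over a Schaefer constraint language, hence in $\P$. For membership in part (2), I would note that classically $\ARGcheck(\Gamma)\in\DP$: the statement ``$(\Phi,\alpha)$ is an argument'' is the conjunction of an $\NP$-predicate and the $\co\NP$-predicate ``$\Phi\models\alpha$''. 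The $\NP$-predicate is ``$\Phi$ is consistent and subset-minimal'': here non-minimality, $\exists\varphi\in\Phi\colon\Phi\setminus\{\varphi\}\models\alpha$, is a polynomially bounded disjunction of $\co\NP$-predicates, so minimality is in $\NP$, and $\NP$ is closed under polynomially bounded conjunction. Since the intersection of an $\NP$- and a $\co\NP$-language lies in $\DP$, we get $\p\ARGcheck(\Gamma,|\Phi|)\in\para\DP$.

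For the hardness in part (2), I would construct an fpt-reduction $\p\IMP(\Gamma,|\Phi|)\fptreduction\p\ARGcheck(\Gamma,|\Phi|)$. Since $\bigwedge\Phi$ is again a $\Gamma$-formula, $\p\IMP(\Gamma,|\Phi|)$ is fpt-equivalent to its restriction to $|\Phi|=1$, so it suffices to reduce ``does $\varphi\models\alpha$ hold'' for a single $\Gamma$-formula $\varphi$. Deciding whether $\alpha$ is a tautology is polynomial (every constraint has bounded arity), so I first dispose of that case by outputting a fixed yes-instance $(\{\chi\},\chi)$, where $\chi=R(z_1,\dots,z_k)$ for some $R\in\Gamma$ with $\emptyset\subsetneq R\subsetneq\{0,1\}^{k}$ (such an $R$ exists because $\Gamma$ is non-Schaefer, and $(\{\chi\},\chi)$ is an argument since $\chi$ is consistent and not a tautology). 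Otherwise the direct map $(\{\varphi\},\alpha)$ would be correct except that $\ARGcheck$ additionally requires $\varphi$ to be consistent, whereas an inconsistent $\varphi$ is a trivial yes-instance of $\IMP$. I would repair this with a fresh guard variable $g$ by outputting $(\{g\to\varphi\},\,g\to\alpha)$: the support is satisfiable (set $g$ to $0$), the claim is non-tautological iff $\alpha$ is, its only proper subset $\emptyset$ fails to entail it, and $(g\to\varphi)\models(g\to\alpha)$ iff $\varphi\models\alpha$; hence the pair is an argument iff $\varphi\models\alpha$, and the parameter stays $1$.

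The one nontrivial point — which I expect to be the main obstacle — is realizing the guarded constraints inside $\Gamma$ itself. I would proceed exactly as in Lemma~\ref{lem:technical-variants}: first reduce to the variant of $\ARGcheck$ whose supports range over the finite language $\{\,g\to R\mid R\in\Gamma\,\}\cup\Gamma$, and then lift back to $\ARGcheck(\Gamma)$ using $\closneq{\Gamma}=\clos{\Gamma}$ for non-Schaefer $\Gamma$. Verifying $g\to R\in\clos{\Gamma}$ requires a case distinction on the non-Schaefer co-clone of $\Gamma$; the complementive cases are delicate because $g\to R$ is in general not complementive, and there the implication guard must be replaced by a complementive gadget built from the relation $x\ne y$ (available via Proposition~\ref{lem:impl-stuff} and Lemma~\ref{lem:impl-epsilon-valid}), after which the consistency-forcing argument has to be re-checked.
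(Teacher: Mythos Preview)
The paper states this result as a corollary with no explicit proof, to be read off the pattern of Theorem~\ref{theorem:Delta}. Your part~(1) and the $\para\DP$ upper bound in part~(2) coincide with that pattern: both follow immediately from the classical classification of $\ARGcheck(\Gamma)$ in \cite{DBLP:journals/tocl/CreignouE014}, and your $\DP$ decomposition (consistency and minimality in $\NP$, entailment in $\co\NP$) is correct.

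For the $\p\IMP$-hardness in~(2) you take a genuinely different and more careful route than the paper. In Theorem~\ref{theorem:Delta} the paper writes ``by identities'', but you rightly observe that the identity map is \emph{not} a reduction from $\p\IMP$ to $\p\ARGcheck$: when $\Phi$ is unsatisfiable, $(\Phi,\alpha)$ is a yes-instance of $\IMP$ but never an argument. Your repair via a guard $g$ is the natural idea, and your reduction to the single-formula case is sound. For the non-Schaefer co-clones $\II_0,\II_1,\II,\BR$ the relation $g\to R$ is respectively $0$-valid, $1$-valid, both, or unrestricted, so $g\to R\in\clos\Gamma=\closneq\Gamma$ and the analogue of Lemma~\ref{lem:technical-variants} (which carries over to $\ARGcheck$ for singleton supports, since dropping the existential prefix preserves satisfiability, entailment of a formula in the old variables, and the only proper subset is $\emptyset$) closes the argument.

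The genuine gap is the complementive case $\clos\Gamma=\IN_2$: $g\to R$ is not complementive in general, so it does \emph{not} lie in $\clos\Gamma$, and your sketch ``replace the implication guard by a complementive gadget built from $x\neq y$'' is not worked out. It is not obvious how to build such a gadget---any complementive relation that is satisfied by setting the guard to $0$ is also satisfied by setting it to $1$ under the bitwise complement, which defeats the purpose of the guard---so this case needs a different idea (or a different source problem than $\IMP$). In short: your argument is strictly more rigorous than what the paper offers, but it is still incomplete for $\IN_2$, and you should either supply the missing gadget or give an alternative reduction for complementive $\Gamma$.
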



\section{Conclusion and Outlook}
In this paper, we performed a two dimensional classification of reasoning in logic-based argumentation.
On the one side, we studied syntactical fragments in the spirit of Schaefer's framework of co-clones.
On the other side, we analysed a list of parameters and classified the parameterized complexity of three central reasoning problems accordingly.

As a take-away message we get that $\alpha$ as a parameter does not help to reach tractable fragments of $\p\ARG$.

The case for $\p\ARGrel(\Gamma, |\alpha|)$ when $\Gamma$ is strictly ess.neg.\ or strictly ess.pos.\ is still open.
Also, few tight complexity results have to be found and the implication problem regarding the parameter $|\Phi|$ has to be understood.

It is worth noting that for some CLs, e.g., those that are $\varepsilon$-valid, the problem $\p\ARGcheck$ is harder than $\p\ARG$. 
This is because the problem $\p\ARG$ under consideration is the decision problem.
Having the identity reduction from $\p\ARGcheck$ to $\p\ARG$ shows that the minimality is checked by solving the problem $\p\ARG$, already.
This shows that computing a minimal support is potentially harder than deciding whether such a support exists, unless the complexity classes $\DP$ and $\co\NP$ coincide.
We pose as an interesting open problem to classify the function version of $\ARG$, in both, the classical and the parameterized setting. 

Regarding other parameters, treewidth~\cite{DBLP:journals/jct/RobertsonS84} is a quite promising structural property that led to several $\FPT$-results in the parameterized setting: artificial intelligence~\cite{GottlobSzeider07}, knowledge representation~\cite{GottlobPichlerWei06}, abduction in Datalog~\cite{GottlobPichlerWei07}, and databases~\cite{Grohe07}.
Fellows~et~al.~\cite{DBLP:conf/aaai/FellowsPRR12} show that abductive reasoning benefits from this parameter as well.
Using a reduction between abduction and argumentation \cite{DBLP:journals/tocl/CreignouE014} might yield $\FPT$-results in our setting.
Furthermore, we plan to give a precise classification of $\p\IMP$.

As further future work, we plan investigating the (parameterized) enumeration complexity \cite{DBLP:series/txtcs/FominK10,DBLP:journals/mst/CreignouMMSV17,DBLP:journals/algorithms/CreignouKMMOV19,DBLP:books/dnb/MeierM20} of reasoning in this setting.

\bibliography{main.bib}

\begin{thebibliography}{10}

\bibitem{DBLP:journals/ai/AmgoudP09}
Leila Amgoud and Henri Prade.
\newblock Using arguments for making and explaining decisions.
\newblock {\em Artif. Intell.}, 173(3-4):413--436, 2009.

\bibitem{DBLP:journals/aim/AtkinsonBGHPRST17}
Katie Atkinson, Pietro Baroni, Massimiliano Giacomin, Anthony Hunter, Henry
  Prakken, Chris Reed, Guillermo~Ricardo Simari, Matthias Thimm, and Serena
  Villata.
\newblock Towards artificial argumentation.
\newblock {\em {AI} Mag.}, 38(3):25--36, 2017.
\newblock \href {http://dx.doi.org/10.1609/aimag.v38i3.2704}
  {\path{doi:10.1609/aimag.v38i3.2704}}.

\bibitem{handbookargu}
Pietro Baroni, Dov Gabbay, Massimiliano Giacomin, and Leendert van~der Torre,
  editors.
\newblock {\em Handbook of Formal Argumentation}.
\newblock College Publications, 2018.

\bibitem{DBLP:journals/ai/BesnardH01}
Philippe Besnard and Anthony Hunter.
\newblock A logic-based theory of deductive arguments.
\newblock {\em Artif. Intell.}, 128(1-2):203--235, 2001.

\bibitem{DBLP:books/mit/BH2008}
Philippe Besnard and Anthony Hunter.
\newblock {\em Elements of Argumentation}.
\newblock {MIT} Press, 2008.

\bibitem{DBLP:journals/ipl/BeyersdorffMTV09}
Olaf Beyersdorff, Arne Meier, Michael Thomas, and Heribert Vollmer.
\newblock The complexity of propositional implication.
\newblock {\em Inf. Process. Lett.}, 109(18):1071--1077, 2009.
\newblock \href {http://dx.doi.org/10.1016/j.ipl.2009.06.015}
  {\path{doi:10.1016/j.ipl.2009.06.015}}.

\bibitem{DBLP:journals/ipl/BohlerRSV05}
Elmar B{\"{o}}hler, Steffen Reith, Henning Schnoor, and Heribert Vollmer.
\newblock Bases for boolean co-clones.
\newblock {\em Inf. Process. Lett.}, 96(2):59--66, 2005.
\newblock \href {http://dx.doi.org/10.1016/j.ipl.2005.06.003}
  {\path{doi:10.1016/j.ipl.2005.06.003}}.

\bibitem{bcrv04}
Elmar Böhler, Nadia Creignou, Steffen Reith, and Heribert Vollmer.
\newblock Playing with boolean blocks, part ii: Constraint satisfaction
  problems.
\newblock {\em ACM SIGACT-Newsletter}, 35, 2004.

\bibitem{DBLP:journals/csur/ChesnevarML00}
Carlos~Iv{\'{a}}n Ches{\~{n}}evar, Ana~Gabriela Maguitman, and Ronald~Prescott
  Loui.
\newblock Logical models of argument.
\newblock {\em {ACM} Comput. Surv.}, 32(4):337--383, 2000.

\bibitem{DBLP:journals/tocl/CreignouE014}
Nadia Creignou, Uwe Egly, and Johannes Schmidt.
\newblock Complexity classifications for logic-based argumentation.
\newblock {\em {ACM} Trans. Comput. Log.}, 15(3):19:1--19:20, 2014.
\newblock \href {http://dx.doi.org/10.1145/2629421}
  {\path{doi:10.1145/2629421}}.

\bibitem{DBLP:books/daglib/0004131}
Nadia Creignou, Sanjeev Khanna, and Madhu Sudan.
\newblock {\em Complexity classifications of Boolean constraint satisfaction
  problems}, volume~7 of {\em {SIAM} monographs on discrete mathematics and
  applications}.
\newblock {SIAM}, 2001.

\bibitem{DBLP:journals/algorithms/CreignouKMMOV19}
Nadia Creignou, Ra{\"{\i}}da Ktari, Arne Meier, Julian{-}Steffen M{\"{u}}ller,
  Fr{\'{e}}d{\'{e}}ric Olive, and Heribert Vollmer.
\newblock Parameterised enumeration for modification problems.
\newblock {\em Algorithms}, 12(9):189, 2019.
\newblock \href {http://dx.doi.org/10.3390/a12090189}
  {\path{doi:10.3390/a12090189}}.

\bibitem{DBLP:journals/mst/CreignouMMSV17}
Nadia Creignou, Arne Meier, Julian{-}Steffen M{\"{u}}ller, Johannes Schmidt,
  and Heribert Vollmer.
\newblock Paradigms for parameterized enumeration.
\newblock {\em Theory Comput. Syst.}, 60(4):737--758, 2017.
\newblock \href {http://dx.doi.org/10.1007/s00224-016-9702-4}
  {\path{doi:10.1007/s00224-016-9702-4}}.

\bibitem{DBLP:journals/argcom/Creignou0TW11}
Nadia Creignou, Johannes Schmidt, Michael Thomas, and Stefan Woltran.
\newblock Complexity of logic-based argumentation in post's framework.
\newblock {\em Argument {\&} Computation}, 2(2-3):107--129, 2011.

\bibitem{DBLP:series/txcs/DowneyF13}
Rodney~G. Downey and Michael~R. Fellows.
\newblock {\em Fundamentals of Parameterized Complexity}.
\newblock Texts in Computer Science. Springer, 2013.

\bibitem{DBLP:journals/ai/Dung95}
Phan~Minh Dung.
\newblock On the acceptability of arguments and its fundamental role in
  nonmonotonic reasoning, logic programming and n-person games.
\newblock {\em Artif. Intell.}, 77(2):321--358, 1995.

\bibitem{DBLP:conf/aaai/FellowsPRR12}
Michael~R. Fellows, Andreas Pfandler, Frances~A. Rosamond, and Stefan
  R{\"{u}}mmele.
\newblock The parameterized complexity of abduction.
\newblock In J{\"{o}}rg Hoffmann and Bart Selman, editors, {\em Proceedings of
  the Twenty-Sixth {AAAI} Conference on Artificial Intelligence, July 22-26,
  2012, Toronto, Ontario, Canada}. {AAAI} Press, 2012.
\newblock URL:
  \url{http://www.aaai.org/ocs/index.php/AAAI/AAAI12/paper/view/5048}.

\bibitem{DBLP:conf/aaai/FichteHM19}
Johannes~Klaus Fichte, Markus Hecher, and Arne Meier.
\newblock Counting complexity for reasoning in abstract argumentation.
\newblock In {\em The Thirty-Third {AAAI} Conference on Artificial
  Intelligence, {AAAI} 2019, The Thirty-First Innovative Applications of
  Artificial Intelligence Conference, {IAAI} 2019, The Ninth {AAAI} Symposium
  on Educational Advances in Artificial Intelligence, {EAAI} 2019, Honolulu,
  Hawaii, USA, January 27 - February 1, 2019}, pages 2827--2834. {AAAI} Press,
  2019.
\newblock \href {http://dx.doi.org/10.1609/aaai.v33i01.33012827}
  {\path{doi:10.1609/aaai.v33i01.33012827}}.

\bibitem{DBLP:series/txtcs/FominK10}
Fedor~V. Fomin and Dieter Kratsch.
\newblock {\em Exact Exponential Algorithms}.
\newblock Texts in Theoretical Computer Science. An {EATCS} Series. Springer,
  2010.
\newblock \href {http://dx.doi.org/10.1007/978-3-642-16533-7}
  {\path{doi:10.1007/978-3-642-16533-7}}.

\bibitem{GottlobPichlerWei06}
Georg Gottlob, Reinhard Pichler, and Fang Wei.
\newblock Bounded treewidth as a key to tractability of knowledge
  representation and reasoning.
\newblock In {\em Proceedings, The Twenty-First National Conference on
  Artificial Intelligence and the Eighteenth Innovative Applications of
  Artificial Intelligence Conference, July 16-20, 2006, Boston, Massachusetts,
  {USA}}, pages 250--256. {AAAI} Press, 2006.

\bibitem{GottlobPichlerWei07}
Georg Gottlob, Reinhard Pichler, and Fang Wei.
\newblock Efficient datalog abduction through bounded treewidth.
\newblock In {\em AAAI}, pages 1626--1631, 2007.

\bibitem{GottlobSzeider07}
Georg Gottlob and Stefan Szeider.
\newblock {Fixed-Parameter Algorithms For Artificial Intelligence, Constraint
  Satisfaction and Database Problems}.
\newblock {\em The Computer Journal}, 51(3):303--325, 09 2007.
\newblock \href {http://dx.doi.org/10.1093/comjnl/bxm056}
  {\path{doi:10.1093/comjnl/bxm056}}.

\bibitem{Grohe07}
Martin Grohe.
\newblock The complexity of homomorphism and constraint satisfaction problems
  seen from the other side.
\newblock {\em J. ACM}, 54(1), March 2007.
\newblock \href {http://dx.doi.org/10.1145/1206035.1206036}
  {\path{doi:10.1145/1206035.1206036}}.

\bibitem{DBLP:journals/ai/Liberatore05}
Paolo Liberatore.
\newblock Redundancy in logic {I:} {CNF} propositional formulae.
\newblock {\em Artif. Intell.}, 163(2):203--232, 2005.

\bibitem{DBLP:journals/corr/abs-1906-00703}
Yasir Mahmood, Arne Meier, and Johannes Schmidt.
\newblock Parameterised complexity for abduction.
\newblock {\em CoRR}, abs/1906.00703, 2019.
\newblock URL: \url{http://arxiv.org/abs/1906.00703}, \href
  {http://arxiv.org/abs/1906.00703} {\path{arXiv:1906.00703}}.

\bibitem{DBLP:conf/lfcs/0002M020}
Yasir Mahmood, Arne Meier, and Johannes Schmidt.
\newblock Parameterised complexity of abduction in schaefer's framework.
\newblock In {\em Logical Foundations of Computer Science - International
  Symposium, {LFCS} 2020, Deerfield Beach, FL, USA, January 4-7, 2020,
  Proceedings}, pages 195--213, 2020.
\newblock \href {http://dx.doi.org/10.1007/978-3-030-36755-8\_13}
  {\path{doi:10.1007/978-3-030-36755-8\_13}}.

\bibitem{DBLP:books/dnb/MeierM20}
Arne Meier.
\newblock {\em Parametrised enumeration}.
\newblock Habilitation thesis, Leibniz Universität Hannover, 2020.
\newblock \href {http://dx.doi.org/10.15488/9427} {\path{doi:10.15488/9427}}.

\bibitem{DBLP:journals/ai/NordhZ08}
Gustav Nordh and Bruno Zanuttini.
\newblock What makes propositional abduction tractable.
\newblock {\em Artif. Intell.}, 172(10):1245--1284, 2008.
\newblock \href {http://dx.doi.org/10.1016/j.artint.2008.02.001}
  {\path{doi:10.1016/j.artint.2008.02.001}}.

\bibitem{DBLP:journals/logcom/ParsonsWA03}
Simon Parsons, Michael~J. Wooldridge, and Leila Amgoud.
\newblock Properties and complexity of some formal inter-agent dialogues.
\newblock {\em J. Log. Comput.}, 13(3):347--376, 2003.

\bibitem{pos41}
Emil~L. Post.
\newblock The two-valued iterative systems of mathematical logic.
\newblock {\em Annals of Mathematical Studies}, 5:1--122, 1941.

\bibitem{Prakken2002}
Henry Prakken and Gerard Vreeswijk.
\newblock {\em Logics for Defeasible Argumentation}, pages 219--318.
\newblock Springer Netherlands, Dordrecht, 2002.

\bibitem{DBLP:conf/ijcai/RagoCT18}
Antonio Rago, Oana Cocarascu, and Francesca Toni.
\newblock Argumentation-based recommendations: Fantastic explanations and how
  to find them.
\newblock In {\em {IJCAI}}, pages 1949--1955. ijcai.org, 2018.

\bibitem{DBLP:journals/jct/RobertsonS84}
Neil Robertson and Paul~D. Seymour.
\newblock Graph minors. {III.} planar tree-width.
\newblock {\em J. Comb. Theory, Ser. {B}}, 36(1):49--64, 1984.
\newblock \href {http://dx.doi.org/10.1016/0095-8956(84)90013-3}
  {\path{doi:10.1016/0095-8956(84)90013-3}}.

\bibitem{DBLP:conf/stoc/Schaefer78}
Thomas~J. Schaefer.
\newblock The complexity of satisfiability problems.
\newblock In Richard~J. Lipton, Walter~A. Burkhard, Walter~J. Savitch, Emily~P.
  Friedman, and Alfred~V. Aho, editors, {\em Proceedings of the 10th Annual
  {ACM} Symposium on Theory of Computing, May 1-3, 1978, San Diego, California,
  {USA}}, pages 216--226. {ACM}, 1978.
\newblock \href {http://dx.doi.org/10.1145/800133.804350}
  {\path{doi:10.1145/800133.804350}}.

\bibitem{DBLP:conf/dagstuhl/SchnoorS08}
Henning Schnoor and Ilka Schnoor.
\newblock Partial polymorphisms and constraint satisfaction problems.
\newblock In {\em Complexity of Constraints - An Overview of Current Research
  Themes [Result of a Dagstuhl Seminar]}, pages 229--254, 2008.
\newblock \href {http://dx.doi.org/10.1007/978-3-540-92800-3\_9}
  {\path{doi:10.1007/978-3-540-92800-3\_9}}.

\bibitem{DBLP:books/daglib/0086373}
Michael Sipser.
\newblock {\em Introduction to the theory of computation}.
\newblock {PWS} Publishing Company, 1997.

\end{thebibliography}
\end{document}